\documentclass{article} % For LaTeX2e
\usepackage{iclr2027_conference,times}
\iclrfinalcopy
\usepackage{amsmath,amsfonts,bm}

\def\eqref#1{equation~\ref{#1}}
\def\1{\bm{1}}

\DeclareMathAlphabet{\mathsfit}{\encodingdefault}{\sfdefault}{m}{sl}
\SetMathAlphabet{\mathsfit}{bold}{\encodingdefault}{\sfdefault}{bx}{n}

\newcommand{\R}{\mathbb{R}}

\usepackage{thm-restate}
\usepackage{hyperref}
\usepackage{url}
\usepackage{algorithm}
\usepackage{algpseudocode}
\usepackage{amsthm}
\usepackage{graphicx}
\usepackage{wrapfig}
\usepackage{booktabs}
\usepackage{comment}
\theoremstyle{plain}
\newtheorem{theorem}{Theorem}[section]
\newtheorem{proposition}[theorem]{Proposition}
\newtheorem{lemma}[theorem]{Lemma}

\theoremstyle{definition}

\theoremstyle{remark}
\newtheorem{remark}[theorem]{Remark}
\usepackage{tabularx}
\usepackage{mathtools}
\usepackage{caption}
\mathtoolsset{showonlyrefs}
\usepackage[table]{xcolor}

\usepackage{fontawesome5}
\usepackage{xcolor}
\usepackage{hyperref}

\definecolor{linkblue}{RGB}{0,0,128}
\newcommand{\iconlink}[3]{%
  \href{#1}{\textcolor{linkblue}{\textbf{#2\,#3}}}}

\newcommand{\PPd}{\mathcal{P}_2(\mathcal{P}_2(\mathbb{R}^d))}

\newcommand{\PV}{\mathcal{P}(\mathcal{V}^{\le L})}
\newcommand{\Pd}{\mathcal{P}(\mathbb{R}^d)}
\newcommand{\Po}{\mathcal{P}(\mathbb{R})}

\renewcommand{\d}{\mathop{}\!\mathrm{d}}

\title{SOL: Measuring Gaps between Text Distributions
by  Double Sliced Wasserstein Metrics}

\author{%
Gregor Kornhardt$^{1}$\thanks{Corresponding author.}
\quad Moritz Piening$^{2}$
\quad Jannis Chemseddine$^{1}$
\quad Gabriele Steidl$^{1}$\\[1ex]
$^{1}$Institute of Mathematics, TU Berlin, Berlin, Germany\\
$^{2}$Department of Computer Science, University of Göttingen,
Göttingen, Germany\\[1ex]
\texttt{\{kornhardt,steidl\}@math.tu-berlin.de}
}
\begin{document}

\maketitle
\lhead{Preprint} 
\vspace{-1cm}
\begin{center}
  %\iconlink{https://huggingface.co/your-models}{\faBrain}{Models}\hspace{2.5em}
  \iconlink{https://github.com/GregorKornhardt/SOL}{\faGithub}{Code}\hspace{2.5em}
  \iconlink{https://gregorkornhardt.github.io/SOL/}{\faGlobe}{Website}
\end{center}

\begin{abstract}
Evaluating text generation requires measuring how well the generated distribution matches the data distribution.
For autoregressive models, this is done by the perplexity.
Diffusion and flow-based language models can only provide a likelihood bound, whose tightness differs between model families.
Sample-based substitutes such as generative perplexity with entropy 
do not consider the distribution fit.
We propose SOL, 
 a distance between text distributions.
Each sequence is represented by the empirical measure of its hidden states under a fixed transformer and the distributions of these measures are compared by the double sliced Wasserstein distance. We prove that SOL is a metric if the transformer is injective.
Experiments show that SOL detects distributional failures, recovers expected model trends, and provides stable sample-based estimates.
We put forward SOL to fill the gap in the current evaluation protocol used for non-autoregressive models. As a first step we use SOL to re-evaluate a variety of models trained on OpenWebText. 
\end{abstract}

\section{Introduction}

Large language models based on autoregressive decoding are the prevailing
approach to text generation. Recently, diffusion and flow-based models, which
generate the whole sequence iteratively by reversing a noising process, have
become a serious alternative. Discrete diffusion models corrupt tokens by masking
or uniform replacement
\citep{austin2021structured,lou2024discretediffusionmodelingestimating,
sahoo2024simple,sahoo2025the}, while continuous models embed tokens into a
continuous space and define the noise process there
\citep{li2022diffusionlm,dieleman2022continuous,gulrajani2023likelihood,
hu2026elfembeddedlanguageflows,chen2026langflowcontinuousdiffusionrivals,CKS2026}.
This raises the question of how well the generated distribution
$p_{\mathrm{gen}}$ matches the data distribution $p_{\mathrm{data}}$.
 
For autoregressive models, the standard answer is the perplexity of held-out data, which is computed from the exact likelihood $p_\theta$.
For diffusion models, the likelihood is intractable.
Instead, an  upper bound of the perplexity is reported, based on an evidence lower bound (ELBO).
This causes two problems.
First, an ELBO is often not available.
For continuous models, the bound holds only in expectation over the noise, requires the change-of-variables formula along the flow and has only been derived for selected models \citep{gulrajani2023likelihood,chen2026langflowcontinuousdiffusionrivals}.
Moreover, most recent models use self-conditioning, where the network receives its own previous prediction as input.
This input arises along the sampling trajectory and cannot be recovered for a given data sequence.
Second, even when an ELBO exists, it is not comparable across model families.
\citet{sahoo2026scaling} observe that perplexity ranks models correctly within one family of diffusion models but not between families.
The reason is that the gap between ELBO and likelihood is a property of the noise process and its parameterization, not of the model alone.
For masked diffusion, the bound is tight \citep{sahoo2024simple,shi2024simplified}, whereas for uniform diffusion it is looser and not even optimized by the training objective \citep{gourevitch2026uniform}.
A better perplexity may thus reflect a tighter bound rather than a better model.
 
Since every model can be sampled, the generative perplexity (Gen.\ PPL) together with the token entropy has become the common substitute
\citep{zheng2025masked,franca2026hacking,hu2026elfembeddedlanguageflows,chen2026langflowcontinuousdiffusionrivals,pynadath2026generativefrontiersevaluationmatters}.
Here, Gen.\ PPL is the perplexity of the generated texts under a fixed reference model such as GPT-2 Large,
and the entropy is computed from token frequencies within sequences.
Since both are averaged over sequences, they do not reward variety in generated outputs, see Figure~\ref{fig:mode-collapse} for an example of mode collapse.
Moreover, Gen.\ PPL measures agreement with the reference model and not with $p_{\text{data}}$. Hence, it rewards imitation of the reference model instead of actual data.
In short, neither quantity measures the distance between $p_{\text{gen}}$ and $p_{\text{data}}$.
 
For image generation, the dominant evaluation method is FID \citep{heusel2017gans}.
It maps samples of both distributions into the feature space of a fixed network and compares the two feature distributions there.
We therefore seek an analogous score for text distributions that
\begin{enumerate}
\vspace{-3mm}
\setlength{\itemsep}{0pt}
\item[(1)] requires only samples from $p_{\text{gen}}$ and $p_{\text{data}}$,
\item[(2)] is a distance, i.e., it is zero if and only if the two distributions agree,
\item[(3)] reflects both the quality of the generated texts and their coverage of $p_{\text{data}}$,
\item[(4)] has an estimator which is stable in the number of samples, and
\item[(5)] is cheap to compute.
\vspace{-1.5mm}
\end{enumerate}
The most popular alternative to Gen.\ PPL is MAUVE
\citep{pillutla2021MAUVE,pillutla2023MAUVE}. It embeds each text by a single
latent representation, quantizes both samples by $K$-means clustering, and
compares the resulting histograms through a KL-divergence frontier. However,
MAUVE is not a distance, its value depends on the number of samples and
clusters, and its behavior depends strongly on the chosen representation. The
standard last hidden state of a causal transformer is insensitive to changes
near the beginning of a sequence \citep{he-etal-2023-blind}, while mean pooling
can map distinct token clouds to the same representation
\citep{hara2026mean}.

Instead, we represent each text by the empirical distribution of all its
contextual hidden states. A text distribution therefore becomes a distribution
over token clouds. Comparing these distributions naturally leads to a
Wasserstein-on-Wasserstein problem. Since solving this nested transport problem
directly is expensive, we use the double-sliced Wasserstein distance
\citep{piening2026slicing}. It first projects the hidden states onto
one-dimensional directions, represents the resulting measures by their quantile
functions, and then slices these functions again in $L_2([0,1])$. This
sorting-based construction gives \emph{sliced optimal transport for language}
(SOL), illustrated in Figure~\ref{fig:dsw4t-viz}.

\textbf{Contributions.}

\begin{enumerate}
\setlength{\itemsep}{1pt}

\item We propose SOL, a sample-based distance between text distributions, by representing each text as an empirical distribution of contextual token states and comparing the resulting distributions with the double sliced Wasserstein distance. Under an injectivity assumption, SOL is a metric on empirical text distributions (Proposition~\ref{prop}).

\item We establish the empirical behavior of SOL through controlled structural
perturbations, mode collapse, sampling changes, and human judgments. SOL detects
failures missed by Gen.\ PPL and token entropy, has a stable finite-sample
estimator, and correlates better with the evaluated human rankings than MAUVE
(Table~\ref{tab:encoder_human_corr_1024}).

\item  We use SOL to evaluate a broad range of autoregressive, discrete and continuous diffusion language models on OpenWebText (Table~\ref{tab:experiment-8d-owt-baselines}).

\end{enumerate}

A Python implementation of SOL is provided at \href{https://github.com/GregorKornhardt/SOL}{GitHub}.

\begin{figure}[t]
  \centering
  \includegraphics[width=\linewidth]{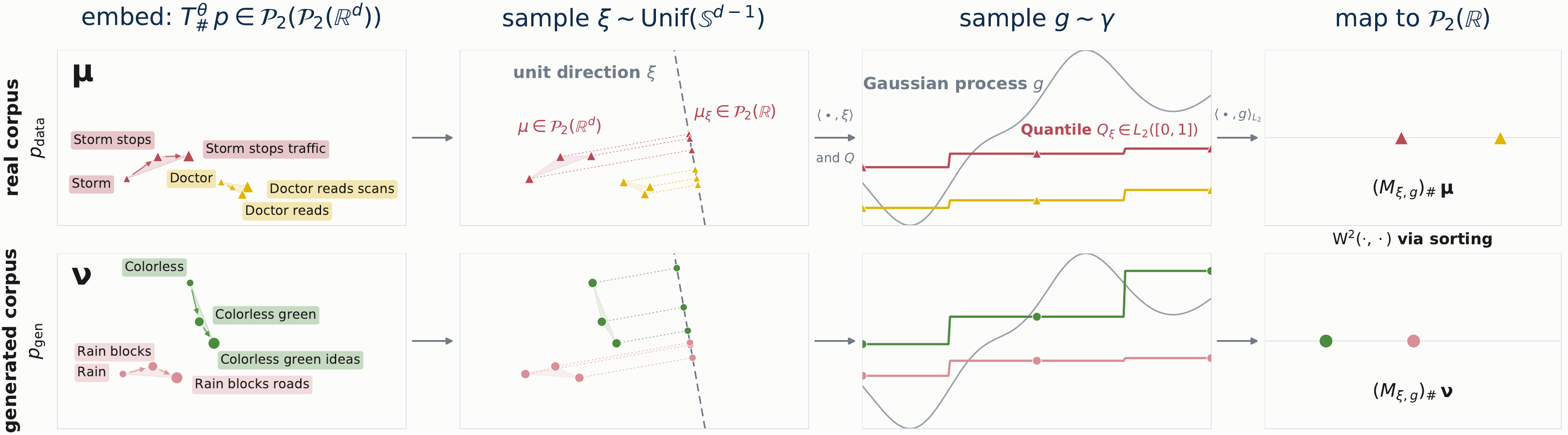
  }
\caption{\small{
Illustration of one SOL slice for fixed directions $\xi$ and $g$ using a causal
transformer. A sequence is represented by the point cloud
$T^\theta(\mathbf x)\in\mathcal P_2(\mathbb R^d)$ of its contextual hidden
states, and a corpus becomes a collection of such clouds. The first slice
projects each cloud along $\xi$; its one-dimensional empirical distribution is
then represented by a quantile function and sliced again along $g$. This maps
the corpus to $\mathcal P_2(\mathbb R)$, where the final Wasserstein distance is
computed. SOL integrates this quantity over $\xi$ and $g$.
}}
\vspace{-1.8em}
  \label{fig:dsw4t-viz}
\end{figure}

\section{Preliminaries}
Let  $\mathcal{V}$ be a finite vocabulary of tokens, and let $\mathcal V^{\le L}:= \bigcup_{l=1}^L \mathcal V^l$ be the set of  token sequences of length $\le L$.
By $\PV$, we denote the set of probability measures 
on $\mathcal V^{\le L}$.
Given a sequence of tokens $\mathbf x = (x_1,\ldots,x_{|\mathbf x|})$
of length $|\mathbf x|$, we write
$\mathbf x_{\le l} = (x_1,\ldots,x_l)$ for its partial sequence consisting of the first $l \le |\mathbf x|$ tokens.
Since all evaluation methods between text distributions in this paper
use transformers in a certain way, we give a brief description for fixing the notation.

\textbf{Causal transformers.}
A \textit{causal decoder-only transformer} $\mathbf H^\theta$ computes a hidden state at each position, which depends only on the preceding states, resulting in
\begin{align} \label{transf}
    \mathbf H^\theta: \mathcal{V}^{\leq L} \rightarrow \mathbb R^{d\times \leq L}, \quad  \mathbf x \mapsto (h^\theta_{   \mathbf x\le 1}, ... ,h^\theta_{\mathbf x\le L}).
\end{align}
From the output layer, we get the probabilities of the next token recursively by
$p_\theta(x_l ~\mid ~\mathbf x_{\le l -1}) 
    ~\coloneqq~
    \operatorname{softmax}(W_\theta  h^\theta_{\mathbf x \le l-1} + b_\theta)$ .
A transformer is used as an autoregressive generative model by factorizing the probability of a sequence $\mathbf x =(x_1,\ldots,x_L)$ as 
\begin{equation} \label{autoreg}
p_\theta(\mathbf{x}) = \prod_{l=1}^L p_\theta(x_l \mid \mathbf x_{\le l-1}),
\end{equation}
and
generating a new sequence is done by iteratively evaluating the model $p_\theta$ and sampling from it. 

\textbf{Bidirectional transformers.}
Alternatively, a \textit{bidirectional transformer} $\mathbf H^{\theta,\mathrm{bi}}$ computes hidden states at each position using the complete input sequence. In particular, for masked diffusion models, the input is a corrupted sequence $\mathbf x^t$ at diffusion time $t$ without time conditioning, resulting in
\begin{align}
    \mathbf H^{\theta,\mathrm{bi}}:
    \overline{\mathcal V}^L
    \rightarrow \mathbb R^{d\times L},
    \quad
    \mathbf x^t
    \mapsto
    \bigl(
    h^{\theta,\mathrm{bi}}_{\mathbf x^t,1},
    \ldots,
    h^{\theta,\mathrm{bi}}_{\mathbf x^t,L}
    \bigr),
\end{align}
where $\mathcal V \subset \overline{\mathcal V} $ may contain additional corruption tokens such as \texttt{[MASK]}. 

%%%%%%%%%%%%%%%%%%%%%%%%%%%%%%%%%%%%%%%%%%%%%%%%%%%%%%%%%%%%%%%%%%%%%%%%%%%%%%%%%%%%%%%%%%%%%%%%%%%%%%%%%%%%%%%%%%

\section{Current Evaluation Methods for Language Models}

We want to measure how well a generative text distribution $p_{\text{gen}}\in \PV$ matches the data distribution $p_{\text{data}} \in \PV$.
We consider two common approaches to quantitative text-generation evaluation: perplexity-based scores and latent token comparisons in the Euclidean space.

 \subsection{Perplexity-Based Methods}\label{subsec:Perplexity Based Scores}

\textbf{Perplexity.}
For an autoregressive \emph{learned} model $p_\theta$ in \eqref{autoreg}, 
the \textit{perplexity}  
\begin{align*}
    \operatorname{PPL}(p_\theta, p_{\mathrm{data}})
    = \exp \Big(-
    \mathbb E_{\mathbf x\sim p_{\mathrm{data}}}
    \Big[
    \log p_\theta(\mathbf x) 
    \Big] / \mathbb E_{\mathbf x\sim p_{\mathrm{data}}}[|\mathbf x|]
    \Big)
\end{align*}
is the exponential of the token-length normalized cross-entropy between the data distribution and the model distribution.
It measures how well the model assigns probability to samples from the data distribution $p_{\mathrm{data}}$.
Lower perplexity indicates that the model assigns higher likelihood to the observed data.

\textbf{Evidence Lower Bound Perplexity.}
For many diffusion and latent-variable models, computing the exact log-likelihood $\log p_\theta$ is intractable. Instead, we can evaluate - if available - an \emph{evidence lower bound} (ELBO) on the log-likelihood,
$
    \log p_\theta(\mathbf x)
    \geq \operatorname{ELBO}_\theta(\mathbf x)$.
This gives rise to the ELBO\emph{-based perplexity} $$\operatorname{PPL}_{\mathrm{ELBO}}(p_\theta, p_{\mathrm{data}})
    = \exp\Big(-
    \mathbb E_{\mathbf x\sim p_{\mathrm{data}}}
    \Big[
    \operatorname{ELBO}_\theta(\mathbf x)
    \Big] / \mathbb E_{\mathbf x\sim p_{\mathrm{data}}}[|\mathbf x|]
    \Big),
    $$
which upper-bounds the perplexity obtained from the exact model likelihood. However, comparisons based on the ELBO should be interpreted with care, since its value depends not only on the model likelihood, but also on the tightness of the variational bound. As an example, the ELBOs of masked and uniform diffusion language models seem to differ \cite{sahoo2026scaling}.
Moreover, for many generative language models 
no valid ELBOs are available. 

\textbf{Generative Perplexity.}
An alternative is therefore to evaluate generated samples under a \emph{fixed} reference model $p_\theta$.
Specifically, we choose a \emph{pretrained} autoregressive causal transformer $p_\theta$ as the reference model and define the \textit{generative perplexity} (Gen.\ PPL) as
\begin{align*}
    \operatorname{GenPPL}(p_{\mathrm{gen}},p_\theta)
    :=
    \exp\Big(-\frac{1}{\mathbb E_{\mathbf x\sim p_{\mathrm{gen}}}[|\mathbf x|]}
    \mathbb{E}_{\mathbf{x}\sim p_{\mathrm{gen}}}
    \Big[
    \sum_{l=1}^{|\mathbf{x}|}
    \log p_\theta(x_l\mid \mathbf{x}_{ \le l-1})
    \Big]
    \Big).
\end{align*}
Ignoring the sequence-length normalization, the corresponding cross-entropy decomposes as 
$\mathbb{E}_{\mathbf{x}\sim p_{\mathrm{gen}}}
    \left[-\log p_\theta(\mathbf{x})\right]
    =
    \text{H}(p_{\mathrm{gen}})
    +
    \operatorname{KL}(p_{\mathrm{gen}}\|p_\theta)
    $
with the  \emph{Kullback-Leibler divergence} KL    and the \emph{distribution entropy} H. 
Thus a generator can lower its Gen.\ PPL by lowering its entropy instead of moving closer to the reference model.
\begin{wraptable}{r}{0.28\textwidth}
\vspace{-0.0em}
\small
\centering
\footnotesize
\setlength{\tabcolsep}{6pt}
\begin{tabular}{l|c}
\toprule
GPT-2 & Gen.\ PPL ($\downarrow$) \\
\midrule
Small  & 12.5 \\
Medium & 11.4 \\
Large  & 7.5 \\
XL     & 8.5 \\
\bottomrule
\end{tabular}
\caption{Gen. PPL across GPT-2 sizes for unconditional generation prefers GPT-2 Large.}
\label{tab:gpt2_ppl}
\vspace{-0.8em}
\end{wraptable}
For this reason, Gen.\ PPL is usually reported together with the \emph{token entropy}
$
    H_{\text{tok}}(p_{\text{gen}}) := \mathbb E_{\mathbf x \sim p_{\text{gen}}} \big[ H(\hat p_{\mathbf x}) \big],
$
where $\hat p_{\mathbf x}$ denotes the empirical distribution of the tokens in $\mathbf x$ \citep{hu2026elfembeddedlanguageflows,chen2026langflowcontinuousdiffusionrivals,CKS2026, pynadath2026generativefrontiersevaluationmatters}.
Both quantities are expectations of a score of a single sequence, the log-likelihood under the reference model and the entropy within the sequence.
Neither of them looks at the distribution of the sequences, and $p_{\text{data}}$ does not enter at all.
In Figure~\ref{fig:mode-collapse}, we replace a growing fraction of a text sample by copies of a single sequence, the token entropy remains unchanged and Gen.\ PPL even improves.
Table~\ref{tab:gpt2_ppl} shows that Gen.\ PPL also favors the reference model itself, as GPT-2 Large scores better than the more powerful GPT-2 XL under a GPT-2 Large reference.
 
%-------------------------------------------------------------------------------
\subsection{Euclidean Latent Space Methods}
Standard methods for comparing language distributions, such as MAUVE \citep{pillutla2021MAUVE} and BERT-FID \citep{alihosseini2019jointly}, rely on a single latent token representation obtained from the final hidden layer of a transformer. Alternatively, the hidden states across positions can be aggregated by mean pooling. We briefly introduce both representations before turning to MAUVE.
\subsubsection{Text Representation in the Euclidean Space}

 \textbf{Last hidden-token representation.}
The  \emph{last hidden-token map} of a causal transformer in \eqref{transf} reads as
\begin{equation}\label{last_hidden}
\mathbf h^\theta:\mathcal V^{\leq L}\to\mathbb R^d, \quad \mathbf{x} \mapsto h^\theta_{ \mathbf x \le |\mathbf x| },
\end{equation}
where, in contrast to the whole sequence generator $\mathbf H^\theta$, we use $\mathbf h^\theta$ for the last-token representation.
It allows  to push the text distribution $p_{\text{data}} \in \mathcal{P}(\mathcal V^{ \le L})$ to a distribution in Euclidean space 
$\mathbf h^{\theta}_\# p_{\text{data}} \in \Pd.$
Under the assumption that language models are invertible, which is validated in the next remark, this distribution uniquely characterizes the language distribution.

\begin{remark}[Language models are almost surely invertible]\label{rem:inj}
In \citep{nikolaou2026language}, it was proved that the last-token map is almost surely invertible. Empirically, the authors found no collisions across several autoregressive language models and large prompt collections. They further showed that, under certain assumptions, causal transformers are injective after gradient-based training.
\end{remark}

\textbf{Mean-pooling.} In contrast to the above representation, 
\emph{mean-pooling} takes into account the tokens at all positions: 

\begin{align}\label{mean_pool}
     \overline{\mathbf H}^\theta: V^{\leq L}\rightarrow \mathbb{R}^d, \quad \mathbf  x\mapsto \frac{1}{|\mathbf x|}\sum_{l=1}^{|\mathbf x|} h^\theta_{\mathbf x \le l}.
\end{align}

The associated distribution $ \overline{\mathbf H}^{\theta}_\# p_{\text{data}} \in \mathcal P (\mathbb R^d)$ allows for standard methods such as MAUVE or FBD (BERT-FID). Mean pooling retains only the first moment of the token-cloud distribution and discards higher-order structure, so token clouds with the same mean but different spread or shape become indistinguishable, see \cite{hara2026mean} and Appendix~\ref{sec:dsw_mean_sw}. Moreover, MAUVE on mean-pooled features saturates under token pollution: it is already close to zero at 10\% \texttt{[CLS]} insertion and no longer distinguishes moderate from severe corruption, see Appendix~\ref{subsec:MAUVE with Mean Hidden Representation}.

 \subsubsection{MAUVE}
 \begin{wrapfigure}[13]{r}{0.6\linewidth}
 \vspace{-1cm}
    \centering
    \includegraphics[width=\linewidth]{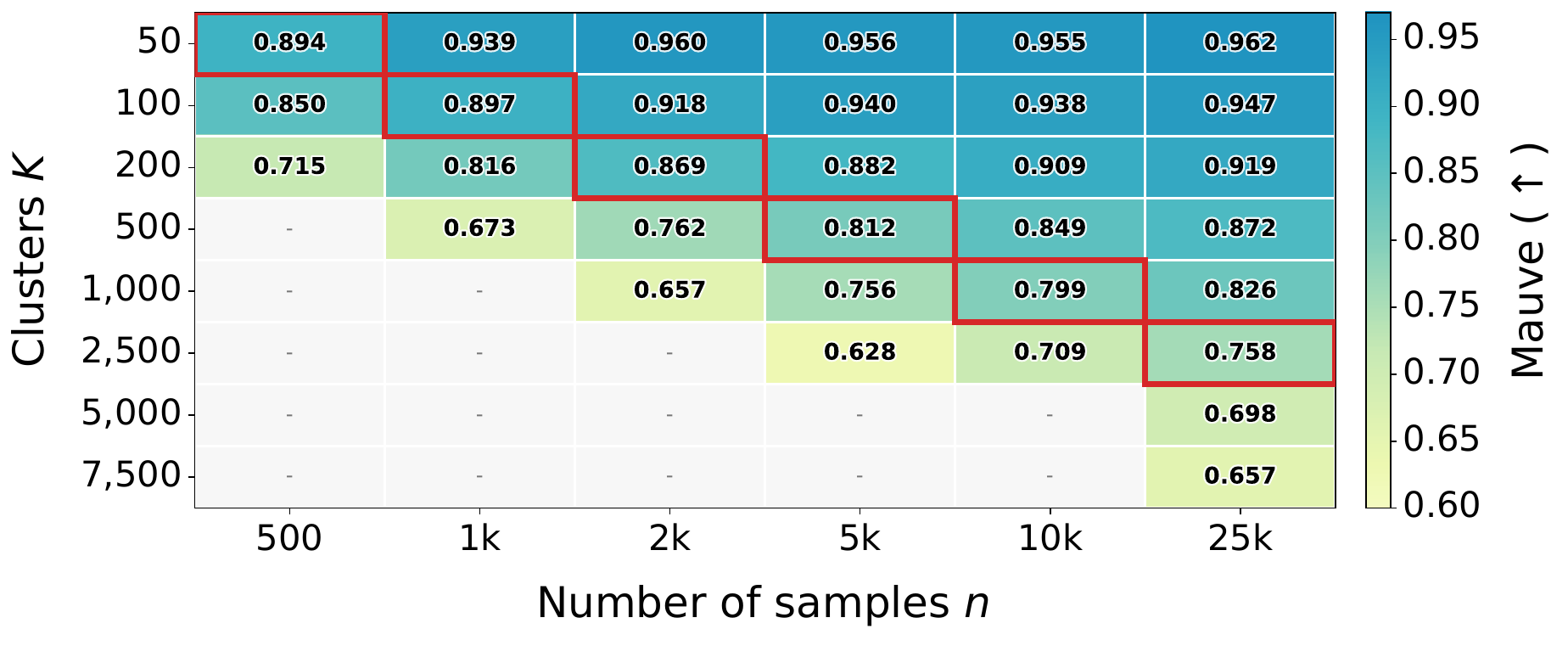}
    \vspace{-8mm}
    \caption{\small{For a fixed diffusion language model and dataset, the reported MAUVE score depends on the number of evaluation samples $n$ and clusters $K$. In particular, for the standard choice $K=n/10$ (red boxes), even after 5k samples it is not stable. Generation details are given in Appendix~\ref{sec:LM1B MAUVE Experiment}.}}
    \label{fig:MAUVE scaling}
    \vspace{-2mm}
\end{wrapfigure}
 MAUVE \citep{pillutla2021MAUVE,pillutla2023MAUVE} is a sample-based score for comparing generated and reference text distributions. It embeds each text by a single latent representation, jointly quantizes both samples using $k$-means, and represents them by the resulting empirical cluster frequencies. MAUVE then summarizes the trade-off between
$\operatorname{KL}(\tilde p_{\mathrm{gen}}\|\tilde r_\lambda)$ and
$\operatorname{KL}(\tilde p_{\mathrm{data}}\|\tilde r_\lambda)$,
where
$\tilde r_\lambda=\lambda\tilde p_{\mathrm{data}}+
(1-\lambda)\tilde p_{\mathrm{gen}}$,
by the area under their so-called divergence frontier, see Appendix~\ref{app:MAUVE}
for a detailed algorithmic description.

\begin{wrapfigure}[9]{r}{0.30\textwidth}
    \vspace{-3em}
    \centering
    \includegraphics[width=\linewidth]{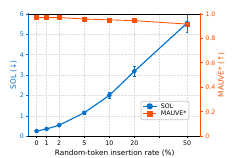}
          \vspace{-5mm}
    \caption{\small{Replacing random tokens in the sequence beginning is not detected by MAUVE, but by SOL.}}
    \label{fig:MAUVE-random-token-begining}
\end{wrapfigure} 

The method has two drawbacks relevant to our goal. First, MAUVE is sensitive to the finite-sample quantization and can be unstable with respect to both the number of samples and clusters, see Figure~\ref{fig:MAUVE scaling}. Second, its behavior depends strongly on the chosen text representation. 
In particular, MAUVE based on the last hidden state of a causal transformer is insensitive to perturbations at the beginning or middle of a sequence \citep{he-etal-2023-blind}, which we reproduce in Figure~\ref{fig:MAUVE-random-token-begining}. Mean pooling alleviates this particular problem, but the resulting score saturates at low corruption levels, see Appendix~\ref{subsec:MAUVE with Mean Hidden Representation}.
\medskip

\section{Evaluation of Language Models by  Wasserstein Distances}
We represent each sequence by the empirical measure of its hidden states. This
turns a text distribution into a distribution over measures in $\mathbb R^d$.
Direct optimal transport between these distributions leads to an expensive
Wasserstein-on-Wasserstein problem \citep{piening2026slicing}. We therefore
introduce the sliced Wasserstein construction underlying double slicing and
then define SOL in Section~\ref{subsec: our method}, where we show that it
satisfies the requirements (1)--(5) from the introduction.

\subsection{Lifting a Sequence to a Probability Measure}
\label{subsec:lifting_token_measures}
A text distribution $p_{\mathrm{data}}\in\PV$ is a distribution over token
sequences. The canonical lift
$\mathbf x\mapsto\delta_{\mathbf x}$ would represent each sequence as a Dirac
measure, but $\mathcal V^{\le L}$ has no natural semantic ground metric.
Instead, we propose to use the learned causal transformer $\mathbf H^\theta$, and introduce the associated \emph{token-cloud distribution} by 
\begin{align}\label{eq: PC embedding}
    T^\theta :  \mathcal{V}^{\le L} \rightarrow \mathcal{P}_2(\mathbb R^{d}), \qquad \mathbf x \mapsto \frac{1}{|\mathbf x|} \sum_{l=1}^{|\mathbf x|} \delta_{h^\theta_{\mathbf x \le l}}.
\end{align}
This allows us to push forward
$p_{\mathrm{data}}\in\mathcal P(\mathcal V^{\le L})$
to $T^\theta_\#p_{\mathrm{data}}\in\PPd$. This representation has two advantages. First, it compares sequences $\mathbf x_1$ and $\mathbf x_2$ of different lengths, $|\mathbf x_1|\neq|\mathbf x_2|$. Second, it allows reordered sentences such as `\textit{cat on a mat}' and `\textit{mat under a cat}' to remain close without imposing an artificial ordering. As shown in Section~\ref{Sec:proofs}, the map remains injective due to the injectivity of the last hidden-state representation, where token representations encode their context. For a bidirectional encoder, we replace $h^\theta_{\mathbf x\leq l}$ by $h^{\theta,\mathrm{bi}}_{\mathbf x,l}$.

\subsection{Double-Sliced Wasserstein Distance}
We write $\mathcal P(X)$ for the Borel probability measures on a metric space
$(X,\mathrm{dist})$ and $\mathcal P_2(X)$ for those with finite second moment.
Below, $X$ is either $\mathbb R^d$ or the Hilbert space  of (equivalence classes of) quadratic integrable functions  $L_2([0,1])$.

\textbf{Wasserstein distance.}
The \textit{Wasserstein-2 distance} $\text{W}$ on $\mathcal P_2(X)$ is defined by 
\begin{align*}
    \text{W}^2(\mu, \nu) := \inf _{\pi \in \Pi(\mu,\nu)} \int_{X\times X } \text{dist}^2 (x,y) \d \pi (x,y), \quad \forall \mu,\nu \in \mathcal P_2(X)
\end{align*}
where $\Pi(\mu,\nu) \subset \mathcal{P}(X \times X)$ denotes set of  measure with marginals $\mu$ and $\nu$. For measures on the real line $\mu \in \mathcal  P_2(\R)$, 
using the isometric embedding 
\begin{equation} \label{eq:quqntile}
Q: \mathcal P_2(\R) \to  L_2([0,1]), \qquad  \mu \mapsto Q_\mu,
\end{equation}
which maps a measure to its quantile function, the Wasserstein distance becomes
\begin{align}\label{eq:1D WD}
   \text{W}^2(\mu,\nu  ) = \int_0^1 |Q_\mu(s) - Q_\nu(s) |^2 \d s, \qquad \mu,\nu \in \mathcal P_2(\R).
\end{align}
In one dimension, for empirical measures $\mu =  \frac1n \sum_{i=1}^n \delta_{x_i}$, $x_i \in \mathbb R$,
the computation of $\text{W}$ reduces to the sorting of real numbers, which can be done in $O(n \log n)$ operations.

\textbf{Sliced Wasserstein distance.}
The idea behind  sliced Wasserstein distances arises from the simple access to the problem in 1D.
It is based on a  \emph{slicing operator} and its push-forward
\begin{align}    \label{eq:projection}
    S_\xi
    &\colon
    \R^d \to \R
    ,\quad
    x \mapsto \langle \xi, x \rangle, \qquad
   \mathbf{e}_{\xi} 
    \colon
    \mathcal P_2(\R^d) \to \mathcal P_2(\R)
    ,\qquad
    \mu \mapsto S_{\xi,\#} \, \mu,
\end{align}
where 
$\xi \in
    \mathbb S^{d{-}1} := \{x \in \R^d \mid \lVert x \rVert = 1\}$.
The \emph{sliced Wasserstein-2 distance} reads as
\begin{equation}   
    \label{eq:sliced_wasserstein}
    \text{SW}^2(\mu,\nu) 
    :=
    \int_{\mathbb S^{d{-}1}}
    \text{W}^2(\mathbf{e}_{\xi} \, \mu,\mathbf{e}_{\xi} \, \nu) 
    \d \mathcal U(\mathbb S^{d-1})(\xi)  , 
\end{equation}
where we integrate with respect to the uniform distribution $\mathcal U(\mathbb S^{d-1})$ on $\mathbb S^{d-1}$ and the Wasserstein distance on $\R$ is given by \eqref{eq:1D WD}.
Similar to the Wasserstein distance,
$\text{SW}$ metrizes the weak convergence, see 
\citep{bonnotte2013unidimensional, nadjahi2020statistical}.
The spherical integral is usually approximated
by Monte Carlo schemes, see \citep{bonneel2015slicedbarycenters, nguyen2024quasi, hertrich2025qmc_slicing}.

\textbf{Double-Sliced Wasserstein distance.}
Having lifted the data distribution from $p_{\text{data}}\in \PV$ to $T^{\theta}_\# p_{\text{data}} \in \PPd$, we follow \citep{piening2026slicing} to construct the double-sliced Wasserstein distance for meta-measures $\bm{\mu} \in \PPd$.
Since $\mathcal P_2(\R^d)$ is not a Euclidean space, but, together with the Wasserstein distance $\text{W}$,  an infinite-dimensional metric space, we have to define an appropriate slicing procedure. 

\emph{First}, using the slicing operation $S_\xi$ from \eqref{eq:projection}, 
we  map measures in $\Pd$ to measures in $\Po$ using  its push-forward operator 
$\mathbf{e}_{\xi}: \mathcal P_2(\R^d) \to \mathcal P_2(\R)$.

\emph{Second}, we  apply the isometry $Q: \mathcal P_2(\R) \to L_2([0,1])$ in \eqref{eq:quqntile} to map
$$
Q \circ \mathbf{e}_{\xi}: \mathcal P_2(\R^d) \to L_2([0,1]), \quad 
(Q \circ \mathbf{e}_{\xi})_\#: \mathcal P_2(\mathcal P_2(\R^d)) \to \mathcal P_2( L_2([0,1])).
$$

\emph{Third}, we define a second slicing on  $\mathcal P_2 (L_2[0,1])$. 
Since this is a space of probability measures on the infinite dimensional Hilbert space $L_2([0,1])$, such slicing can no longer rely on the uniform distributions on $\mathbb S^{d-1}$.
Instead, we slice functions on $L_2([0,1])$ with functions
$g \in L_2([0,1])$ by
\begin{align}
\mathcal S_g&: L_2([0,1]) \to \R, \qquad f \mapsto \langle f,g \rangle_{L_2},\\
\mathcal S_{g,\#}&: \mathcal P_2(L_2([0,1])) \to \mathcal P_2(\R), 
\qquad \eta \mapsto \mathcal S_{g,\#} \eta.
\end{align}
Then, instead of the uniform measure on the sphere, we choose a Gaussian reference measure $\gamma \in \mathcal P_2 (L_2([0,1]))$, see \cite{bogachev1998gaussian},  and define the
\textit{sliced  Wasserstein distance} on $\mathcal P_2(L_2([0,1]))$
by 

\begin{align*}
        \text{SW}^2 (\eta, \zeta \mid \gamma ) 
        := \int_{L_2} \text{W}^2 ( \mathcal S_{g,\#} \eta, \mathcal S_{g,\#} \zeta) \d \gamma(g).
    \end{align*}
To realize this sliced distance, i.e., the integral, we use the fact that there exists 
a one-to-one correspondence between Gaussian measures on $L_2([0,1])$ and Gaussian processes, see \cite[Thm 2]{rajput1972gaussian}. For details see \citep{piening2026slicing}.

\emph{Fourth}, to combine the three steps, we introduce, 
for \(\xi\in\mathbb S^{d-1}\) and \(g\in L_2([0,1])\),
the operator
\begin{align}
    M_{\xi,g}&:  \mathcal P_2(\R^d) \to \R,
    \qquad
   \mu
    \mapsto
    \left\langle
        Q_{\bf e_\xi  \mu}, g
    \right\rangle_{L_2},
    \\
    (M_{\xi,g})_\#&:  \mathcal P_2(  \mathcal P_2(\R^d) ) \to  \mathcal P_2(\R),
    \qquad
   \bm{\mu}
    \mapsto
    (M_{\xi,g})_\# \bm{\mu}
\end{align}
Now we can define the \textit{double sliced Wasserstein distance} on meta-measures  by
\begin{align}\label{eq:DSW} 
    \text{DSW}^2(\bm\mu,\bm \nu\mid \gamma) 
    :&= \int_{\mathbb S^{d-1}} {\text{SW}}^2 (( Q \circ \mathbf e_{\xi})_\#\bm\mu,  ( Q \circ \mathbf e_{\xi})_\#\bm\nu \mid \gamma ) \d \mathcal U(\mathbb S^{d-1})(\xi)\\
    &= \mathbb E_{\xi \sim \mathcal U(\mathbb S^{d-1}), g \sim \gamma}\left[\text{W}^2((M_{\xi,g})_\#\bm \mu,(M_{\xi,g})_\#\bm \nu) \right].
\end{align}
For empirical measures over empirical measures, $\mathrm{DSW}$ is a metric, see~\cite{piening2026slicing}. 
\begin{proposition}
\label{prop:dsw-sample-complexity}
Let the assumptions of
Propositions~\ref{prop:dsw-projection-approximation} and
\ref{prop:dsw-measure-approximation} hold, and assume in addition that
the samples are independent of the slices. Then
\begin{align}
    \mathbb E\left|
        \mathrm{DSW}_{L_\xi,L_g}(\bm\mu_n,\bm\nu_n\mid\gamma)
        -\mathrm{DSW}(\bm\mu,\bm\nu\mid\gamma)
    \right|
    \leq
    \frac{\sqrt8\,R\sqrt{m_2(\gamma)}}{n^{1/4}}
    +2R\left(
        \frac{m_2(\gamma)^2}{L_\xi}
        +\frac{m_4(\gamma)}{L_g}
    \right)^{1/4}.
    \label{eq:dsw-finite-slice-total}
\end{align}
\end{proposition}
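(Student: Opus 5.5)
Split the error by the triangle inequality into a sampling term and a slicing term, and then bound each with the two earlier propositions. Write $\mathrm{DSW}_{L_\xi,L_g}(\bm\mu_n,\bm\nu_n\mid\gamma)$ for the Monte Carlo estimator with $L_\xi$ directions $\xi_i\sim\mathcal U(\mathbb S^{d-1})$ and $L_g$ functions $g_j\sim\gamma$, evaluated on the empirical meta-measures $\bm\mu_n,\bm\nu_n$. Then
\begin{align}
\bigl|\mathrm{DSW}_{L_\xi,L_g}(\bm\mu_n,\bm\nu_n\mid\gamma)-\mathrm{DSW}(\bm\mu,\bm\nu\mid\gamma)\bigr|
&\le \bigl|\mathrm{DSW}_{L_\xi,L_g}(\bm\mu_n,\bm\nu_n\mid\gamma)-\mathrm{DSW}(\bm\mu_n,\bm\nu_n\mid\gamma)\bigr| \\
&\quad+\bigl|\mathrm{DSW}(\bm\mu_n,\bm\nu_n\mid\gamma)-\mathrm{DSW}(\bm\mu,\bm\nu\mid\gamma)\bigr|.
\end{align}
Taking expectations, I bound the second term by Proposition~\ref{prop:dsw-measure-approximation}, which should give $\sqrt8\,R\sqrt{m_2(\gamma)}\,n^{-1/4}$. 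Its proof presumably uses the reverse triangle inequality for the metric $\mathrm{DSW}$, so the term is at most $\mathrm{DSW}(\bm\mu_n,\bm\mu\mid\gamma)+\mathrm{DSW}(\bm\nu_n,\bm\nu\mid\gamma)$. It then uses the rate $n^{-1/4}$ for the one-dimensional Wasserstein distance of the projected measures $(M_{\xi,g})_\#\bm\mu$, which are supported in $[-R\|g\|,R\|g\|]$ whenever the hidden states lie in a ball of radius $R$. Integrating $\|g\|$ over $\gamma$ produces the factor $\sqrt{m_2(\gamma)}$.

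For the first term I condition on the samples. This is where the independence of samples and slices is used: given $\bm\mu_n,\bm\nu_n$, the slices are still i.i.d.\ from $\mathcal U(\mathbb S^{d-1})\otimes\gamma$, so Proposition~\ref{prop:dsw-projection-approximation} applies conditionally with the fixed pair $(\bm\mu_n,\bm\nu_n)$. The hypotheses (support in the radius-$R$ ball) are inherited by the empirical measures, so the constant is uniform in the samples. I expect that proposition to give a bound of the form
\[
\mathbb E\bigl|\mathrm{DSW}^2_{L_\xi,L_g}-\mathrm{DSW}^2\bigr|\le C\Bigl(\frac{m_2(\gamma)^2}{L_\xi}+\frac{m_4(\gamma)}{L_g}\Bigr)^{1/2}
\]
on squared distances, with $C=4R^2$. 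This comes from a variance computation for the nested average: the integrand $\mathrm W^2((M_{\xi,g})_\#\cdot,\cdot)\le 4R^2\|g\|^2$ has second moment controlled by $m_4(\gamma)$ in $g$, and its $g$-average, viewed as a function of $\xi$, is bounded by $4R^2 m_2(\gamma)$.

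To pass from squared distances to distances I use $|a-b|\le\sqrt{|a^2-b^2|}$ for $a,b\ge0$ and Jensen's inequality. This gives $\mathbb E|a-b|\le(\mathbb E|a^2-b^2|)^{1/2}\le 2R\bigl(m_2(\gamma)^2/L_\xi+m_4(\gamma)/L_g\bigr)^{1/4}$, matching the second summand. Finally I take the outer expectation over the samples and add the two bounds.

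The main obstacle is the bookkeeping of constants. I need to make sure the two earlier propositions are stated in exactly these forms (squared versus unsquared, and which moments of $\gamma$ appear). If Proposition~\ref{prop:dsw-projection-approximation} is already stated for unsquared distances, the square-root step is unnecessary and the bound follows directly. The conditioning argument also needs care: I must check that the bound in Proposition~\ref{prop:dsw-projection-approximation} depends on the measures only through $R$, so that it survives integration over the samples.
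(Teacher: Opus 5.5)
Your proof is correct, but it telescopes through a different intermediate point than the paper does.

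The paper inserts $\mathrm{DSW}_{L_\xi,L_g}(\bm\mu,\bm\nu\mid\gamma)$, i.e.\ finite slices with the population meta-measures.
\begin{itemize}
\item The slicing error is then bounded by Proposition~\ref{prop:dsw-projection-approximation} at the deterministic pair $(\bm\mu,\bm\nu)$.
\item The sampling error is bounded by the fixed-slice form of Proposition~\ref{prop:dsw-measure-approximation}. This leaves a random factor $\bigl(\frac{1}{L_g}\sum_{j}\|g_j\|_{L_2}^2\bigr)^{1/2}$, which is averaged over the slices with Jensen to become $\sqrt{m_2(\gamma)}$.
\end{itemize}

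You insert $\mathrm{DSW}(\bm\mu_n,\bm\nu_n\mid\gamma)$ instead, i.e.\ exact slices with the empirical meta-measures.
\begin{itemize}
\item Your sampling error comes directly from the exact-slice clause of Proposition~\ref{prop:dsw-measure-approximation}, with $\sqrt{m_2(\gamma)}$ already in place.
\item Your slicing error requires applying Proposition~\ref{prop:dsw-projection-approximation} conditionally on the random pair $(\bm\mu_n,\bm\nu_n)$. This is legitimate for the reason you give: the bound depends only on $R$ and the moments of $\gamma$, and the empirical meta-measures are still supported on $\mathcal P_2(X)$.
\end{itemize}

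So your route replaces the Jensen step over the $g_j$ with a conditioning step that is uniform in the measures. Both routes use the independence of samples and slices and one Fubini exchange. In both, that exchange rests on the joint continuity of $(\xi,g,\mu_1,\ldots,\mu_n,\nu_1,\ldots,\nu_n)\mapsto c_{\bm\mu_n,\bm\nu_n}(\xi,g)$, which follows from Lemma~\ref{lem:double-slice-continuity}. You flagged this measurability issue but did not spell it out.

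Your reconstruction of the slice term is exactly how the paper proves Proposition~\ref{prop:dsw-projection-approximation}: the squared Monte Carlo rate $4R^2(\cdot)^{1/2}$, followed by $|a-b|\le|a^2-b^2|^{1/2}$ and Jensen. That proposition is already stated for the unsquared distance, so this step can simply be cited.
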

The proof of the sample complexity is given in Appendix \ref{app:proof-dsw-sample-complexity}.

 \subsection{SOL - Sliced Optimal Transport for Language}\label{subsec: our method}
We propose to apply the \emph{double sliced Wasserstein distance for text comparison} after applying a reference causal transformer $\mathbf H^\theta$
and the resulting token-cloud distributions \eqref{eq: PC embedding} of the data distribution $p_{\text{data}}$ and the generated distribution $p_{\text{gen}}$, resulting in
\begin{align*}
    \text{SOL}_\theta(p_{\text{data}}, p_{\text{gen}}):= \sqrt{d} \text{ DSW}(T^{\theta}_\# p_{\text{data}}, T^{\theta}_\# p_{\text{gen}} \mid \gamma) .
\end{align*} 

{\color{blue}} SOL is scaled by the dimension of the hidden dimension.
Based on the injectivity of the last hidden-token representation in Remark \ref{rem:inj}, we obtain the following proposition, where the proof is given in Appendix~\ref{Sec:proofs}. 

%\begin{proposition}\label{prop}
\begin{restatable}{proposition}{faithfulness}
\label{prop}
% For discrete distributions $p,q \in \mathcal P (\mathcal V^{\le L})$, we have that 
% ${ \rm{SOL}}_\theta(p, q) = 0$ if and only if $p=q$.
Assume that \(\mathbf h^\theta\) is injective and $\gamma \in P_2(L_2(0,1))$ has full support. For empirical distributions $p,q\in\mathcal P(\mathcal V^{\le L})$
with uniform %sample 
weights, we have
$\operatorname{SOL}_\theta(p,q)=0$ if and only if $p=q$.
%\end{proposition}
\end{restatable}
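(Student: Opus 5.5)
The argument splits into two reductions: (i) $\mathrm{DSW}$ is a metric on empirical meta-measures, and (ii) the map $T^\theta$ from sequences to token clouds is injective. Together these give $\operatorname{SOL}_\theta(p,q)=0 \Rightarrow T^\theta_\#p=T^\theta_\#q \Rightarrow p=q$. The direction $p=q\Rightarrow \operatorname{SOL}_\theta(p,q)=0$ is immediate, since then $T^\theta_\#p=T^\theta_\#q$ and every one-dimensional Wasserstein term in \eqref{eq:DSW} vanishes.

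\textbf{Step 1 (DSW vanishes only on equal meta-measures).} For empirical $p=\frac1n\sum_i\delta_{\mathbf x_i}$ and $q=\frac1m\sum_j\delta_{\mathbf y_j}$, the pushforwards $\bm\mu=T^\theta_\#p$ and $\bm\nu=T^\theta_\#q$ are empirical measures over empirical measures. I would invoke the metric property from \cite{piening2026slicing}, which uses that $\gamma$ has full support. If a self-contained argument is wanted, it runs as follows.
\begin{enumerate}
\item The integrand $(\xi,g)\mapsto \mathrm W^2((M_{\xi,g})_\#\bm\mu,(M_{\xi,g})_\#\bm\nu)$ is continuous, because $\bm\mu,\bm\nu$ are finitely supported and $M_{\xi,g}$ is continuous in $(\xi,g)$.
\item Hence $\mathrm{DSW}=0$ forces the integrand to vanish for every $\xi$ and every $g$ in the support of $\gamma$, which is all of $L_2$.
\item For fixed $\xi$, the finitely supported measures $(Q\circ\mathbf e_\xi)_\#\bm\mu$ and $(Q\circ\mathbf e_\xi)_\#\bm\nu$ on $L_2$ then have identical one-dimensional projections in every direction $g$. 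By the Cramér--Wold argument via characteristic functionals, they are equal.
\item Since $Q$ is an isometric embedding, hence injective, $(\mathbf e_\xi)_\#\bm\mu=(\mathbf e_\xi)_\#\bm\nu$ for all $\xi$.
\item To return to $\bm\mu=\bm\nu$, use that the Radon transform (the family of maps $\mathbf e_\xi$ over all $\xi$) is injective on measures in $\mathcal P_2(\R^d)$. The atoms of $\bm\mu$ and $\bm\nu$ are finitely many distinct measures, so for generic $\xi$ (outside a null set) the projections $\mathbf e_\xi$ of these distinct atoms stay distinct. For such $\xi$, equality of the pushed-forward finite meta-measures matches atoms and weights, giving $\bm\mu=\bm\nu$.
\end{enumerate}
This generic-separation claim in item 5 is the main technical point. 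I would prove it by noting that, for each pair of distinct atoms, the set of $\xi$ on which their projections coincide is closed, and by Cramér--Wold it is not all of the sphere. It remains to show this set has measure zero, not merely that it is a proper subset. Since both atoms are finite point clouds, their projected quantile functions agree only on a finite union of algebraic subsets of the sphere, which are proper and hence null. Alternatively one can simply cite \cite{piening2026slicing}.

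\textbf{Step 2 (injectivity of $T^\theta$).} Suppose $T^\theta(\mathbf x)=T^\theta(\mathbf y)$.
\begin{enumerate}
\item The multiset of hidden states $\{h^\theta_{\mathbf x\le l}\}_{l}$ then equals that of $\mathbf y$ up to weight normalization $1/|\mathbf x|$.
\item Each state is $\mathbf h^\theta(\mathbf x_{\le l})$. Since $\mathbf h^\theta$ is injective on $\mathcal V^{\le L}$, these states are pairwise distinct, because the prefixes are distinct. So each cloud is uniform on $|\mathbf x|$ distinct atoms, and equal uniform measures with distinct atoms have the same number of atoms, giving $|\mathbf x|=|\mathbf y|$.
\item The atom sets coincide, so the prefix sets $\{\mathbf x_{\le l}\}$ and $\{\mathbf y_{\le l}\}$ coincide via $(\mathbf h^\theta)^{-1}$.
\item The longest element of each set is the full sequence, hence $\mathbf x=\mathbf y$.
\end{enumerate}

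\textbf{Step 3 (conclusion).} Let $\mathrm{SOL}_\theta(p,q)=0$. By Step 1, $T^\theta_\#p=T^\theta_\#q$. By Step 2, $T^\theta$ is a bijection onto its image, so pulling back gives $p=q$: the empirical weights at each $\mathbf x$ are read off from the weight at $T^\theta(\mathbf x)$. The uniform-weight assumption ensures that $p$ and $q$ are exactly the empirical meta-measures covered by Step 1, and it is only needed to match the setting of \cite{piening2026slicing}.
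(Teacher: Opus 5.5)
Your proposal is correct and follows the paper's proof. The paper likewise shows $T^\theta$ is injective by the same prefix argument (distinct hidden states force $|\mathbf x|=|\mathbf y|$, and the full sequence is recovered from the last state), which makes $T^\theta_\#$ injective, and then invokes the metric property of DSW on empirical meta-measures from \cite{piening2026slicing}. Your optional self-contained sketch of that metric property (continuity plus full support of $\sigma\otimes\gamma$, Cram\'er--Wold on $L_2([0,1])$, injectivity of $Q$, and a generic direction separating the finitely many atoms) goes beyond the paper, which only cites the result, and it correctly shows where the full-support assumption on $\gamma$ enters.
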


%%%%%%%%%%%%%%%%%%%%%%%%%%%%%%%%%%%%%%%%%%%%%%%%%%%%%%%%%%%%%%%%%%%%%%%%%%%%%%%%%%%%%%%%%%%%%%%%%%%%%%%%%%%%%%%%%%

\section{Numerical Experiments}

We evaluate SOL on small- to medium-scale language models, with a particular focus on diffusion-based generation. Following \cite{pillutla2021MAUVE}, we first test whether SOL reflects expected changes in generation quality, including sequence length, decoding strategy, and model size. We then study the numerical stability and sample efficiency of the estimator. Finally, we evaluate SOL on failure cases of existing metrics, in particular MAUVE, and use it to compare autoregressive, discrete-diffusion, and continuous-diffusion language models on OpenWebText. A detailed comparison with MAUVE is given in Section~\ref{sec:Comparing MAUVE with SOL}.

\textbf{Setup.} We are mostly interested in unconditional text generation and text continuations. As generative models can only approximate the distributions $p_{\text{data}}$ that they have seen in training, for the reference distribution, a holdout set of the data distribution is used. 
The preprocessing of the reference distribution follows the one during training. 
The SOL metric is approximated with a finite number of inner ($L_g$), outer slices ($L_\xi$) and 
a Gaussian process to sample $g$, see Appendix~\ref{sec:dsw-numerical}.
By default, we use $L_\xi=L_g=1024$ and $n=5{,}000$ samples. For Dream, we increase $L_\xi=L_g$ to $8192$ to reduce estimator variance in its higher-dimensional hidden space ($d=3584$, compared with $d=1280$ for GPT-2 Large).

\textbf{Embedding Models} 
We evaluate different embedding backbones in $T^\theta$ for \eqref{eq: PC embedding}. Unless stated otherwise, we use GPT-2 Large as backbone embedding, matching the standard MAUVE setup and enabling direct comparison. We also evaluate more recent models such as OLMo2-1B \citep{olmo20252olmo2furious} and Qwen3-0.6B Base \citep{yang2025qwen3technicalreport}. Moreover, we consider  Dream \citep{ye2025Dream7bdiffusionlarge} which uses a bidirectional transformer. 
For further comparisons with 
contrastive embedding models 
 see Appendix \ref{subsec: Text Embedding Models}

%-----------------------------------

\subsection{Generative Perplexity and Entropy are Blind to Mode Collapse} 
\label{subsec:mode-collapse-blindness}
\begin{wrapfigure}[9]{r}{0.25\textwidth}
\vspace{-1.8em}
    \centering
    \includegraphics[width=\linewidth]{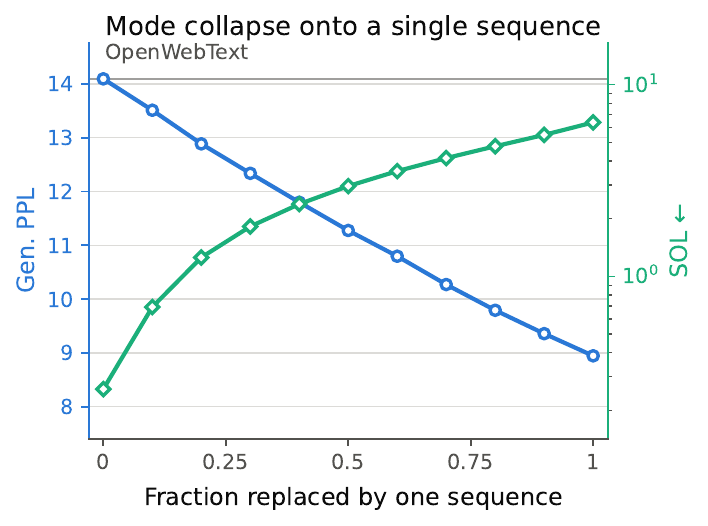}
    \vspace{-7mm}
    \caption{\small{Mode collapse is invisible to  Gen.PPL, while SOL rises sharply, see Subsection~\ref{subsec:mode-collapse-blindness}.}}
    \label{fig:mode-collapse}
          \vspace{-5mm}
\end{wrapfigure}

Generative perplexity (Gen.\ PPL) and average token entropy can fail to detect
mode collapse. We split OpenWebText into two disjoint sets of $3000$ sequences
of $1024$ tokens, using one as reference and progressively replacing samples
in the other with copies of a single sequence. Throughout this collapse,
token entropy remains nearly constant ($5.437 \pm 0.001$), while Gen.\ PPL
improves from $14.1$ to $8.9$. Thus, even when all samples are identical,
these metrics suggest improved quality without reduced diversity.
In contrast, SOL rises sharply 
(Figure~\ref{fig:mode-collapse}).

\subsection{Does the SOL behave as expected?} \label{subsec: Does behave as expected?} 
\begin{figure}[t]
    \centering
    \includegraphics[width=0.32\linewidth]{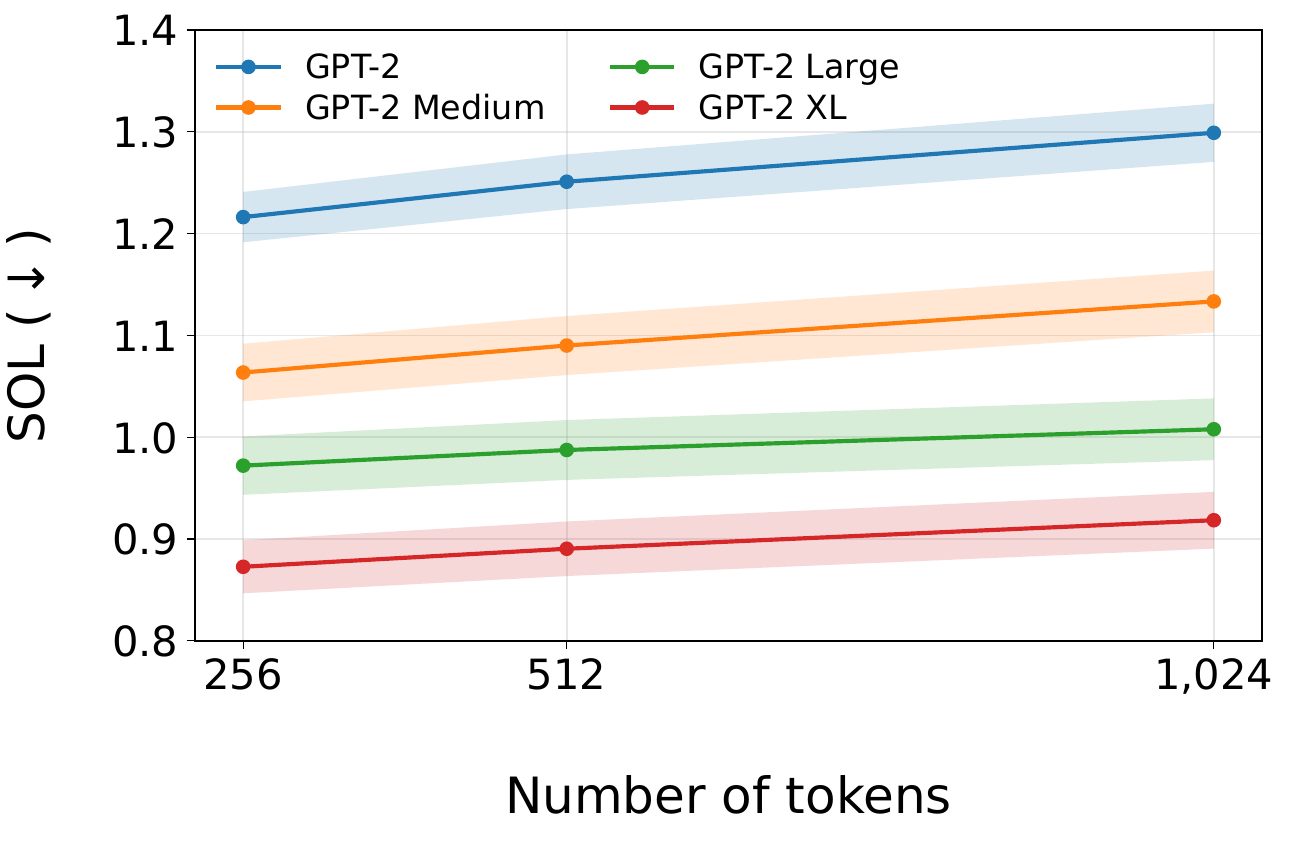}
    \hfill
          \includegraphics[width=0.32\linewidth]{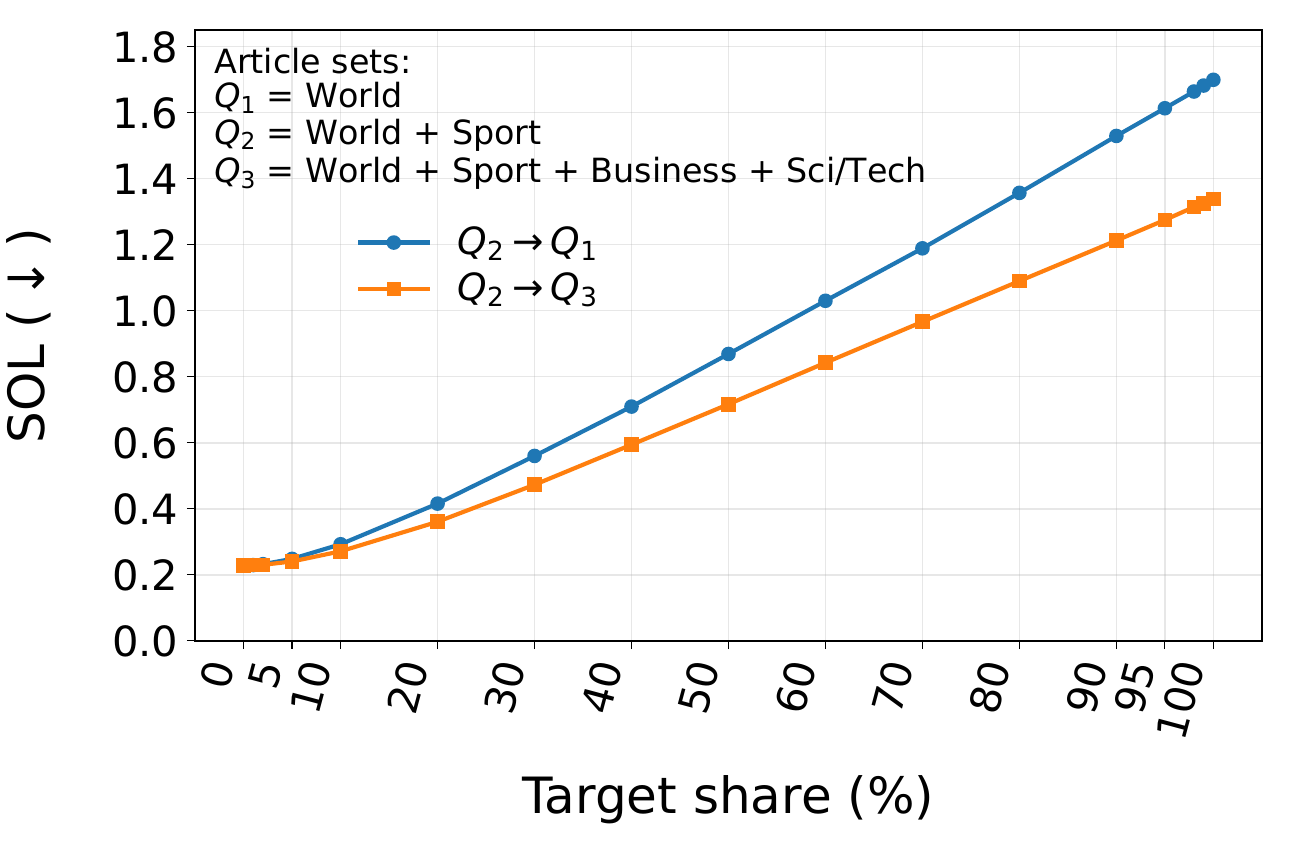}
          \hfill
     \includegraphics[width=0.32\linewidth]{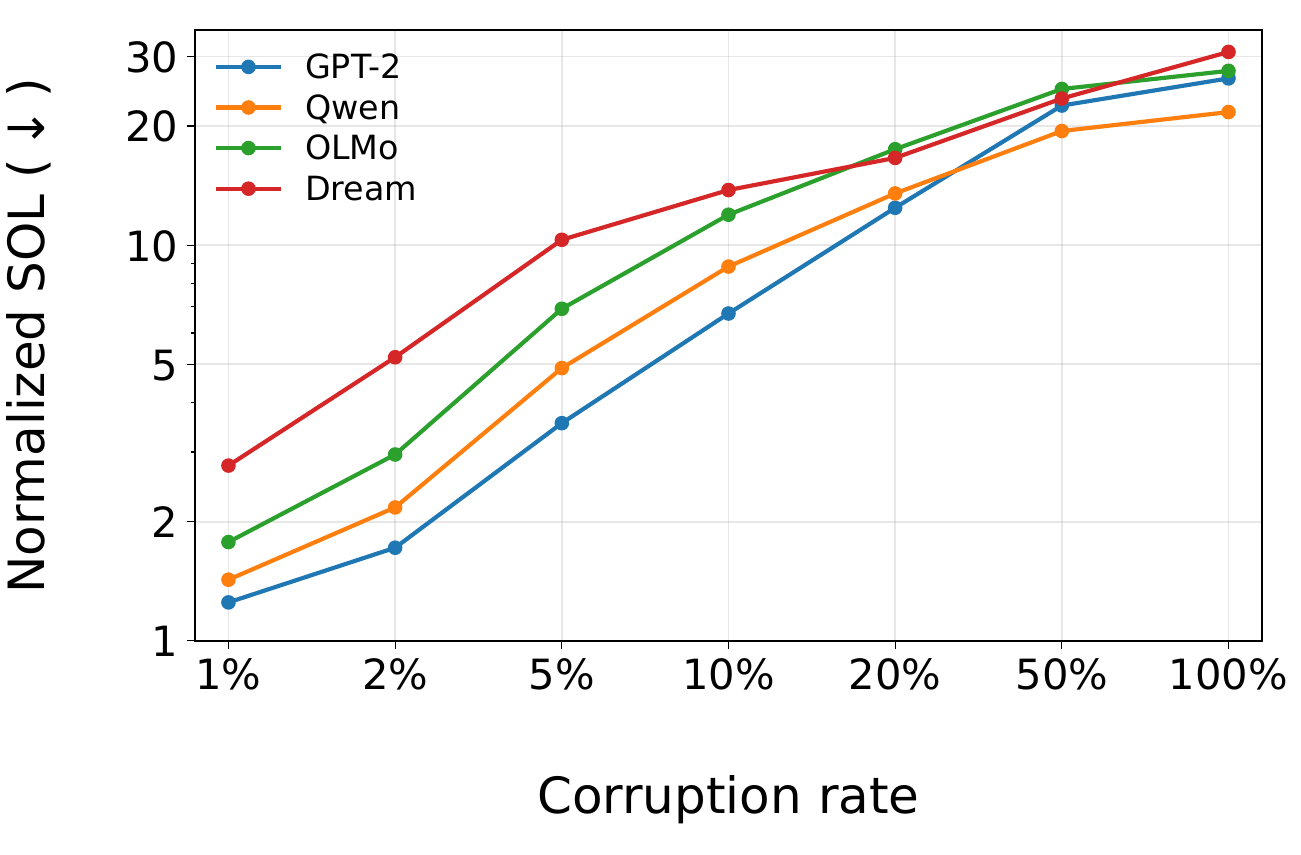}
     \vspace{-3mm}
    \caption{\small{Left: \textit{increasing the sequence length} diminishes the generation quality as expected. Middle:  behavior under \textit{mode-collapse}, i.e. going from $Q_2$ to $Q_1$ (blue), and if we introduce \textit{additional modes}, i.e. going from $Q_2$ to $Q_3$ (orange). Right:
    SOL under \textit{word shuffling} for different embeddings.
    }}
    \label{fig:gpt2_increasing_seq_length}
\end{figure}
To validate SOL, we first consider settings where the expected behavior is known.

\textbf{Generation length and model size.}
Increasing the \textit{generation length} of GPT-2 degrades the generated text, and SOL reflects this deterioration, see Figure~\ref{fig:gpt2_increasing_seq_length} (left). The same experiment also recovers the expected ordering across model sizes, from GPT-2 XL to GPT-2 Small.

\textbf{Mode collapse and support expansion.}
Following \citet{bronnec2024exploringprecisionrecallassess}, we evaluate SOL under \textit{mode collapse} and \textit{support expansion} on AG News \citep{NIPS2015_250cf8b5}. Starting from a mixture of WORLD and SPORTS articles ($Q_2$), we progressively remove SPORTS articles ($Q_1$) or add articles from two additional topics ($Q_3$). Both changes move the candidate distribution away from the original mixture, and SOL increases accordingly in both directions, see Figure~\ref{fig:gpt2_increasing_seq_length} (middle).

\textbf{Structural corruption.}
We progressively \textit{shuffle} tokens or \textit{pollute} the text with a fixed token. In both cases, stronger corruption leads to a larger SOL value, see Figure~\ref{fig:gpt2_increasing_seq_length} (right) and Figure~\ref{fig:text-embedding}. Further corruption experiments are given in Appendix~\ref{sec:encoder_robustness}. Appendix~\ref{subsec:gpt2xl-samplers} additionally compares greedy, ancestral, and nucleus sampling in conditional and unconditional generation, while Section~\ref{sec:Sensitivity to Sampling Temperature} shows that reducing token entropy alone does not improve SOL.

\textbf{Sampling behavior.}
SOL also recovers the expected dependence on sampling temperature: across the
evaluated encoders, moderate temperatures around $0.9$--$1.0$ give the smallest
distance to the data distribution, while both lower and higher temperatures
increase SOL; see Section~\ref{sec:Sensitivity to Sampling Temperature}.
Appendix~\ref{subsec:gpt2xl-samplers} further compares greedy, ancestral, and
nucleus sampling in conditional and unconditional generation.

\textbf{Positional sensitivity.}
Causal transformers are known to exhibit positional biases
\citep{he-etal-2023-blind,springer2025repetition}, so we perturb different
regions of the sequence in Appendix~\ref{sec: Different Pollution Location}
and Appendix~\ref{sec:Alternative to Bidirectional Attention}. Bidirectional
attention produces a substantially more balanced response across token positions
than the causal encoders. This positional balance is one of the main reasons we
use Dream as the reference encoder in our OpenWebText evaluation.

\subsection{Correlation with Human Judges}

\begin{table*}[t]
  \vspace{-0.45cm}
  \small
      \centering
      \begin{tabular}{lccc|ccc}
      \toprule
      & \multicolumn{3}{c}{MAUVE}
      & \multicolumn{3}{c}{SOL} \\
      \cmidrule(lr){2-4}
      \cmidrule(lr){5-7}
      Criterion $\uparrow$
      & GPT-2 Large & OLMo2 1B & Dream 7B
      & GPT-2 Large & OLMo2 1B & Dream 7B \\
      \midrule
      Human-like
      & 0.857 & 0.881 & 0.881
      & 0.881 & 0.929 & \textbf{0.952} \\
      Interesting
      & 0.714 & 0.762 & 0.762
      & 0.762 & \textbf{0.786} & 0.762 \\
      Sensible
      & 0.762 & 0.786 & 0.786
      & 0.786 & 0.881 & \textbf{0.905} \\
      \bottomrule
      \end{tabular}
      \caption{\small{
      Human judgement comparison at sequence length 1024 for different embedding models.
      Worst-case Spearman rank correlation $\rho_{\min}$ between
      metric-induced model rankings and the Bradley--Terry (BT) human ranking.
      Results are shown for MAUVE and SOL for a full setup see Section~\ref{sec:Human Judges correlation}.
      }}
      \label{tab:encoder_human_corr_1024}
      \vspace{-5mm}
  \end{table*}

We follow the human-evaluation setup of MAUVE \citep{pillutla2021MAUVE}. 
The study compares eight GPT-2 model--decoding configurations using pairwise human judgments of whether continuations are \emph{human-like}, \emph{interesting}, and \emph{sensible}. 
For each criterion, the pairwise preferences are converted into system scores by fitting a Bradley--Terry model \citep{19ff28b9-64f9-3656-ba40-08326a05748e}. 
We reuse these human judgments and recompute the automatic metrics on fresh samples from the same eight configurations. 
We report the worst-case Spearman rank correlation $\rho_{\text{min}}$ of SOL  with the Bradley--Terry scores for different embeddings as well as MAUVE. 
Full experimental details are given in Appendix~\ref{sec:Human Judges correlation}.

The comparison in Table~\ref{tab:encoder_human_corr_1024} serves two purposes. First, we compare SOL and MAUVE using the same GPT-2 Large representation. Under this controlled setting, SOL achieves slightly higher correlation with the human ranking than MAUVE. Second, we investigate whether stronger embedding models further improve SOL. Replacing GPT-2 Large with more recent models leads to substantially higher correlations, indicating that the choice of embedding model has a strong impact on SOL's agreement with human judgments. Because the original human study displayed at most 256 tokens, we additionally evaluate a sequence length of $256$ in Table~\ref{tab:encoder_human_corr_256},
where SOL again outperforms MAUVE.

\subsection{Approximation in SOL} \label{subsec:Approximation}
\begin{wrapfigure}[18]{r}{0.32\textwidth}
    \centering
    \vspace{-1.5em}

    % ------------------------------------------------
    % First figure
    % ------------------------------------------------
    \includegraphics[width=\linewidth]
        {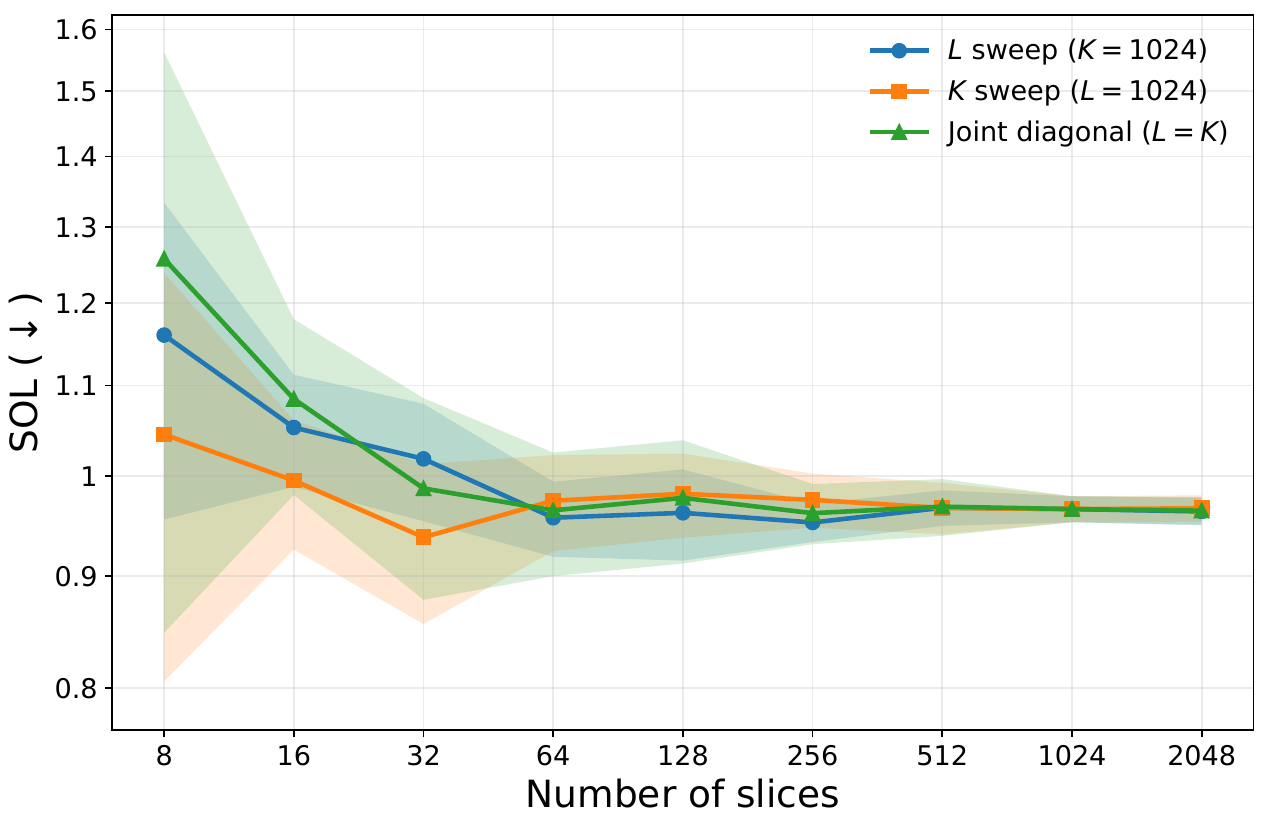}
    \vspace{-8mm}
    \captionof{figure}{\small
        Increasing the number of slices stabilizes SOL.
    }
    \label{fig:k-l-stabiltiy}

    \vspace{2mm}

    % ------------------------------------------------
    % Second figure
    % ------------------------------------------------
    \includegraphics[width=\linewidth]
        {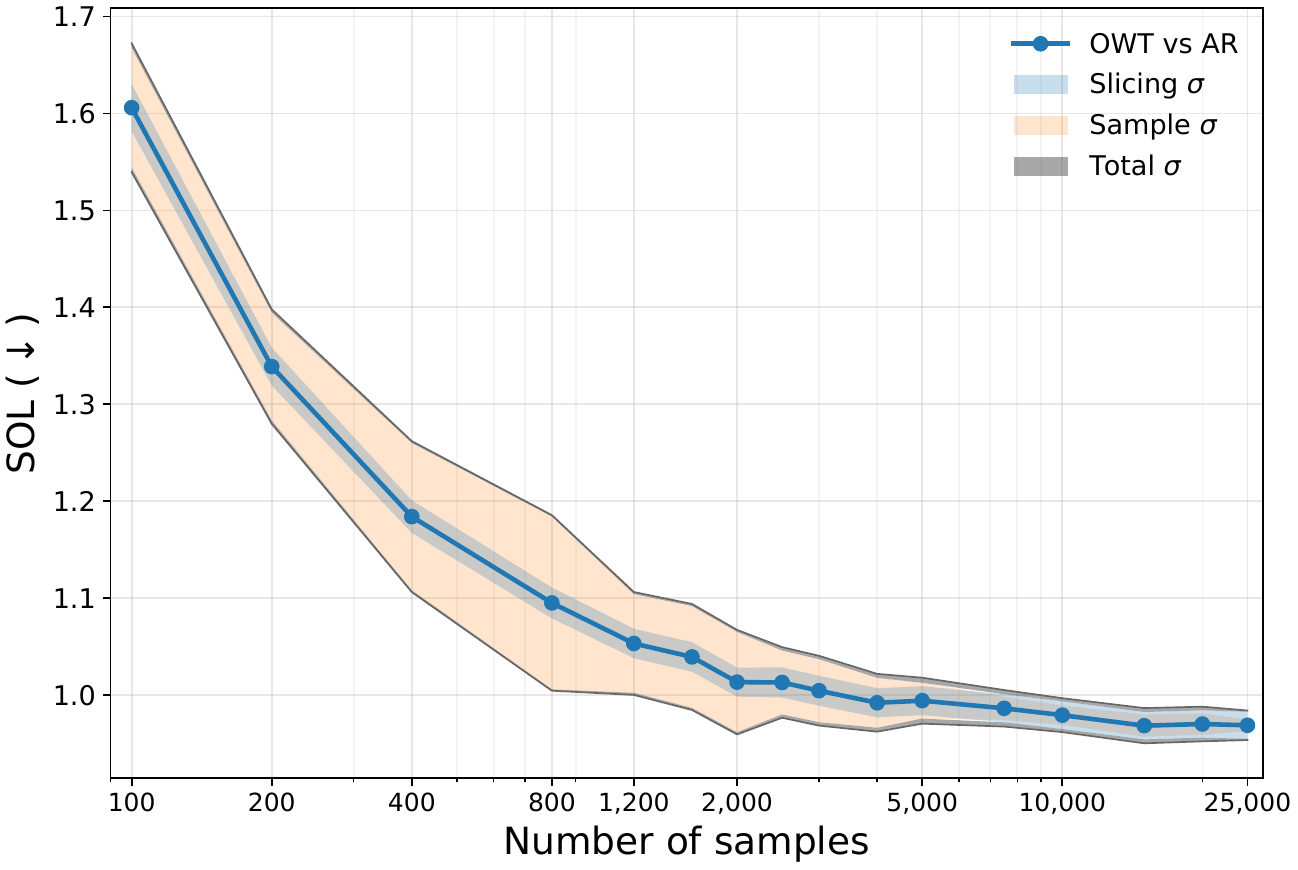}
    \vspace{-8mm}
    \captionof{figure}{\small
        SOL converges with sample size, with std over slices and samples.
    }
    \label{fig:sample-size}

    \vspace{-4mm}
\end{wrapfigure}
\textbf{Numerical stability.}
We study the approximation error with respect to the number of slices
$L_\xi,L_g$ and samples $n$. Using autoregressive OpenWebText generations
\citep{sahoo2024simple}, we vary $L_\xi$ and $L_g$ separately and jointly.
Figure~\ref{fig:k-l-stabiltiy} shows that the estimator is already stable around
$L_\xi=L_g=1024$. Increasing the number of corpus samples further reduces
variability (Figure~\ref{fig:sample-size}); up to roughly $5{,}000$ samples,
the variation from the corpus subsampling dominates the variation from the random slices.
In this experiment, SOL is substantially more stable with respect to sample
size than MAUVE (Figure~\ref{fig:MAUVE scaling}). The corresponding Dream
experiment is given in Section~\ref{subsec:Dream Number of samples}, where we use 8192 slices due to the larger hidden dimension.

\textbf{Computational Cost.}
The main computational cost comes from sample and hidden-state generation. On an NVIDIA GeForce RTX 5090, computing the hidden states takes $71$s for GPT-2 Large and $376$s for Dream. Computing the DSW for $5000$ samples takes about $1$s with $L_\xi=L_g=1024$ slices, see Table~\ref{tab:dsw-runtime-kl} for the runtime sweep.

\subsection{Diffusion Language Models on OWT}

Diffusion language models, particularly continuous ones, remain difficult to evaluate. ELBO-based perplexities are unavailable for most continuous models, are not comparable across noise processes, and do not distinguish different samplers of the same discrete model. Gen.\ PPL is also problematic, as it can be improved by reducing the sample entropy \citep{franca2026hacking}. 
Since SOL only requires samples, it applies to all models and samplers alike. We evaluate the discrete diffusion models MDLM \citep{sahoo2024simple}, DUO \citep{sahoo2025the}, LDDM \citep{jo2026loopholing}, BD3-LM \citep{arriola2025block}, ADLM \citep{rout2025anchored} and Soft-Masked \citep{hersche2026softmasked}, and the continuous models vMF and VP \citep{CKS2026}, LangFlow \citep{chen2026langflowcontinuousdiffusionrivals}, FLM \citep{lee2026flowmaplanguagemodels}, FMLM$^{*}$ \citep{yoo2026selfconditionedflowmaplanguage} and ELF \citep{hu2026elfembeddedlanguageflows}, together with an autoregressive baseline \citep{sahoo2024simple}.
ELF is trained on a corpus that is differently tokenized and compared with its own reference (E). Full experimental details are given in Appendix~\ref{sec:owt-setup}.

Table~\ref{tab:experiment-owt-baselines-reduced} shows that sampling strategy
has a substantial effect on distributional quality. ReMDM sampling strongly
improves the discrete Soft-Masked model, which attains a lower SOL than the
autoregressive baseline, while stochastic sampling also improves the continuous
models relative to their ODE samplers. Self-conditioning has a similarly large
effect in the controlled vMF comparison; see
Appendix~\ref{app:Extended Table}.

\begin{wraptable}[21]{r}{0.4\textwidth}
    \vspace{-1\baselineskip}
    \centering
    \small
    \setlength{\tabcolsep}{4pt}
    \renewcommand{\arraystretch}{0.94}

    \resizebox{\linewidth}{!}{%
    \begin{tabular}{@{}llrrr@{}}
      \toprule
      Model & Setting & $H$ & Gen. ppl. & SOL $\downarrow$ \\
      \midrule

      \emph{Real OWT}\textsuperscript{G}
        & {\scriptsize ---}
        & 5.449 & 14.9 & $2.066$ \\

      \rowcolor{gray!10}
      \emph{Real ELF OWT}\textsuperscript{E}
        & {\scriptsize ---}
        & 5.402 & 17.0 & $2.649$ \\

      \midrule
      \multicolumn{5}{l}{\textit{Autoregressive}} \\

      AR\textsuperscript{G}
        & {\scriptsize Temperature $1.0$}
        & 5.582 & 34.9 & $22.207$ \\

      \midrule
      \multicolumn{5}{l}{\textit{Discrete diffusion}} \\

      DUO-base\textsuperscript{G}
        & {\scriptsize Ancestral, $T=1024$}
        & 5.537 & 76.9 & $32.642$ \\

      DUO-distilled\textsuperscript{G}
        & {\scriptsize Ancestral, $T=32$}
        & 5.518 & 62.3 & $30.595$ \\

      MDLM\textsuperscript{G}
        & {\scriptsize $T=1000$}
        & 5.445 & 59.4 & $36.394$ \\

      LDDM-M\textsuperscript{G}
        & {\scriptsize $T=1024$}
        & 5.536 & 48.2 & $24.601$ \\

      Soft-Masked\textsuperscript{G}
        & {\scriptsize ReMDM, $T=1024$}
        & 5.341 & 16.7 & $\mathbf{18.966}$ \\

      ADLM-large\textsuperscript{G}
        & {\scriptsize ReMDM, $T=1024$}
        & 5.352 & 28.9 & $23.601$ \\

      BD3-LM, block 4\textsuperscript{B}
        & {\scriptsize First-hitting, $T=5000$}
        & 5.336 & 24.4 & $26.070$ \\

      \midrule
      \multicolumn{5}{l}{\textit{Continuous diffusion and flow}} \\

      vMF\textsuperscript{G}
        & {\scriptsize ODE, steps$=1024$}
        & 5.456 & 75.4 & $33.832$ \\

      vMF\textsuperscript{G}
        & {\scriptsize SDE, steps$=128$}
        & 5.348 & 43.6 & $30.375$ \\

      VP\textsuperscript{G}
        & {\scriptsize ODE, steps$=1024$}
        & 5.435 & 66.1 & $33.275$ \\

      VP\textsuperscript{G}
        & {\scriptsize SDE, steps$=128$}
        & 5.355 & 50.9 & $31.972$ \\

      LangFlow\textsuperscript{G}
        & {\scriptsize ODE, steps$=1024$}
        & 5.257 & 36.9 & $30.541$ \\

      \rowcolor{gray!10}
      ELF-B\textsuperscript{E}
        & {\scriptsize SDE, steps$=128$}
        & 5.123 & 21.9 & $32.878$ \\

      \rowcolor{gray!10}
      ELF-B\textsuperscript{E}
        & {\scriptsize ODE, steps$=1024$}
        & 5.090 & 24.0 & $33.320$ \\

      \midrule
      \multicolumn{5}{l}{\textit{Few-step models}} \\

      FMLM\textsuperscript{G}
        & {\scriptsize 2 steps}
        & 5.295 & 135.1 & $71.134$ \\

    FLM\textsuperscript{G}
        & {\scriptsize Euler, steps$=1024$}
        & 5.345 & 63.1 & $48.387$ \\

      ELF$^{*}$\textsuperscript{G}
        & {\scriptsize CDEQ, steps$=128$, CFG$=5$}
        & 5.384 & 60.8 & $35.950$ \\

      FMLM$^{*}$\textsuperscript{G}
        & {\scriptsize 2 steps, CFG$=5$}
        & 5.458 & 97.4 & $41.120$ \\
     
      FMLM$^{*}$\textsuperscript{G}
        & {\scriptsize 128 steps, CFG$=5$}
        & 5.456 & 79.0 & $38.482$ \\

      \bottomrule
    \end{tabular}%
    }

    \caption{\small{
OpenWebText baselines using 5,000 samples per distribution and a
1,024-token evaluation horizon. SOL uses Dream-7B as the reference encoder.
The complete benchmark is reported in
Table~\ref{tab:experiment-8d-owt-baselines}.
}}
    \label{tab:experiment-owt-baselines-reduced}
    \vspace{-1\baselineskip}
\end{wraptable}

Gen.\ PPL and SOL can nevertheless disagree. For example, LangFlow achieves a
much lower Gen.\ PPL than vMF with predictor--corrector sampling
($36.9$ versus $56.3$, see Table~\ref{tab:experiment-8d-owt-baselines}), while their SOL values are much closer. Thus,
improvements in Gen.\ PPL need not correspond to comparable improvements in
distributional fit. Results for three encoders, additional sampler settings,
and a diagnosis of MAUVE's last-token bias are given in
Appendix~\ref{sec: OWT comparision}.

\section{Conclusion}

Current evaluation methods are insufficient for comparing modern language models. We propose SOL, a sample-based distance between text distributions that represents each sequence by the empirical measure of its hidden states and compares the resulting distributions using the double sliced Wasserstein distance.
SOL is a metric for injective reference transformers and admits a stable estimator dominated by a single reference-model forward pass. Unlike Gen.\ PPL and entropy, SOL detects mode collapse, unlike MAUVE, it is less biased toward the end of the sequence and correlates better with human judgments.
Because SOL only requires samples, it compares autoregressive, discrete, and continuous diffusion models on equal footing. On OpenWebText, we find that better Gen.\ PPL does not necessarily imply a better fit to the data distribution. We recommend the bidirectional Dream backbone, since it weights all token positions similarly and achieves the strongest overall agreement with human judgments, together with $L_\xi = L_g = 8192$ slices and $5{,}000$ samples per distribution, at which the estimator stabilizes.

\paragraph{Acknowledgement.}
G.K. acknowledges funding by the
BMBF VIScreenPRO (ID: 100715327).
J. Ch. acknowledges funding of the DFG project STE 571/17-2
within the SPP 2298 “Theoretical Foundations of Deep Learning”. G.S. thanks CNRS for support within the CNRS Ambassador Fellowship.
% in the document

  %%%%%%%%%%%%%%%%%%%%%%%%%%%%%%%%%%%%%%%%%%%%%%%%%%%%%%%%%%%%%%%%%%%%%%%%%%%%%%%%%%%%%%%%%%%%%%%%%%%%%%%%%%%%
 \bibliography{iclr2027_conference}

@article{CKS2026,
title={Spherical Flows for Sampling Categorical Data},
author = {J Chemseddine and G Kornhardt and G Steidl},
journal = {arXiv preprint arXiv:2605.05629},
year = {2026}
}

@book{bogachev1998gaussian,
  title={Gaussian measures},
  author={Bogachev, Vladimir Igorevich},
  number={62},
  year={1998},
  publisher={American Mathematical Soc.}
}

@article{rajput1972gaussian,
  title={Gaussian processes and {G}aussian measures},
  author={Rajput, Balram S and Cambanis, Stamatis},
  journal={The Annals of Mathematical Statistics},
  pages={1944--1952},
  year={1972},
  publisher={JSTOR}
}

@inproceedings{bonet2025flowing,
title={Flowing datasets with {W}asserstein over {W}asserstein gradient flows},
author={Cl{\'e}ment Bonet and Christophe Vauthier and Anna Korba},
booktitle={Proceedings of ICML'25},
year={2025},
publisher={OpenReview.net}
}

@misc{pynadath2026generativefrontiersevaluationmatters,
      title={Generative Frontiers: Why Evaluation Matters for Diffusion Language Models}, 
      author={Patrick Pynadath and Jiaxin Shi and Ruqi Zhang},
      year={2026},
      eprint={2604.02718},
      archivePrefix={arXiv},
      primaryClass={cs.LG},
      url={https://arxiv.org/abs/2604.02718}, 
}

@misc{yang2025qwen3technicalreport,
      title={Qwen3 Technical Report}, 
      author={An Yang and Anfeng Li and Baosong Yang and Beichen Zhang and Binyuan Hui and Bo Zheng and Bowen Yu and Chang Gao and Chengen Huang and Chenxu Lv and Chujie Zheng and Dayiheng Liu and Fan Zhou and Fei Huang and Feng Hu and Hao Ge and Haoran Wei and Huan Lin and Jialong Tang and Jian Yang and Jianhong Tu and Jianwei Zhang and Jianxin Yang and Jiaxi Yang and Jing Zhou and Jingren Zhou and Junyang Lin and Kai Dang and Keqin Bao and Kexin Yang and Le Yu and Lianghao Deng and Mei Li and Mingfeng Xue and Mingze Li and Pei Zhang and Peng Wang and Qin Zhu and Rui Men and Ruize Gao and Shixuan Liu and Shuang Luo and Tianhao Li and Tianyi Tang and Wenbiao Yin and Xingzhang Ren and Xinyu Wang and Xinyu Zhang and Xuancheng Ren and Yang Fan and Yang Su and Yichang Zhang and Yinger Zhang and Yu Wan and Yuqiong Liu and Zekun Wang and Zeyu Cui and Zhenru Zhang and Zhipeng Zhou and Zihan Qiu},
      year={2025},
      eprint={2505.09388},
      archivePrefix={arXiv},
      primaryClass={cs.CL},
      url={https://arxiv.org/abs/2505.09388}, 
}

@inproceedings{NIPS2015_250cf8b5,
	author = {Zhang, Xiang and Zhao, Junbo and LeCun, Yann},
	booktitle = {Advances in Neural Information Processing Systems},
	editor = {Cortes, C. and Lawrence, N. and Lee, D. and Sugiyama, M. and Garnett, R.},
	pages = {},
	publisher = {Curran Associates, Inc.},
	title = {Character-level Convolutional Networks for Text Classification},
	url = {https://proceedings.neurips.cc/paper_files/paper/2015/file/250cf8b51c773f3f8dc8b4be867a9a02-Paper.pdf},
	volume = {28},
	year = {2015}
}

@inproceedings{nguyen2024quasi,
  title     = {Quasi‑{Monte C}arlo for 3D Sliced {W}asserstein},
  author    = {Khai Nguyen and Nicola Bariletto and Nhat Ho},
  booktitle = {Proceedings of the ICLR'24},
  year      = {2024},
  publisher={OpenReview.net}
}

@InProceedings{pmlr-v119-naeem20a,
  title = 	 {Reliable Fidelity and Diversity Metrics for Generative Models},
  author =       {Naeem, Muhammad Ferjad and Oh, Seong Joon and Uh, Youngjung and Choi, Yunjey and Yoo, Jaejun},
  booktitle = 	 {Proceedings of the 37th International Conference on Machine Learning},
  pages = 	 {7176--7185},
  year = 	 {2020},
  editor = 	 {III, Hal Daumé and Singh, Aarti},
  volume = 	 {119},
  series = 	 {Proceedings of Machine Learning Research},
  month = 	 {13--18 Jul},
  publisher =    {PMLR},
  url = 	 {https://proceedings.mlr.press/v119/naeem20a.html}
}

@inproceedings{hertrich2025qmc_slicing,
  author       = {Johannes Hertrich and Tim Jahn and Michael Quellmalz},
  title        = {Fast summation of radial kernels via {QMC} slicing},
  booktitle    = {Proceedings of the ICLR'25},
  year         = {2025},
  publisher={OpenReview.net}
}

@inproceedings{nadjahi2020statistical,
  title = {Statistical and topological properties of sliced probability divergences},
  author = {Nadjahi, Kimia and Durmus, Alain and Chizat, L{\'e}na{\"i}c and Kolouri, Soheil and Shahrampour, Shahin and Simsekli, Umut},
  booktitle = {Advances in Neural Information Processing Systems},
  volume = {33},
  pages = {15657--15669},
  year = {2020},
  publisher = {Curran Associates},
}

@phdthesis{bonnotte2013unidimensional,
  author       = {Nicolas Bonnotte},
  title        = {Unidimensional and Evolution Methods for Optimal Transportation},
  school       = {Université Paris Sud–Paris XI},
  year         = {2013},
  address      = {Orsay, France},
  note         = {PhD Thesis}
}

@inproceedings{nguyen2025sotdd,
  author    = {Nguyen, Khai and Nguyen, Hai and Pham, Tuan and Ho, Nhat},
  title     = {Lightspeed geometric dataset distance via sliced optimal transport},
  booktitle = {Proceedings of ICML'25},
  year      = {2025},
    publisher={OpenReview.net}
}

@article{bonneel2015slicedbarycenters,
  title={Sliced and {Radon W}asserstein barycenters of measures},
  author={Bonneel, Nicolas and Rabin, Julien and Peyr{\'e}, Gabriel and Pfister, Hanspeter},
  journal={Journal of Mathematical Imaging and Vision},
  volume={51},
  pages={22--45},
  year={2015},
  publisher={Springer},
}

@article{PDS2026,
title = {Generalized {W}asserstein Flow Matching: Transport Plans, Everywhere, All at Once},
author = {M Piening and R Duong and G Steidl},
journal = {arXiv preprint arXiv:2605.08424},
year = {2026}
}

@inproceedings{piening2026slicing,
title={Slicing {W}asserstein over {W}asserstein via Functional Optimal Transport},
author={Moritz Piening and Robert Beinert},
booktitle={The Fourteenth International Conference on Learning Representations},
year={2026},
url={https://openreview.net/forum?id=l3KtyVZde3}
}

@inproceedings{
nikolaou2026language,
title={Language Models are Injective and Hence Invertible},
author={Giorgos Nikolaou and Tommaso Mencattini and Donato Crisostomi and Andrea Santilli and Yannis Panagakis and Emanuele Rodol{\`a}},
booktitle={The Fourteenth International Conference on Learning Representations},
year={2026},
url={https://openreview.net/forum?id=0kHbD6ad07}
}

@inproceedings{
pillutla2021mauve,
title={{MAUVE}: Measuring the Gap Between Neural Text and Human Text using Divergence Frontiers},
author={Krishna Pillutla and Swabha Swayamdipta and Rowan Zellers and John Thickstun and Sean Welleck and Yejin Choi and Zaid Harchaoui},
booktitle={Advances in Neural Information Processing Systems},
editor={A. Beygelzimer and Y. Dauphin and P. Liang and J. Wortman Vaughan},
year={2021},
url={https://openreview.net/forum?id=Tqx7nJp7PR}
}

@inproceedings{he-etal-2023-blind,
    title = "On the Blind Spots of Model-Based Evaluation Metrics for Text Generation",
    author = "He, Tianxing  and
      Zhang, Jingyu  and
      Wang, Tianle  and
      Kumar, Sachin  and
      Cho, Kyunghyun  and
      Glass, James  and
      Tsvetkov, Yulia",
    editor = "Rogers, Anna  and
      Boyd-Graber, Jordan  and
      Okazaki, Naoaki",
    booktitle = "Proceedings of the 61st Annual Meeting of the Association for Computational Linguistics (Volume 1: Long Papers)",
    month = jul,
    year = "2023",
    address = "Toronto, Canada",
    publisher = "Association for Computational Linguistics",
    url = "https://aclanthology.org/2023.acl-long.674/",
    doi = "10.18653/v1/2023.acl-long.674",
    pages = "12067--12097"
}

@inproceedings{
sahoo2026scaling,
title={Scaling Beyond Masked Diffusion Language Models},
author={Subham Sekhar Sahoo and Jean-Marie Lemercier and Zhihan Yang and Justin Deschenaux and Jingyu Liu and John Thickstun and Ante Juki{\'c}},
booktitle={Forty-third International Conference on Machine Learning},
year={2026},
url={https://openreview.net/forum?id=xKJG1CNtQM}
}

@misc{bge-m3,
      title={BGE M3-Embedding: Multi-Lingual, Multi-Functionality, Multi-Granularity Text Embeddings Through Self-Knowledge Distillation}, 
      author={Jianlv Chen and Shitao Xiao and Peitian Zhang and Kun Luo and Defu Lian and Zheng Liu},
      year={2024},
      eprint={2402.03216},
      archivePrefix={arXiv},
      primaryClass={cs.CL}
}

@misc{li2023generaltextembeddingsmultistage,
      title={Towards General Text Embeddings with Multi-stage Contrastive Learning}, 
      author={Zehan Li and Xin Zhang and Yanzhao Zhang and Dingkun Long and Pengjun Xie and Meishan Zhang},
      year={2023},
      eprint={2308.03281},
      archivePrefix={arXiv},
      primaryClass={cs.CL},
      url={https://arxiv.org/abs/2308.03281}, 
}

@misc{wang2024textembeddingsweaklysupervisedcontrastive,
      title={Text Embeddings by Weakly-Supervised Contrastive Pre-training}, 
      author={Liang Wang and Nan Yang and Xiaolong Huang and Binxing Jiao and Linjun Yang and Daxin Jiang and Rangan Majumder and Furu Wei},
      year={2024},
      eprint={2212.03533},
      archivePrefix={arXiv},
      primaryClass={cs.CL},
      url={https://arxiv.org/abs/2212.03533}, 
}

@misc{olmo20252olmo2furious,
      title={2 OLMo 2 Furious}, 
      author={Team OLMo and Pete Walsh and Luca Soldaini and Dirk Groeneveld and Kyle Lo and Shane Arora and Akshita Bhagia and Yuling Gu and Shengyi Huang and Matt Jordan and Nathan Lambert and Dustin Schwenk and Oyvind Tafjord and Taira Anderson and David Atkinson and Faeze Brahman and Christopher Clark and Pradeep Dasigi and Nouha Dziri and Allyson Ettinger and Michal Guerquin and David Heineman and Hamish Ivison and Pang Wei Koh and Jiacheng Liu and Saumya Malik and William Merrill and Lester James V. Miranda and Jacob Morrison and Tyler Murray and Crystal Nam and Jake Poznanski and Valentina Pyatkin and Aman Rangapur and Michael Schmitz and Sam Skjonsberg and David Wadden and Christopher Wilhelm and Michael Wilson and Luke Zettlemoyer and Ali Farhadi and Noah A. Smith and Hannaneh Hajishirzi},
      year={2025},
      eprint={2501.00656},
      archivePrefix={arXiv},
      primaryClass={cs.CL},
      url={https://arxiv.org/abs/2501.00656}, 
}

@misc{openai2019gpt2output,
  author       = {{OpenAI}},
  title        = {{GPT-2 Output Dataset}},
  year         = {2019},
  howpublished = {\url{https://github.com/openai/gpt-2-output-dataset}},
  note         = {GitHub repository}
}

@misc{bronnec2024exploringprecisionrecallassess,
      title={Exploring Precision and Recall to assess the quality and diversity of LLMs}, 
      author={Florian Le Bronnec and Alexandre Verine and Benjamin Negrevergne and Yann Chevaleyre and Alexandre Allauzen},
      year={2024},
      eprint={2402.10693},
      archivePrefix={arXiv},
      primaryClass={cs.CL},
      url={https://arxiv.org/abs/2402.10693}, 
}

@inproceedings{zheng2025masked,
  title={Masked Diffusion Models are Secretly Time-Agnostic Masked Models and Exploit Inaccurate Categorical Sampling},
  author={Zheng, Kaiwen and Chen, Yongxin and Mao, Hanzi and Liu, Ming-Yu and Zhu, Jun and Zhang, Qinsheng},
  booktitle={Proceedings of the ICLR'25},
  year={2025},
  publisher={OpenReview.net}
}

@article{franca2026hacking,
  title={Hacking Generative Perplexity: Why Unconditional Text Evaluation Needs Distributional Metrics},
  author={Franca, Antonio and Tong, Alexander},
  journal={arXiv preprint arXiv:2606.08417},
  year={2026}
}

@inproceedings{xiang2021assessing,
  title={Assessing Dialogue Systems with Distribution Distances},
  author={Xiang, Jiannan and Liu, Yahui and Cai, Deng and Li, Huayang and Lian, Defu and Liu, Lemao},
  booktitle={Findings of the Association for Computational Linguistics: ACL-IJCNLP 2021},
  pages={2192--2198},
  year={2021},
  publisher={Association for Computational Linguistics}
}

@article{pillutla2023mauve,
  title={{MAUVE} Scores for Generative Models: Theory and Practice},
  author={Pillutla, Krishna and Liu, Lang and Thickstun, John and Welleck, Sean
  and Swayamdipta, Swabha and Zellers, Rowan and Oh, Sewoong and Choi, Yejin
  and Harchaoui, Zaid},
  journal={Journal of Machine Learning Research},
  volume={24},
  number={356},
  pages={1--92},
  year={2023}
}

@inproceedings{heusel2017gans,
  title={{GAN}s Trained by a Two Time-Scale Update Rule Converge to a Local Nash Equilibrium},
  author={Heusel, Martin and Ramsauer, Hubert and Unterthiner, Thomas and Nessler, Bernhard and Hochreiter, Sepp},
  booktitle={Advances in Neural Information Processing Systems},
  volume={30},
  year={2017},
  publisher={Curran Associates}
}

@article{berthet2026mind,
  title={{MIND}: {M}onge {I}nception {D}istance for Generative Models Evaluation},
  author={Berthet, Quentin and Wu, Yu-Han and Crepy, Clement and Elie, Romuald
  and Greff, Klaus and Sander, Michael Eli},
  journal={arXiv preprint arXiv:2605.06797},
  year={2026}
}

@inproceedings{hara2026mean,
  title={Why Mean Pooling Works: Quantifying Second-Order Collapse in Text Embeddings},
  author={Hara, Tomomasa and Kurita, Hiroto and Imaizumi, Masaaki and Inui, Kentaro and Yokoi, Sho},
  booktitle={Proceedings of the 64th Annual Meeting of the Association for Computational Linguistics},
  pages={47180--47201},
  year={2026},
  publisher={Association for Computational Linguistics}
}

@inproceedings{kusner2015word,
  title={From Word Embeddings to Document Distances},
  author={Kusner, Matt and Sun, Yu and Kolkin, Nicholas and Weinberger, Kilian},
  booktitle={Proceedings of the 32nd International Conference on Machine Learning},
  volume={37},
  pages={957--966},
  year={2015},
  publisher={PMLR}
}

@inproceedings{yurochkin2019hierarchical,
  title={Hierarchical Optimal Transport for Document Representation},
  author={Yurochkin, Mikhail and Claici, Sebastian and Chien, Edward
  and Mirzazadeh, Farzaneh and Solomon, Justin M.},
  booktitle={Advances in Neural Information Processing Systems},
  volume={32},
  year={2019},
  publisher={Curran Associates}
}

@article{bonet2025busemann,
  title={Busemann Functions in the {W}asserstein Space: Existence, Closed-Forms, and Applications to Slicing},
  author={Cl{\'e}ment Bonet and Elsa Cazelles and Lucas Drumetz and Nicolas Courty},
  booktitle={Proceedings of AISTATS'26},
  volume={300},
  pages={1972--1980},
  year={2026},
  publisher={PMLR}
}

@article{kreitmeier2011optimal,
  title={Optimal Vector Quantization in Terms of {W}asserstein Distance},
  author={Kreitmeier, Wolfgang},
  journal={Journal of Multivariate Analysis},
  volume={102},
  number={8},
  pages={1225--1239},
  year={2011},
  doi={10.1016/j.jmva.2011.04.005}
}

@article{radford2019language,
  title={Language Models are Unsupervised Multitask Learners},
  author={Radford, Alec and Wu, Jeff and Child, Rewon and Luan, David and Amodei, Dario and Sutskever, Ilya},
  year={2019}
}

@misc{ye2025dream7bdiffusionlarge,
      title={Dream 7B: Diffusion Large Language Models}, 
      author={Jiacheng Ye and Zhihui Xie and Lin Zheng and Jiahui Gao and Zirui Wu and Xin Jiang and Zhenguo Li and Lingpeng Kong},
      year={2025},
      eprint={2508.15487},
      archivePrefix={arXiv},
      primaryClass={cs.CL},
      url={https://arxiv.org/abs/2508.15487}, 
}

@InProceedings{pmlr-v267-geuter25a,
  title = 	 {Universal Neural Optimal Transport},
  author =       {Geuter, Jonathan and Kornhardt, Gregor and Tomasson, Ingimar and Laschos, Vaios},
  booktitle = 	 {Proceedings of the 42nd International Conference on Machine Learning},
  pages = 	 {19196--19232},
  year = 	 {2025},
  editor = 	 {Singh, Aarti and Fazel, Maryam and Hsu, Daniel and Lacoste-Julien, Simon and Berkenkamp, Felix and Maharaj, Tegan and Wagstaff, Kiri and Zhu, Jerry},
  volume = 	 {267},
  series = 	 {Proceedings of Machine Learning Research},
  month = 	 {13--19 Jul},
  publisher =    {PMLR},
  url = 	 {https://proceedings.mlr.press/v267/geuter25a.html}
}

@Article{Clerc2026,
author={Clerc, Antonin
and Quellmalz, Michael
and Piening, Moritz
and Flotho, Philipp
and Kornhardt, Gregor
and Steidl, Gabriele},
title={HOT-POT: Optimal Transport for Sparse Stereo Matching},
journal={Journal of Mathematical Imaging and Vision},
year={2026},
month={Aug},
day={05},
volume={68},
number={4},
pages={53},
issn={1573-7683},
doi={10.1007/s10851-026-01336-3},
url={https://doi.org/10.1007/s10851-026-01336-3}
}

@InProceedings{pmlr-v97-dukler19a,
  title = 	 {{W}asserstein of {W}asserstein Loss for Learning Generative Models},
  author =       {Dukler, Yonatan and Li, Wuchen and Lin, Alex and Montufar, Guido},
  booktitle = 	 {Proceedings of the 36th International Conference on Machine Learning},
  pages = 	 {1716--1725},
  year = 	 {2019},
  editor = 	 {Chaudhuri, Kamalika and Salakhutdinov, Ruslan},
  volume = 	 {97},
  series = 	 {Proceedings of Machine Learning Research},
  month = 	 {09--15 Jun},
  publisher =    {PMLR},
  url = 	 {https://proceedings.mlr.press/v97/dukler19a.html}
}

@article{piening2025slicinggmm,
  title={Slicing the {G}aussian Mixture {W}asserstein Distance},
  author={Piening, Moritz and Beinert, Robert},
  journal={Transactions on Machine Learning Research},
  year={2025},
  publisher={OpenReview.net}
}

@article{nguyen2026summarizing,
  title={Summarizing Nonparametric {B}ayesian Mixture Posteriors---Sliced Optimal Transport Metrics for {G}aussian Mixtures},
  author={Nguyen, Khai and Mueller, Peter},
  journal={Journal of Computational and Graphical Statistics},
  year={2026},
  publisher={Taylor \& Francis},
  doi={10.1080/10618600.2026.2634831}
}

@inproceedings{alvarezmelis2020geometric,
  title={Geometric Dataset Distances via Optimal Transport},
  author={Alvarez-Melis, David and Fusi, Nicolo},
  booktitle={Advances in Neural Information Processing Systems},
  volume={33},
  pages={21428--21439},
  year={2020},
  publisher={Curran Associates}
}

@inproceedings{zhang2020bertscore,
  title={{BERTScore}: Evaluating Text Generation with {BERT}},
  author={Zhang, Tianyi and Kishore, Varsha and Wu, Felix and Weinberger, Kilian Q. and Artzi, Yoav},
  booktitle={Proceedings of the ICLR'20},
  year={2020},
  publisher={OpenReview.net}
}

@inproceedings{zhao2019moverscore,
  title={{MoverScore}: Text Generation Evaluating with Contextualized Embeddings and Earth Mover Distance},
  author={Zhao, Wei and Peyrard, Maxime and Liu, Fei and Gao, Yang and Meyer, Christian M. and Eger, Steffen},
  booktitle={Proceedings of the 2019 Conference on Empirical Methods in Natural Language Processing and the 9th International Joint Conference on Natural Language Processing},
  pages={563--578},
  year={2019},
  publisher={Association for Computational Linguistics}
}

@inproceedings{alihosseini2019jointly,
  title={Jointly Measuring Diversity and Quality in Text Generation Models},
  author={Alihosseini, Danial and Montahaei, Ehsan and Soleymani Baghshah, Mahdieh},
  booktitle={Proceedings of the Workshop on Methods for Optimizing and Evaluating Neural Language Generation},
  pages={90--98},
  year={2019},
  publisher={Association for Computational Linguistics}
}

@inproceedings{sajjadi2018precision,
  title={Assessing Generative Models via Precision and Recall},
  author={Sajjadi, Mehdi S. M. and Bachem, Olivier and Lucic, Mario and Bousquet, Olivier and Gelly, Sylvain},
  booktitle={Advances in Neural Information Processing Systems},
  volume={31},
  pages={5228--5237},
  year={2018},
  publisher={Curran Associates}
}

@inproceedings{chong2020effectively,
  title={Effectively Unbiased {FID} and Inception Score and Where to Find Them},
  author={Chong, Min Jin and Forsyth, David},
  booktitle={Proceedings of the IEEE/CVF Conference on Computer Vision and Pattern Recognition},
  pages={6070--6079},
  year={2020},
  publisher={IEEE}
}

@inproceedings{piening2026novel,
  title={A novel sliced fused Gromov-Wasserstein distance},
  author={Piening, Moritz and Beinert, Robert},
  booktitle={Proceedings of the AAAI Conference on Artificial Intelligence},
  volume={40},

  pages={24828--24836},
  year={2026}
}

@article{yachimura2024scegot,
  title={scEGOT: single-cell trajectory inference framework based on entropic {G}aussian mixture optimal transport},
  author={Yachimura, Toshiaki and Wang, Hanbo and Imoto, Yusuke and Yoshida, Momoko and Tasaki, Sohei and Kojima, Yoji and Yabuta, Yukihiro and Saitou, Mitinori and Hiraoka, Yasuaki},
  journal={BMC bioinformatics},
  volume={25},
  number={1},
  pages={388},
  year={2024},
  publisher={Springer}
}

@inproceedings{
haviv2025wasserstein,
title={Wasserstein Flow Matching: {G}enerative Modeling Over Families of Distributions},
author={Doron Haviv and Aram-Alexandre Pooladian and Dana Pe'er and Brandon Amos},
booktitle={Forty-second International Conference on Machine Learning},
year={2025},
url={https://openreview.net/forum?id=MRmI68k3gd}
}

@inproceedings{
Holtzman2020The,
title={The Curious Case of Neural Text Degeneration},
author={Ari Holtzman and Jan Buys and Li Du and Maxwell Forbes and Yejin Choi},
booktitle={International Conference on Learning Representations},
year={2020},
url={https://openreview.net/forum?id=rygGQyrFvH}
}

@misc{hu2026elfembeddedlanguageflows,
      title={ELF: Embedded Language Flows}, 
      author={Keya Hu and Linlu Qiu and Yiyang Lu and Hanhong Zhao and Tianhong Li and Yoon Kim and Jacob Andreas and Kaiming He},
      year={2026},
      eprint={2605.10938},
      archivePrefix={arXiv},
      primaryClass={cs.CL},
      url={https://arxiv.org/abs/2605.10938}, 
}

@inproceedings{
springer2025repetition,
title={Repetition Improves Language Model Embeddings},
author={Jacob Mitchell Springer and Suhas Kotha and Daniel Fried and Graham Neubig and Aditi Raghunathan},
booktitle={The Thirteenth International Conference on Learning Representations},
year={2025},
url={https://openreview.net/forum?id=Ahlrf2HGJR}
}

@inproceedings{Ewais_2026,
   title={Just Pass Twice: Efficient Token Classification with LLMs for Zero-Shot NER},
   url={http://dx.doi.org/10.18653/v1/2026.acl-long.526},
   DOI={10.18653/v1/2026.acl-long.526},
   booktitle={Proceedings of the 64th Annual Meeting of the Association for Computational Linguistics (Volume 1: Long Papers)},
   publisher={Association for Computational Linguistics},
   author={Ewais, Ahmed and Hashish, Ahmed and Ali, Amr},
   year={2026},
   pages={11480–11497} }

@article{19ff28b9-64f9-3656-ba40-08326a05748e,
 ISSN = {00063444, 14643510},
 URL = {http://www.jstor.org/stable/2334029},
 author = {Ralph Allan Bradley and Milton E. Terry},
 journal = {Biometrika},
 number = {3/4},
 pages = {324--345},
 publisher = {[Oxford University Press, Biometrika Trust]},
 title = {Rank Analysis of Incomplete Block Designs: I. The Method of Paired Comparisons},
 urldate = {2026-09-04},
 volume = {39},
 year = {1952}
}

@article{austin2021structured,
  title={Structured denoising diffusion models in discrete state-spaces},
  author={Austin, Jacob and Johnson, Daniel D and Ho, Jonathan and Tarlow, Daniel and Van Den Berg, Rianne},
  journal={Advances in Neural Information Processing Systems},
  volume={34},
  pages={17981--17993},
  year={2021}
}

@inproceedings{
    sahoo2025the,
    title={The Diffusion Duality},
    author={Subham Sekhar Sahoo and Justin Deschenaux and Aaron Gokaslan and Guanghan Wang and Justin T Chiu and Volodymyr Kuleshov},
    booktitle={Forty-second International Conference on Machine Learning},
    year={2025},
    url={https://openreview.net/forum?id=9P9Y8FOSOk}
}

@inproceedings{
arriola2025block,
title={Block Diffusion: Interpolating Between Autoregressive and Diffusion Language Models},
author={Marianne Arriola and Aaron Gokaslan and Justin T Chiu and Zhihan Yang and Zhixuan Qi and Jiaqi Han and Subham Sekhar Sahoo and Volodymyr Kuleshov},
booktitle={The Thirteenth International Conference on Learning Representations},
year={2025},
url={https://arxiv.org/abs/2503.09573}
}

@inproceedings{
rout2025anchored,
title={Anchored Diffusion Language Model},
author={Litu Rout and Constantine Caramanis and Sanjay Shakkottai},
booktitle={The Thirty-ninth Annual Conference on Neural Information Processing Systems},
year={2025},
url={https://openreview.net/forum?id=E8adS5srds}
}

@misc{dieleman2022continuous,
      title={Continuous diffusion for categorical data}, 
      author={Sander Dieleman and Laurent Sartran and Arman Roshannai and Nikolay Savinov and Yaroslav Ganin and Pierre H. Richemond and Arnaud Doucet and Robin Strudel and Chris Dyer and Conor Durkan and Curtis Hawthorne and Rémi Leblond and Will Grathwohl and Jonas Adler},
      year={2022},
      eprint={2211.15089},
      archivePrefix={arXiv},
      primaryClass={cs.CL},
      url={https://arxiv.org/abs/2211.15089}, 
}

@misc{gulrajani2023likelihood,
      title={Likelihood-Based Diffusion Language Models}, 
      author={Ishaan Gulrajani and Tatsunori B. Hashimoto},
      year={2023},
      eprint={2305.18619},
      archivePrefix={arXiv},
      primaryClass={cs.CL},
      url={https://arxiv.org/abs/2305.18619}, 
}

@misc{lee2026flowmaplanguagemodels,
      title={Flow Map Language Models: One-step Language Modeling via Continuous Denoising}, 
      author={Chanhyuk Lee and Jaehoon Yoo and Manan Agarwal and Sheel Shah and Jerry Huang and Aditi Raghunathan and Seunghoon Hong and Nicholas M. Boffi and Jinwoo Kim},
      year={2026},
      eprint={2602.16813},
      archivePrefix={arXiv},
      primaryClass={cs.CL},
      url={https://arxiv.org/abs/2602.16813}, 
}

@misc{sahoo2024simple,
      title={Simple and Effective Masked Diffusion Language Models}, 
      author={Subham Sekhar Sahoo and Marianne Arriola and Yair Schiff and Aaron Gokaslan and Edgar Marroquin and Justin T Chiu and Alexander Rush and Volodymyr Kuleshov},
      year={2024},
      eprint={2406.07524},
      archivePrefix={arXiv},
      primaryClass={cs.CL},
      url={https://arxiv.org/abs/2406.07524}, 
}

@misc{lou2024discretediffusionmodelingestimating,
      title={Discrete Diffusion Modeling by Estimating the Ratios of the Data Distribution}, 
      author={Aaron Lou and Chenlin Meng and Stefano Ermon},
      year={2024},
      eprint={2310.16834},
      archivePrefix={arXiv},
      primaryClass={stat.ML},
      url={https://arxiv.org/abs/2310.16834}, 
}

@misc{breton2025neobertnextgenerationbert,
      title={NeoBERT: A Next-Generation BERT}, 
      author={Lola Le Breton and Quentin Fournier and Mariam El Mezouar and John X. Morris and Sarath Chandar},
      year={2025},
      eprint={2502.19587},
      archivePrefix={arXiv},
      primaryClass={cs.CL},
      url={https://arxiv.org/abs/2502.19587}, 
}

@misc{chen2026langflowcontinuousdiffusionrivals,
      title={LangFlow: Continuous Diffusion Rivals Discrete in Language Modeling}, 
      author={Yuxin Chen and Chumeng Liang and Hangke Sui and Ruihan Guo and Chaoran Cheng and Jiaxuan You and Ge Liu},
      year={2026},
      eprint={2604.11748},
      archivePrefix={arXiv},
      primaryClass={cs.CL},
      url={https://arxiv.org/abs/2604.11748}, 
}

@article{lm1b,
  author       = {Ciprian Chelba and
                  Tom{\'{a}}s Mikolov and
                  Mike Schuster and
                  Qi Ge and
                  Thorsten Brants and
                  Phillipp Koehn},
  title        = {One Billion Word Benchmark for Measuring Progress in Statistical Language
                  Modeling},
  journal      = {CoRR},
  volume       = {abs/1312.3005},
  year         = {2013},
  url          = {http://arxiv.org/abs/1312.3005},
  eprinttype   = {arXiv},
  eprint       = {1312.3005},
  bibsource    = {dblp computer science bibliography, https://dblp.org}
}

@misc{Gokaslan2019OpenWeb,
    title={OpenWebText Corpus},
    author={Gokaslan, Aaron and Cohen, Vanya and Pavlick, Ellie and Tellex, Stefanie},
    howpublished={\url{http://Skylion007.github.io/OpenWebTextCorpus}},
    year={2019}
}

@misc{li2022diffusionlm,
      title={Diffusion-LM Improves Controllable Text Generation}, 
      author={Xiang Lisa Li and John Thickstun and Ishaan Gulrajani and Percy Liang and Tatsunori B. Hashimoto},
      year={2022},
      eprint={2205.14217},
      archivePrefix={arXiv},
      primaryClass={cs.CL},
      url={https://arxiv.org/abs/2205.14217}, 
}

@misc{shi2024simplified,
      title={Simplified and Generalized Masked Diffusion for Discrete Data}, 
      author={Jiaxin Shi and Kehang Han and Zhe Wang and Arnaud Doucet and Michalis K. Titsias},
      year={2025},
      eprint={2406.04329},
      archivePrefix={arXiv},
      primaryClass={cs.LG},
      url={https://arxiv.org/abs/2406.04329}, 
}

@misc{gourevitch2026uniform,
      title={Uniform Diffusion Models Revisited: Leave-One-Out Denoiser and Absorbing State Reformulation}, 
      author={Samson Gourevitch and Yazid Janati and Dario Shariatian and Umut Simsekli and Eric Moulines and Eric P. Xing and Alain Durmus},
      year={2026},
      eprint={2605.22765},
      archivePrefix={arXiv},
      primaryClass={cs.LG},
      url={https://arxiv.org/abs/2605.22765}, 
}

@misc{yoo2026selfconditionedflowmaplanguage,
      title={Self-conditioned Flow Map Language Models via Fixed-point Flows}, 
      author={Jaehoon Yoo and Wonjung Kim and Floor Eijkelboom and Chanhyuk Lee and Nicholas M. Boffi and Seunghoon Hong and Jinwoo Kim},
      year={2026},
      eprint={2607.00714},
      archivePrefix={arXiv},
      primaryClass={cs.CL},
      url={https://arxiv.org/abs/2607.00714}, 
}

@inproceedings{
hersche2026softmasked,
title={Soft-Masked Diffusion Language Models},
author={Michael Hersche and Samuel Moor-Smith and Thomas Hofmann and Abbas Rahimi},
booktitle={The Fourteenth International Conference on Learning Representations},
year={2026},
url={https://openreview.net/forum?id=Gba02UMvrG}
}

@inproceedings{
jo2026loopholing,
title={Loopholing Discrete Diffusion: Deterministic Bypass of the Sampling Wall},
author={Mingyu Jo and Jaesik Yoon and Justin Deschenaux and Caglar Gulcehre and Sungjin Ahn},
booktitle={The Fourteenth International Conference on Learning Representations},
year={2026},
url={https://openreview.net/forum?id=wM3Mdk2Pa5}
}
\bibliographystyle{iclr2027_conference}

\newpage
\appendix

\tableofcontents
\setcounter{topnumber}{4}
\setcounter{bottomnumber}{3}
\setcounter{totalnumber}{6}
\renewcommand{\topfraction}{0.92}
\renewcommand{\bottomfraction}{0.85}
\renewcommand{\textfraction}{0.05}
\renewcommand{\bottomfraction}{0.9}
\renewcommand{\floatpagefraction}{0.5}

\newpage
\section{Proofs and Theoretical Details}

\subsection{SOL is a Metric}
\label{Sec:proofs}
\faithfulness*
\begin{proof}
We will show that the injectivity of the last hidden-token representation implies injectivity of $T^\theta$. Then its push forward
$T_\#^\theta$ is injective as well and the assertion follows from the fact that $\text{DSW}$ is a metric on empirical meta-measures, see
\cite{piening2026slicing}.

It remains to prove that $T^\theta$ is indeed injective.
Let $\mathbf x,\mathbf y\in\mathcal V^{\leq L}$ satisfy
$T^\theta(\mathbf x)
=
T^\theta(\mathbf y)$.
By the injectivity of the last hidden-token representation it follows
that the components of 
$(h^\theta_{\mathbf{x} \le 1}, \ldots, h^\theta_{\mathbf{x} \le |\mathbf {x}|}) $
are pairwise distinct and similarly for $\mathbf{y}$. 
By $T^\theta(\mathbf{x}) = T^\theta(\mathbf{y})$,
%assumption, 
we thus have that $|\mathbf{x}| = |\mathbf{y}| = n$.
Further, it follows that %by assumption, 
there exits $j \le n$ such that
$h^\theta_{\mathbf{x} \le n} = h^\theta_{\mathbf{y} \le j}$.
By injectivity of the last hidden-token representation this implies that
$\mathbf{x}_{\le n} =\mathbf{x} = \mathbf{y}_{\le j}$, so that $j=n$ and we are done.
\end{proof}

\subsection{Approximation of the Double-Sliced Wasserstein Distance}
\label{app:proof-dsw-sample-complexity}

Throughout this section, let $X\subset B_R(0)\subset\mathbb R^d$ be
nonempty and Borel for some $R>0$. We equip $\mathcal P_2(X)$ with the
Wasserstein distance $\text{W}$. On the sphere, we use the uniform
measure $\sigma:=\mathcal U(\mathbb S^{d-1})$. On $L_2([0,1])$, we fix
a measure $\gamma\in\mathcal P_2(L_2([0,1]))$ and denote its moments by
\begin{align*}
    m_r(\gamma):=\int\|g\|_{L_2}^r\,\d\gamma(g),
    \qquad r\in\{2,4\}.
\end{align*}
The second moment is finite by assumption, while $m_4(\gamma)=\infty$
is allowed.

Recall from the main text that each pair of slices $(\xi,g)$ defines
the map
\begin{align*}
    M_{\xi,g}\colon\mathcal P_2(X)\to\mathbb R,
    \qquad
    M_{\xi,g}(\mu)=\langle Q_{\mathbf e_\xi\mu},g\rangle_{L_2},
\end{align*}
where $\mathbf e_\xi\mu=(S_\xi)_\#\mu$ is the projection of $\mu$
under $S_\xi(x):=\langle\xi,x\rangle$.
For $\bm\lambda,\bm\eta\in\mathcal P_2(\mathcal P_2(X))$, the
corresponding one-dimensional cost is
\begin{align*}
    c_{\bm\lambda,\bm\eta}(\xi,g)
    :=\text{W}^2\bigl((M_{\xi,g})_\#\bm\lambda,(M_{\xi,g})_\#\bm\eta\bigr),
\end{align*}
and the double-sliced Wasserstein distance reads
\begin{align*}
    \mathrm{DSW}(\bm\lambda,\bm\eta\mid\gamma)
    =\left(\int c_{\bm\lambda,\bm\eta}\,\d(\sigma\otimes\gamma)\right)^{1/2}.
\end{align*}
In practice, the slicing measures are replaced by empirical measures.
For $L_\xi,L_g\geq1$ and slices $\xi_i\in\mathbb S^{d-1}$ and
$g_j\in L_2([0,1])$, we set
\begin{align}
    \sigma_{L_\xi}:=\frac1{L_\xi}\sum_{i=1}^{L_\xi}\delta_{\xi_i},
    \qquad
    \gamma_{L_g}:=\frac1{L_g}\sum_{j=1}^{L_g}\delta_{g_j}.
    \label{eq:dsw-slicing-measures}
\end{align}
The finite-slice approximation is then given by
\begin{align}
    \mathrm{DSW}_{L_\xi,L_g}(\bm\lambda,\bm\eta\mid\gamma)
    :=\left(\int c_{\bm\lambda,\bm\eta}
        \,\d(\sigma_{L_\xi}\otimes\gamma_{L_g})\right)^{1/2}.
    \label{eq:dsw-finite-slice-definition}
\end{align}

We bound the error that arises when both the slicing measures and the
measures $\bm\lambda,\bm\eta$ are replaced by empirical measures. This
is done in two steps.
Proposition~\ref{prop:dsw-projection-approximation} treats the slicing
error for fixed measures, and
Proposition~\ref{prop:dsw-measure-approximation} treats the sampling
error for fixed slices.
Proposition~\ref{prop:dsw-sample-complexity} combines both.
All three results rely on the regularity properties collected in the
following lemma.

\begin{lemma}
\label{lem:double-slice-continuity}
Let $\xi,\xi'\in\mathbb S^{d-1}$, $g,h\in L_2([0,1])$, and
$\mu,\nu\in\mathcal P_2(X)$. Then
\begin{align}
    |M_{\xi,g}(\mu)-M_{\xi',h}(\nu)|
    \leq\|g\|_{L_2}\bigl(
        \text{W}(\mu,\nu)+R\|\xi-\xi'\|_2
    \bigr)+R\|g-h\|_{L_2}.
    \label{eq:double-slice-continuity}
\end{align}
In particular, the map $(\xi,g,\mu)\mapsto M_{\xi,g}(\mu)$ is jointly
continuous, and it holds $|M_{\xi,g}(\mu)|\leq R\|g\|_{L_2}$.
Moreover, for every $\bm\rho\in\mathcal P_2(\mathcal P_2(X))$, the
pushforwards satisfy
\begin{align}
    \text{W}\bigl((M_{\xi,g})_\#\bm\rho,(M_{\xi',h})_\#\bm\rho\bigr)
    \leq R\|g\|_{L_2}\|\xi-\xi'\|_2+R\|g-h\|_{L_2}.
    \label{eq:double-slice-pushforward}
\end{align}
Finally, the function $c_{\bm\lambda,\bm\eta}$ is continuous with
\begin{align}
    0\leq c_{\bm\lambda,\bm\eta}(\xi,g)
    \leq4R^2\|g\|_{L_2}^2.
    \label{eq:dsw-slice-bound}
\end{align}
\end{lemma}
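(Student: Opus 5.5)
The plan is to prove the four assertions of Lemma~\ref{lem:double-slice-continuity} in order. Each later one will follow from \eqref{eq:double-slice-continuity} together with elementary facts about one-dimensional optimal transport.

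\textbf{Step 1: the pointwise estimate.} I would split by the triangle inequality:
\begin{align*}
|M_{\xi,g}(\mu)-M_{\xi',h}(\nu)|
\le |\langle Q_{\mathbf e_\xi\mu}-Q_{\mathbf e_\xi\nu},g\rangle|
+|\langle Q_{\mathbf e_\xi\nu}-Q_{\mathbf e_{\xi'}\nu},g\rangle|
+|\langle Q_{\mathbf e_{\xi'}\nu},g-h\rangle|.
\end{align*}
Each term is then bounded by Cauchy--Schwarz in $L_2$, using that $Q$ is an isometry \eqref{eq:1D WD}.
\begin{itemize}
\item The first term is at most $\|g\|\,\mathrm W(\mathbf e_\xi\mu,\mathbf e_\xi\nu)\le\|g\|\,\mathrm W(\mu,\nu)$. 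This holds because $S_\xi$ is $1$-Lipschitz, so pushing an optimal plan forward by $S_\xi\times S_\xi$ does not increase the cost.
\item The second term is at most $\|g\|\,\mathrm W(\mathbf e_\xi\nu,\mathbf e_{\xi'}\nu)$. I would bound this using the coupling $(S_\xi,S_{\xi'})_\#\nu$, which gives
\begin{align*}
\mathrm W^2(\mathbf e_\xi\nu,\mathbf e_{\xi'}\nu)\le\int|\langle\xi-\xi',x\rangle|^2\,\d\nu(x)\le R^2\|\xi-\xi'\|^2,
\end{align*}
since $\nu$ is supported in $X\subset B_R(0)$.
\item The third term is at most $\|Q_{\mathbf e_{\xi'}\nu}\|_{L_2}\|g-h\|$. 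Here $\|Q_{\mathbf e_{\xi'}\nu}\|_{L_2}^2=\int|\langle\xi',x\rangle|^2\,\d\nu\le R^2$.
\end{itemize}
Taking $h=g$, $\xi'=\xi$, $\nu=\delta_0$ (which has $Q\equiv0$) and $\mathrm W(\mu,\delta_0)\le R$ gives the bound $|M_{\xi,g}(\mu)|\le R\|g\|$. Joint continuity is immediate from the estimate.

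\textbf{Step 2: the pushforward bound.} I would use the coupling $(M_{\xi,g},M_{\xi',h})_\#\bm\rho$ of the two pushforwards. Then
\begin{align*}
\mathrm W^2\le\int|M_{\xi,g}(\mu)-M_{\xi',h}(\mu)|^2\,\d\bm\rho(\mu),
\end{align*}
and Step 1 with $\nu=\mu$ bounds the integrand pointwise by $(R\|g\|\|\xi-\xi'\|+R\|g-h\|)^2$.

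\textbf{Step 3: continuity and the bound on $c_{\bm\lambda,\bm\eta}$.} For the upper bound, both pushforwards are supported in $[-R\|g\|,R\|g\|]$. Any coupling therefore has cost at most $(2R\|g\|)^2$, which gives $c\le4R^2\|g\|^2$.

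For continuity I would use that $\sqrt{c}=\mathrm W(\cdot,\cdot)$ is a metric on $\mathcal P_2(\mathbb R)$. The triangle inequality gives
\begin{align*}
|\sqrt{c(\xi,g)}-\sqrt{c(\xi',h)}|
\le \mathrm W\bigl((M_{\xi,g})_\#\bm\lambda,(M_{\xi',h})_\#\bm\lambda\bigr)
+\mathrm W\bigl((M_{\xi,g})_\#\bm\eta,(M_{\xi',h})_\#\bm\eta\bigr).
\end{align*}
By Step 2 the right-hand side is at most $2R(\|g\|\|\xi-\xi'\|+\|g-h\|)$, which tends to $0$ as $(\xi',h)\to(\xi,g)$. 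Hence $\sqrt c$ is continuous, and so is $c$.

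\textbf{Expected obstacle.} The main subtlety is the middle term of Step 1, where the projection direction $\xi$ changes. This is where the support bound $X\subset B_R(0)$ is genuinely needed. One must use the synchronous coupling $(S_\xi,S_{\xi'})_\#\nu$ rather than trying to compare quantile functions directly. Measurability of the pushforwards is harmless, since $M_{\xi,g}$ is continuous on $\mathcal P_2(X)$.
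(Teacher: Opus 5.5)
Your proposal is correct and follows the paper's proof essentially step for step. It uses the same synchronous coupling $(S_\xi,S_{\xi'})_\#\nu$ for the change of direction, the same coupling $(M_{\xi,g},M_{\xi',h})_\#\bm\rho$ for the pushforward bound, and the same reverse triangle inequality plus support-interval argument for $c_{\bm\lambda,\bm\eta}$. The one small detour is deriving $|M_{\xi,g}(\mu)|\le R\|g\|_{L_2}$ via $\nu=\delta_0$; this is harmless, since your estimate only uses support in $B_R(0)$ even if $0\notin X$, but Cauchy--Schwarz with your bound $\|Q_{\mathbf e_\xi\mu}\|_{L_2}\le R$ gives it directly, as in the paper.
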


\begin{proof}
\emph{Estimates for the quantile functions.}
Set $q_{\xi,\mu}:=Q_{\mathbf e_\xi\mu}$. The quantile representation
of $\text{W}$ on $\mathbb R$ and the triangle inequality in
$L_2([0,1])$ give
\begin{align*}
    \|q_{\xi,\mu}-q_{\xi',\nu}\|_{L_2}
    &\leq\|q_{\xi,\mu}-q_{\xi,\nu}\|_{L_2}
        +\|q_{\xi,\nu}-q_{\xi',\nu}\|_{L_2}\\
    &=\text{W}\bigl((S_\xi)_\#\mu,(S_\xi)_\#\nu\bigr)
        +\text{W}\bigl((S_\xi)_\#\nu,(S_{\xi'})_\#\nu\bigr).
\end{align*}
We bound the two terms separately. Since $S_\xi$ is $1$-Lipschitz,
pushing an optimal coupling of $\mu$ and $\nu$ forward under
$(S_\xi,S_\xi)$ yields
\begin{align*}
    \text{W}\bigl((S_\xi)_\#\mu,(S_\xi)_\#\nu\bigr)
    \leq\text{W}(\mu,\nu).
\end{align*}
For the second term, we use the coupling $(S_\xi,S_{\xi'})_\#\nu$.
Since $\|x\|_2\leq R$ on $X$, this gives
\begin{align*}
    \text{W}\bigl((S_\xi)_\#\nu,(S_{\xi'})_\#\nu\bigr)
    \leq\left(\int_X|\langle\xi-\xi',x\rangle|^2
        \,\d\nu(x)\right)^{1/2}
    \leq R\|\xi-\xi'\|_2.
\end{align*}
In the same way, the norm of each quantile function is bounded by
\begin{align*}
    \|q_{\xi,\mu}\|_{L_2}^2
    =\int_X|\langle\xi,x\rangle|^2\,\d\mu(x)
    \leq R^2.
\end{align*}

\emph{Continuity of $M$.}
We decompose
\begin{align*}
    M_{\xi,g}(\mu)-M_{\xi',h}(\nu)
    =\langle q_{\xi,\mu}-q_{\xi',\nu},g\rangle_{L_2}
    +\langle q_{\xi',\nu},g-h\rangle_{L_2}.
\end{align*}
Applying Cauchy--Schwarz to both terms and inserting the estimates
above proves \eqref{eq:double-slice-continuity}. The joint continuity
and the bound on $|M_{\xi,g}(\mu)|$ follow directly.

\emph{Continuity of the pushforwards.}
Let $\bm\rho\in\mathcal P_2(\mathcal P_2(X))$. Since $M_{\xi,g}$ and
$M_{\xi',h}$ are continuous, the measure
$(M_{\xi,g},M_{\xi',h})_\#\bm\rho$ is a coupling of the two
pushforwards. Using this coupling in the definition of $\text{W}$ gives
\begin{align*}
    \text{W}^2\bigl((M_{\xi,g})_\#\bm\rho,(M_{\xi',h})_\#\bm\rho\bigr)
    \leq\int|M_{\xi,g}(\mu)-M_{\xi',h}(\mu)|^2\,\d\bm\rho(\mu).
\end{align*}
By \eqref{eq:double-slice-continuity} with $\nu=\mu$, the integrand
is bounded by the square of the right-hand side of
\eqref{eq:double-slice-pushforward}. This proves
\eqref{eq:double-slice-pushforward}.

\emph{Properties of $c$.}
The triangle inequality for $\text{W}$ gives
\begin{align*}
    |\text{W}(a,b)-\text{W}(a',b')|
    \leq\text{W}(a,a')+\text{W}(b,b')
\end{align*}
for all $a,b,a',b'\in\mathcal P_2(\mathbb R)$. Together with
\eqref{eq:double-slice-pushforward} for $\bm\rho=\bm\lambda$ and
$\bm\rho=\bm\eta$, this shows that $\sqrt{c_{\bm\lambda,\bm\eta}}$ is
continuous, and hence so is $c_{\bm\lambda,\bm\eta}$.
Finally, both slice laws in the definition of
$c_{\bm\lambda,\bm\eta}(\xi,g)$ are supported in the interval
$[-R\|g\|_{L_2},R\|g\|_{L_2}]$. Their squared Wasserstein distance is
therefore at most the squared length of this interval, which proves
\eqref{eq:dsw-slice-bound}.
\end{proof}

\paragraph{Approximation of the integrals.}
We first keep $\bm\mu,\bm\nu$ fixed and approximate only the slicing
measures $\sigma$ and $\gamma$.

\begin{proposition}
\label{prop:dsw-projection-approximation}
Let $\bm\mu,\bm\nu\in\mathcal P_2(\mathcal P_2(X))$ and assume
$m_4(\gamma)<\infty$. Let the slices $\xi_1,\ldots,\xi_{L_\xi}$ be iid
with law $\sigma$, let $g_1,\ldots,g_{L_g}$ be iid with law $\gamma$,
and let both families be independent. Then
\begin{align}
    \mathbb E_s\left|
        \mathrm{DSW}_{L_\xi,L_g}(\bm\mu,\bm\nu\mid\gamma)
        -\mathrm{DSW}(\bm\mu,\bm\nu\mid\gamma)
    \right|
    \leq2R\left(
        \frac{m_2(\gamma)^2}{L_\xi}
        +\frac{m_4(\gamma)}{L_g}
    \right)^{1/4},
    \label{eq:dsw-projection-approximation}
\end{align}
where $\mathbb E_s$ denotes the expectation over the slices.
\end{proposition}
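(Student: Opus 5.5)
The plan is to reduce the estimate to a second-moment bound for the squared distances, and then to control that bound by a direct covariance count using \eqref{eq:dsw-slice-bound}. Write $A:=\mathrm{DSW}^2(\bm\mu,\bm\nu\mid\gamma)=\int c\,\d(\sigma\otimes\gamma)$ with $c:=c_{\bm\mu,\bm\nu}$. Write
\begin{align*}
A_L:=\mathrm{DSW}^2_{L_\xi,L_g}(\bm\mu,\bm\nu\mid\gamma)=\frac1{L_\xi L_g}\sum_{i=1}^{L_\xi}\sum_{j=1}^{L_g}c(\xi_i,g_j).
\end{align*}
By Lemma~\ref{lem:double-slice-continuity}, $c$ is continuous, hence measurable. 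The bound $0\le c(\xi,g)\le 4R^2\|g\|_{L_2}^2$ together with $m_4(\gamma)<\infty$ makes every moment used below finite. The elementary inequality $|\sqrt a-\sqrt b|\le\sqrt{|a-b|}$ for $a,b\ge0$, followed by Jensen's inequality for the concave map $t\mapsto t^{1/4}$, gives
\begin{align*}
\mathbb E_s|\sqrt{A_L}-\sqrt A|\le \mathbb E_s|A_L-A|^{1/2}\le\bigl(\mathbb E_s|A_L-A|^2\bigr)^{1/4}.
\end{align*}
It therefore suffices to show
\begin{align*}
\mathbb E_s|A_L-A|^2\le16R^4\Bigl(\frac{m_2(\gamma)^2}{L_\xi}+\frac{m_4(\gamma)}{L_g}\Bigr),
\end{align*}
since taking the fourth root of this bound yields exactly the prefactor $2R$ in \eqref{eq:dsw-projection-approximation}.

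Since $\mathbb E_s c(\xi_i,g_j)=A$, $A_L$ is unbiased. I will expand
\begin{align*}
\mathbb E_s|A_L-A|^2=\frac1{L_\xi^2L_g^2}\sum_{i,i',j,j'}\mathrm{Cov}\bigl(c(\xi_i,g_j),c(\xi_{i'},g_{j'})\bigr)
\end{align*}
and sort the index quadruples into three classes. Because $c\ge0$, each covariance is at most the corresponding mixed second moment, which is how every nonzero term will be bounded.

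\emph{Case $i\ne i'$ and $j\ne j'$.} The two summands depend on disjoint independent slices, so the covariance vanishes.

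\emph{Case $i=i'$ and $j\ne j'$.} There are $L_\xi L_g(L_g-1)$ such terms. Conditioning on $\xi$ and using the independence of $g$ and $g'$, each covariance is bounded by
\begin{align*}
\mathbb E\bigl[c(\xi,g)c(\xi,g')\bigr]=\mathbb E_\xi\bigl[(\mathbb E_g c(\xi,g))^2\bigr]\le(4R^2m_2(\gamma))^2.
\end{align*}
The total contribution is at most $16R^4m_2(\gamma)^2/L_\xi$.

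\emph{Case $j=j'$, with $i,i'$ arbitrary.} There are $L_\xi^2L_g$ such terms. Each covariance is bounded by
\begin{align*}
\mathbb E\bigl[c(\xi,g)c(\xi',g)\bigr]\le16R^4\,\mathbb E\|g\|_{L_2}^4=16R^4m_4(\gamma).
\end{align*}
The total contribution is at most $16R^4m_4(\gamma)/L_g$.

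Adding the three cases gives the claimed second-moment bound, and the fourth root then gives \eqref{eq:dsw-projection-approximation}. The main point requiring care is this covariance bookkeeping. In particular, the $L_g^{-1}$ term must absorb the diagonal terms $i=i'$, $j=j'$ together with the terms $i\ne i'$, $j=j'$. This is what forces $m_4(\gamma)$ to appear there, whereas the $L_\xi^{-1}$ term only needs $m_2(\gamma)^2$, because conditionally on $\xi$ the two $g$-factors are averaged independently. The remaining steps are routine applications of Lemma~\ref{lem:double-slice-continuity}.
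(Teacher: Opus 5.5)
Your proposal is correct and essentially matches the paper's proof. Both reduce to the second-moment bound $16R^4\bigl(m_2(\gamma)^2/L_\xi+m_4(\gamma)/L_g\bigr)$ for the squared distances, using $c\le 4R^2\|g\|_{L_2}^2$, and then apply $|\sqrt a-\sqrt b|\le|a-b|^{1/2}$ and Jensen. The only difference is bookkeeping: the paper writes $\widehat I-I=A+B$ (first replacing $\sigma$, then $\gamma$ conditionally on the $\xi_i$) with a vanishing cross term, while you expand the variance into covariances by index class, which gives the same two contributions.
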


\begin{proof}
\emph{Decomposition.}
Write $c:=c_{\bm\mu,\bm\nu}$ and denote the exact and the approximate
integral by
\begin{align*}
    I:=\int c\,\d(\sigma\otimes\gamma),
    \qquad
    \widehat I:=\int c\,\d(\sigma_{L_\xi}\otimes\gamma_{L_g}).
\end{align*}
We replace the two slicing measures one after the other and split
$\widehat I-I=A+B$, where
\begin{align*}
    A:=\int c\,\d\bigl((\sigma_{L_\xi}-\sigma)\otimes\gamma\bigr),
    \qquad
    B:=\int c\,\d\bigl(\sigma_{L_\xi}\otimes(\gamma_{L_g}-\gamma)\bigr).
\end{align*}
The term $A$ replaces $\sigma$ by $\sigma_{L_\xi}$. The term $B$ then
replaces $\gamma$ by $\gamma_{L_g}$.

\emph{A variance identity.}
Let $\rho$ be a probability measure, and let $\rho_m$ be its empirical
approximation based on $m$ iid samples. For every $f\in L_2(\rho)$,
expanding the square gives
\begin{align}
    \mathbb E\left|\int f\,\d(\rho_m-\rho)\right|^2
    =\frac1m\left(\int f^2\,\d\rho
        -\left(\int f\,\d\rho\right)^2\right)
    \leq\frac1m\int f^2\,\d\rho.
    \label{eq:empirical-integral-bound}
\end{align}

\emph{Bounding $A$ and $B$.}
For $A$, we apply \eqref{eq:empirical-integral-bound} to
$\sigma_{L_\xi}$ with the function
\begin{align*}
    \xi\mapsto\int c(\xi,g)\,\d\gamma(g).
\end{align*}
By \eqref{eq:dsw-slice-bound}, this function is bounded by
$4R^2m_2(\gamma)$. Hence
\begin{align*}
    \mathbb E_s A^2
    \leq\frac1{L_\xi}\int
        \left(\int c(\xi,g)\,\d\gamma(g)\right)^2\d\sigma(\xi)
    \leq\frac{16R^4m_2(\gamma)^2}{L_\xi}.
\end{align*}
For $B$, we fix $\xi_1,\ldots,\xi_{L_\xi}$ and apply
\eqref{eq:empirical-integral-bound} to $\gamma_{L_g}$ with the function
\begin{align*}
    g\mapsto\int c(\xi,g)\,\d\sigma_{L_\xi}(\xi).
\end{align*}
By \eqref{eq:dsw-slice-bound}, this function is bounded by
$4R^2\|g\|_{L_2}^2$. Taking the expectation over the $\xi_i$
afterwards gives
\begin{align*}
    \mathbb E_s B^2
    \leq\frac1{L_g}\mathbb E_s\int
        \left(\int c(\xi,g)\,\d\sigma_{L_\xi}(\xi)\right)^2\d\gamma(g)
    \leq\frac{16R^4m_4(\gamma)}{L_g}.
\end{align*}
This is the only place where the fourth moment enters.

\emph{The cross term.}
For every fixed collection of $\xi_i$, the expectation of $B$ with
respect to $g_1,\ldots,g_{L_g}$ vanishes. Since $A$ depends only on
the $\xi_i$, it follows that $\mathbb E_s[AB]=0$. Hence
\begin{align*}
    \mathbb E_s|\widehat I-I|^2
    =\mathbb E_s A^2+\mathbb E_s B^2
    \leq16R^4\left(
        \frac{m_2(\gamma)^2}{L_\xi}
        +\frac{m_4(\gamma)}{L_g}
    \right).
\end{align*}

\emph{Taking square roots.}
For $a,b\geq0$, it holds $|\sqrt a-\sqrt b|\leq|a-b|^{1/2}$. Together
with Jensen's inequality for the concave map $t\mapsto t^{1/4}$, we
obtain
\begin{align*}
    \mathbb E_s|\sqrt{\widehat I}-\sqrt I|
    \leq\mathbb E_s|\widehat I-I|^{1/2}
    \leq\bigl(\mathbb E_s|\widehat I-I|^2\bigr)^{1/4}.
\end{align*}
Inserting the previous bound proves the assertion.
\end{proof}

\paragraph{Approximation of the measures.}
We next keep the slices fixed and approximate the measures
$\bm\mu,\bm\nu$. The key ingredient is the following estimate on
$\mathbb R$.

\begin{lemma}
\label{lem:empirical-one-dimensional-w2}
Let $\alpha\in\mathcal P_2(\mathbb R)$ be supported in $[-B,B]$ for
some $B\geq0$, and let $\alpha_n$ be its empirical measure based on
$n\geq1$ iid samples. Then
\begin{align}
    \mathbb E\,\text{W}^2(\alpha_n,\alpha)
    \leq\frac{2B^2}{\sqrt n}.
    \label{eq:empirical-one-dimensional-w2}
\end{align}
\end{lemma}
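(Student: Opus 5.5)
We use the quantile representation of the one-dimensional Wasserstein distance and the fact that $Q_{\alpha_n}-Q_\alpha$ is uniformly bounded. Since both $\alpha$ and $\alpha_n$ are supported in $[-B,B]$, every quantile function involved takes values in $[-B,B]$. Hence $|Q_{\alpha_n}(s)-Q_\alpha(s)|\le 2B$ for all $s\in(0,1)$, and
\begin{align*}
    \text{W}^2(\alpha_n,\alpha)=\int_0^1|Q_{\alpha_n}(s)-Q_\alpha(s)|^2\,\d s
    \le 2B\int_0^1|Q_{\alpha_n}(s)-Q_\alpha(s)|\,\d s
    = 2B\,\text{W}_1(\alpha_n,\alpha).
\end{align*}
This reduces the claim to showing $\mathbb E\,\text{W}_1(\alpha_n,\alpha)\le B/\sqrt n$.

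For this step we switch to the CDF form of $\text{W}_1$ on the real line. Writing $F$ and $F_n$ for the distribution functions of $\alpha$ and $\alpha_n$, we have
\begin{align*}
    \text{W}_1(\alpha_n,\alpha)=\int_{-B}^{B}|F_n(t)-F(t)|\,\d t,
\end{align*}
and the integrand vanishes outside $[-B,B]$. For each fixed $t$, $nF_n(t)$ is binomial with parameters $n$ and $F(t)$. Fubini and Jensen therefore give
\begin{align*}
    \mathbb E\,|F_n(t)-F(t)|\le\sqrt{F(t)(1-F(t))/n}\le \frac{1}{2\sqrt n}.
\end{align*}
Integrating over an interval of length $2B$ yields $\mathbb E\,\text{W}_1(\alpha_n,\alpha)\le B/\sqrt n$. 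Combining this with the first display gives $\mathbb E\,\text{W}^2(\alpha_n,\alpha)\le 2B\cdot B/\sqrt n=2B^2/\sqrt n$, which is the assertion.

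The only points needing care are two justifications. The first is the identity $\text{W}_1=\int|F_n-F|=\int_0^1|Q_{\alpha_n}-Q_\alpha|$ on $\mathbb R$; both are standard, the CDF and quantile integrals being the same area between graphs. The second is the use of Fubini to exchange expectation and the $t$-integral, which is justified because the integrand is nonnegative and measurable. I expect no serious obstacle: the main design choice is to bound $\text{W}^2$ by $2B\,\text{W}_1$ rather than attempting a direct variance computation for $\text{W}^2$, and this is exactly what produces the $n^{-1/2}$ rate with constant $2B^2$.
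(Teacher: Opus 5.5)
Your proof is correct and matches the paper's argument step for step. You bound $\text{W}^2(\alpha_n,\alpha)\le 2B\int_0^1|Q_{\alpha_n}-Q_\alpha|\,\d s$, rewrite this as $2B\int_{-B}^{B}|F_n-F|\,\d t$, and then combine the binomial variance bound $\mathbb E|F_n(t)-F(t)|\le 1/(2\sqrt n)$ with Tonelli's theorem to obtain $2B^2/\sqrt n$.
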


\begin{proof}
Let $F_n$ and $F$ denote the distribution functions of $\alpha_n$ and
$\alpha$. Both quantile functions take values in $[-B,B]$, so that
$|Q_{\alpha_n}-Q_\alpha|\leq2B$. Using the quantile representation of
$\text{W}$, we obtain
\begin{align*}
    \text{W}^2(\alpha_n,\alpha)
    =\int_0^1|Q_{\alpha_n}-Q_\alpha|^2\,\d s
    \leq2B\int_0^1|Q_{\alpha_n}-Q_\alpha|\,\d s
    =2B\int_{-B}^B|F_n(t)-F(t)|\,\d t.
\end{align*}
The last identity is the representation of the one-dimensional
$1$-Wasserstein distance by distribution functions.
For fixed $t$, the random variable $nF_n(t)$ is binomially distributed
with parameters $n$ and $F(t)$. By Cauchy--Schwarz,
\begin{align*}
    \mathbb E|F_n(t)-F(t)|
    \leq\sqrt{\frac{F(t)(1-F(t))}{n}}
    \leq\frac1{2\sqrt n}.
\end{align*}
Taking expectations in the first estimate and using Tonelli's theorem,
we conclude
\begin{align*}
    \mathbb E\,\text{W}^2(\alpha_n,\alpha)
    \leq2B\int_{-B}^B\mathbb E|F_n(t)-F(t)|\,\d t
    \leq\frac{2B^2}{\sqrt n}.
\end{align*}
\end{proof}

\begin{proposition}
\label{prop:dsw-measure-approximation}
Let $\bm\mu,\bm\nu\in\mathcal P_2(\mathcal P_2(X))$. Let
$\mu_1,\ldots,\mu_n$ be iid with law $\bm\mu$, let
$\nu_1,\ldots,\nu_n$ be iid with law $\bm\nu$, and consider the
empirical measures
\begin{align*}
    \bm\mu_n:=\frac1n\sum_{k=1}^n\delta_{\mu_k},
    \qquad
    \bm\nu_n:=\frac1n\sum_{k=1}^n\delta_{\nu_k}.
\end{align*}
Then, for arbitrary fixed slices in \eqref{eq:dsw-slicing-measures},
it holds
\begin{align}
    \mathbb E_n\left|
        \mathrm{DSW}_{L_\xi,L_g}(\bm\mu_n,\bm\nu_n\mid\gamma)
        -\mathrm{DSW}_{L_\xi,L_g}(\bm\mu,\bm\nu\mid\gamma)
    \right|
    \leq\frac{\sqrt8\,R}{n^{1/4}}
        \left(\frac1{L_g}\sum_{j=1}^{L_g}
            \|g_j\|_{L_2}^2\right)^{1/2},
    \label{eq:dsw-fixed-slice-sampling}
\end{align}
where $\mathbb E_n$ denotes the expectation over the samples.
The same bound holds for $\mathrm{DSW}$ if the empirical second moment
on the right-hand side is replaced by $m_2(\gamma)$.
\end{proposition}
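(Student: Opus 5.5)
The bound in Proposition~\ref{prop:dsw-measure-approximation} is proved by reducing it to the one-dimensional estimate of Lemma~\ref{lem:empirical-one-dimensional-w2}, applied slice by slice. Fix the slices $\xi_1,\ldots,\xi_{L_\xi}$ and $g_1,\ldots,g_{L_g}$, and abbreviate $M_{ij}:=M_{\xi_i,g_j}$.

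\emph{Step 1: reduce to a per-slice error.} The map
\begin{align*}
    (a_{ij})_{i,j}\mapsto\Bigl(\tfrac1{L_\xi L_g}\textstyle\sum_{i,j}a_{ij}^2\Bigr)^{1/2}
\end{align*}
is a weighted Euclidean norm. Put
\begin{align*}
    a_{ij}:=\text{W}\bigl((M_{ij})_\#\bm\mu_n,(M_{ij})_\#\bm\nu_n\bigr),
    \qquad
    b_{ij}:=\text{W}\bigl((M_{ij})_\#\bm\mu,(M_{ij})_\#\bm\nu\bigr).
\end{align*}
The reverse triangle inequality for this norm, followed by the triangle inequality for $\text{W}$ on $\mathbb R$, gives
\begin{align*}
    \bigl|\mathrm{DSW}_{L_\xi,L_g}(\bm\mu_n,\bm\nu_n\mid\gamma)-\mathrm{DSW}_{L_\xi,L_g}(\bm\mu,\bm\nu\mid\gamma)\bigr|
    \le\Bigl(\tfrac1{L_\xi L_g}\textstyle\sum_{i,j}|a_{ij}-b_{ij}|^2\Bigr)^{1/2}.
\end{align*}
Moreover $|a_{ij}-b_{ij}|\le e^\mu_{ij}+e^\nu_{ij}$, where
\begin{align*}
    e^\mu_{ij}:=\text{W}\bigl((M_{ij})_\#\bm\mu_n,(M_{ij})_\#\bm\mu\bigr)
\end{align*}
and $e^\nu_{ij}$ is defined analogously with $\bm\nu$. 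Using $(x+y)^2\le2x^2+2y^2$, the squared error is bounded by $\frac{2}{L_\xi L_g}\sum_{i,j}\bigl((e^\mu_{ij})^2+(e^\nu_{ij})^2\bigr)$.

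\emph{Step 2: identify the empirical pushforward.} The key observation is that $(M_{ij})_\#\bm\mu_n=\frac1n\sum_k\delta_{M_{ij}(\mu_k)}$. Since the $M_{ij}(\mu_k)$ are iid with law $(M_{ij})_\#\bm\mu$, this is exactly the empirical measure of $(M_{ij})_\#\bm\mu$ from $n$ samples. By Lemma~\ref{lem:double-slice-continuity}, this law is supported in $[-B_j,B_j]$ with $B_j=R\|g_j\|_{L_2}$. Lemma~\ref{lem:empirical-one-dimensional-w2} therefore yields
\begin{align*}
    \mathbb E_n(e^\mu_{ij})^2\le\frac{2R^2\|g_j\|_{L_2}^2}{\sqrt n},
\end{align*}
and the same bound holds for $e^\nu_{ij}$.

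\emph{Step 3: combine.} Apply Jensen's inequality, $\mathbb E_n\sqrt{Z}\le\sqrt{\mathbb E_n Z}$, to the bound from Step 1 and insert Step 2:
\begin{align*}
    \mathbb E_n|\cdots|
    \le\Bigl(\frac{2}{L_\xi L_g}\sum_{i,j}\frac{4R^2\|g_j\|_{L_2}^2}{\sqrt n}\Bigr)^{1/2}
    =\frac{\sqrt8\,R}{n^{1/4}}\Bigl(\frac1{L_g}\sum_j\|g_j\|_{L_2}^2\Bigr)^{1/2}.
\end{align*}
This is \eqref{eq:dsw-fixed-slice-sampling}.

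For the exact $\mathrm{DSW}$, the same argument goes through with the empirical averages over slices replaced by the integral against $\sigma\otimes\gamma$. Step 2 then produces $\int\|g\|_{L_2}^2\,\d\gamma=m_2(\gamma)$ in place of the empirical second moment. Measurability of $(\xi,g)\mapsto c$ follows from its continuity (Lemma~\ref{lem:double-slice-continuity}), so Tonelli's theorem lets us exchange expectation and integration.

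The main obstacle is Step 1. One must work with the norm of the vector of per-slice Wasserstein distances, rather than with differences of squared distances, in order to avoid a worse rate. The remaining steps are routine bookkeeping.
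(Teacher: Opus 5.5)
Your proof is correct and takes essentially the paper's route. You use the slice-wise triangle inequality for $\text{W}$ together with Minkowski, identify $(M_{\xi,g})_\#\bm\mu_n$ as the empirical measure of $(M_{\xi,g})_\#\bm\mu$, apply Lemma~\ref{lem:empirical-one-dimensional-w2} with $B=R\|g\|_{L_2}$, and finish with Jensen. The only cosmetic difference is that you merge the $\bm\mu$- and $\bm\nu$-errors via $(x+y)^2\le2x^2+2y^2$ before a single Jensen step. The paper instead first splits into $\mathrm{DSW}_{L_\xi,L_g}(\bm\mu_n,\bm\mu\mid\gamma)+\mathrm{DSW}_{L_\xi,L_g}(\bm\nu_n,\bm\nu\mid\gamma)$ and bounds each term by $\sqrt2\,R\,n^{-1/4}(\cdot)^{1/2}$; both give the same constant $\sqrt8$.

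One imprecision in the exact-$\mathrm{DSW}$ case: Tonelli needs joint measurability in the slices and in the samples $(\mu_1,\ldots,\mu_n,\nu_1,\ldots,\nu_n)$, not merely in $(\xi,g)$. This follows from the joint continuity of $(\xi,g,\mu)\mapsto M_{\xi,g}(\mu)$ in Lemma~\ref{lem:double-slice-continuity}, as the paper states explicitly.
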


\begin{proof}
\emph{Reduction to one measure.}
By the triangle inequality for $\text{W}$ and Minkowski's inequality,
$\mathrm{DSW}_{L_\xi,L_g}$ is a pseudometric on
$\mathcal P_2(\mathcal P_2(X))$. Hence
\begin{align*}
    \left|
        \mathrm{DSW}_{L_\xi,L_g}(\bm\mu_n,\bm\nu_n\mid\gamma)
        -\mathrm{DSW}_{L_\xi,L_g}(\bm\mu,\bm\nu\mid\gamma)
    \right|
    \leq
        \mathrm{DSW}_{L_\xi,L_g}(\bm\mu_n,\bm\mu\mid\gamma)
        +\mathrm{DSW}_{L_\xi,L_g}(\bm\nu_n,\bm\nu\mid\gamma).
\end{align*}
It suffices to bound the expectation of the first term. The second
term is treated in the same way.

\emph{Reduction to one dimension.}
Fix $(\xi,g)$. The pushforward of $\bm\mu_n$ is
\begin{align*}
    (M_{\xi,g})_\#\bm\mu_n
    =\frac1n\sum_{k=1}^n\delta_{M_{\xi,g}(\mu_k)}.
\end{align*}
Since the $M_{\xi,g}(\mu_k)$ are iid with law $(M_{\xi,g})_\#\bm\mu$,
this is the empirical measure of $(M_{\xi,g})_\#\bm\mu$. By
Lemma~\ref{lem:double-slice-continuity}, both measures are supported
in $[-R\|g\|_{L_2},R\|g\|_{L_2}]$.
Lemma~\ref{lem:empirical-one-dimensional-w2} with $B=R\|g\|_{L_2}$
therefore gives
\begin{align*}
    \mathbb E_n\,c_{\bm\mu_n,\bm\mu}(\xi,g)
    \leq\frac{2R^2\|g\|_{L_2}^2}{\sqrt n}.
\end{align*}

\emph{Conclusion.}
By Jensen's inequality for the square root, we obtain
\begin{align*}
    \mathbb E_n\,
        \mathrm{DSW}_{L_\xi,L_g}(\bm\mu_n,\bm\mu\mid\gamma)
    \leq\left(\int
        \mathbb E_n\,c_{\bm\mu_n,\bm\mu}
        \,\d(\sigma_{L_\xi}\otimes\gamma_{L_g})\right)^{1/2}
    \leq\frac{\sqrt2\,R}{n^{1/4}}
        \left(\int\|g\|_{L_2}^2\,\d\gamma_{L_g}(g)\right)^{1/2}.
\end{align*}
Adding the same bound for $\bm\nu$ proves
\eqref{eq:dsw-fixed-slice-sampling}.

\emph{The case of $\mathrm{DSW}$.}
Here we replace $\sigma_{L_\xi}\otimes\gamma_{L_g}$ by
$\sigma\otimes\gamma$ in the argument above. The exchange of
$\mathbb E_n$ and the integral is then justified by Tonelli's theorem.
Indeed, by Lemma~\ref{lem:double-slice-continuity}, the atoms
$M_{\xi,g}(\mu_k)$ and the measure $(M_{\xi,g})_\#\bm\mu$ depend
continuously on $(\xi,g,\mu_1,\ldots,\mu_n)$. Hence the map
\begin{align*}
    (\xi,g,\mu_1,\ldots,\mu_n)\mapsto c_{\bm\mu_n,\bm\mu}(\xi,g)
\end{align*}
is continuous on a product of separable metric spaces and thus jointly
Borel measurable.
\end{proof}

\paragraph{Combined approximation.}
Finally, we approximate the slicing measures and the measures
simultaneously.

\begin{proposition}
\label{prop:dsw-sample-complexity}
Let the assumptions of
Propositions~\ref{prop:dsw-projection-approximation} and
\ref{prop:dsw-measure-approximation} hold, and assume in addition that
the samples are independent of the slices. Then
\begin{align}
    \mathbb E\left|
        \mathrm{DSW}_{L_\xi,L_g}(\bm\mu_n,\bm\nu_n\mid\gamma)
        -\mathrm{DSW}(\bm\mu,\bm\nu\mid\gamma)
    \right|
    \leq
    \frac{\sqrt8\,R\sqrt{m_2(\gamma)}}{n^{1/4}}
    +2R\left(
        \frac{m_2(\gamma)^2}{L_\xi}
        +\frac{m_4(\gamma)}{L_g}
    \right)^{1/4},
    \label{eq:dsw-finite-slice-total}
\end{align}
where $\mathbb E$ denotes the expectation over samples and slices.
\end{proposition}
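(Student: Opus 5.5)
The plan is a triangle-inequality split through the intermediate quantity $\mathrm{DSW}_{L_\xi,L_g}(\bm\mu,\bm\nu\mid\gamma)$, the finite-slice distance between the \emph{population} meta-measures. Pointwise in samples and slices,
\begin{align*}
    \bigl|\mathrm{DSW}_{L_\xi,L_g}(\bm\mu_n,\bm\nu_n\mid\gamma)-\mathrm{DSW}(\bm\mu,\bm\nu\mid\gamma)\bigr|
    \leq
    \bigl|\mathrm{DSW}_{L_\xi,L_g}(\bm\mu_n,\bm\nu_n\mid\gamma)-\mathrm{DSW}_{L_\xi,L_g}(\bm\mu,\bm\nu\mid\gamma)\bigr|
    +\bigl|\mathrm{DSW}_{L_\xi,L_g}(\bm\mu,\bm\nu\mid\gamma)-\mathrm{DSW}(\bm\mu,\bm\nu\mid\gamma)\bigr|.
\end{align*}
I then take the full expectation $\mathbb E$ over samples and slices and bound the two terms separately.

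The second term does not depend on the samples. Its expectation is therefore $\mathbb E_s$ of the slicing error, and Proposition~\ref{prop:dsw-projection-approximation} bounds it directly by $2R\bigl(m_2(\gamma)^2/L_\xi+m_4(\gamma)/L_g\bigr)^{1/4}$.

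For the first term, I use that samples and slices are independent, so the joint law is a product measure. By Fubini, $\mathbb E=\mathbb E_s\mathbb E_n$. For each fixed realization of the slices, Proposition~\ref{prop:dsw-measure-approximation} gives the inner bound $\sqrt8\,R\,n^{-1/4}\bigl(\frac1{L_g}\sum_j\|g_j\|_{L_2}^2\bigr)^{1/2}$. Taking $\mathbb E_s$ and applying Jensen's inequality to the concave square root yields
\begin{align*}
    \mathbb E_s\Bigl(\tfrac1{L_g}\textstyle\sum_j\|g_j\|_{L_2}^2\Bigr)^{1/2}
    \leq\Bigl(\mathbb E_s\tfrac1{L_g}\textstyle\sum_j\|g_j\|_{L_2}^2\Bigr)^{1/2}
    =\sqrt{m_2(\gamma)},
\end{align*}
since the $g_j$ are iid with law $\gamma$. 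This gives the first summand of \eqref{eq:dsw-finite-slice-total}, and adding the two bounds completes the proof.

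The main obstacle is measurability: $\mathbb E_n\mathbb E_s$ must be interchangeable, and the integrand must be jointly measurable in (samples, slices). I would handle this as in the proof of Proposition~\ref{prop:dsw-measure-approximation}. By Lemma~\ref{lem:double-slice-continuity}, $c_{\bm\mu_n,\bm\nu_n}(\xi_i,g_j)$ depends continuously on $(\xi_i,g_j,\mu_1,\ldots,\nu_n)$ on a product of separable metric spaces. Hence $\mathrm{DSW}_{L_\xi,L_g}(\bm\mu_n,\bm\nu_n\mid\gamma)$, a finite average of such terms followed by a square root, is jointly Borel. It is also nonnegative and bounded by $2R\bigl(\frac1{L_g}\sum_j\|g_j\|^2\bigr)^{1/2}$ by \eqref{eq:dsw-slice-bound}, which is integrable because $m_2(\gamma)<\infty$. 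So Tonelli's theorem applies.
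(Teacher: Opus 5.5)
Your proposal is correct and follows the paper's proof essentially step for step. Both split through $\mathrm{DSW}_{L_\xi,L_g}(\bm\mu,\bm\nu\mid\gamma)$ and bound the slicing term by Proposition~\ref{prop:dsw-projection-approximation}. Both then handle the sampling term with Fubini/Tonelli under independence, applying Proposition~\ref{prop:dsw-measure-approximation} for fixed slices and Jensen to reach $\sqrt{m_2(\gamma)}$. Your measurability argument is simply a more explicit version of the paper's reference to the end of Proposition~\ref{prop:dsw-measure-approximation}.
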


\begin{proof}
We insert the intermediate quantity
$\mathrm{DSW}_{L_\xi,L_g}(\bm\mu,\bm\nu\mid\gamma)$ and apply the
triangle inequality. This gives
\begin{align*}
    \mathbb E\left|
        \mathrm{DSW}_{L_\xi,L_g}(\bm\mu_n,\bm\nu_n\mid\gamma)
        -\mathrm{DSW}(\bm\mu,\bm\nu\mid\gamma)
    \right|
    &\leq\mathbb E\left|
        \mathrm{DSW}_{L_\xi,L_g}(\bm\mu_n,\bm\nu_n\mid\gamma)
        -\mathrm{DSW}_{L_\xi,L_g}(\bm\mu,\bm\nu\mid\gamma)
    \right|\\
    &\quad+\mathbb E_s\left|
        \mathrm{DSW}_{L_\xi,L_g}(\bm\mu,\bm\nu\mid\gamma)
        -\mathrm{DSW}(\bm\mu,\bm\nu\mid\gamma)
    \right|.
\end{align*}
The second term is bounded by
Proposition~\ref{prop:dsw-projection-approximation}.

For the first term, joint measurability in slices and samples follows
as at the end of the proof of
Proposition~\ref{prop:dsw-measure-approximation}. Since the samples
are independent of the slices, Fubini's theorem allows us to first
take the expectation over the samples with the slices fixed.
Proposition~\ref{prop:dsw-measure-approximation} then bounds the first
term by
\begin{align*}
    \frac{\sqrt8\,R}{n^{1/4}}\,
    \mathbb E_s\left(\int\|g\|_{L_2}^2\,\d\gamma_{L_g}(g)\right)^{1/2}.
\end{align*}
By Jensen's inequality and since the $g_j$ have law $\gamma$,
\begin{align*}
    \mathbb E_s\left(\int\|g\|_{L_2}^2\,\d\gamma_{L_g}(g)\right)^{1/2}
    \leq\left(\mathbb E_s\int\|g\|_{L_2}^2
        \,\d\gamma_{L_g}(g)\right)^{1/2}
    =\sqrt{m_2(\gamma)}.
\end{align*}
Combining both bounds proves the assertion.
\end{proof}

\paragraph{Remarks.}
\emph{Sample sizes.}
The proofs of Propositions~\ref{prop:dsw-measure-approximation}
and~\ref{prop:dsw-sample-complexity} use neither independence between
the $\mu_k$ and the $\nu_k$ nor equal sample sizes. With $m$ samples
of $\bm\nu$, the factor $\sqrt8\,n^{-1/4}$ in
\eqref{eq:dsw-fixed-slice-sampling} and
\eqref{eq:dsw-finite-slice-total} becomes
\begin{align*}
    \sqrt2\,\bigl(n^{-1/4}+m^{-1/4}\bigr).
\end{align*}

\emph{Slice rates.}
The slice rates $L_\xi^{-1/4}$ and $L_g^{-1/4}$ in
\eqref{eq:dsw-projection-approximation} stem from the estimate
$|\sqrt a-\sqrt b|\leq|a-b|^{1/2}$. This estimate is only tight when
$\mathrm{DSW}(\bm\mu,\bm\nu\mid\gamma)$ is close to zero. For the
squared distance, the proof of
Proposition~\ref{prop:dsw-projection-approximation} yields the Monte
Carlo rate
\begin{align*}
    \mathbb E_s\left|
        \mathrm{DSW}^2_{L_\xi,L_g}(\bm\mu,\bm\nu\mid\gamma)
        -\mathrm{DSW}^2(\bm\mu,\bm\nu\mid\gamma)
    \right|
    \leq4R^2\left(
        \frac{m_2(\gamma)^2}{L_\xi}
        +\frac{m_4(\gamma)}{L_g}
    \right)^{1/2}.
\end{align*}
If $\mathrm{DSW}(\bm\mu,\bm\nu\mid\gamma)>0$, the estimate
$|\sqrt a-\sqrt b|\leq|a-b|/\sqrt b$ transfers this rate to
$\mathrm{DSW}$ itself. More precisely,
\begin{align*}
    \mathbb E_s\left|
        \mathrm{DSW}_{L_\xi,L_g}(\bm\mu,\bm\nu\mid\gamma)
        -\mathrm{DSW}(\bm\mu,\bm\nu\mid\gamma)
    \right|
    \leq\frac{4R^2}{\mathrm{DSW}(\bm\mu,\bm\nu\mid\gamma)}
    \left(
        \frac{m_2(\gamma)^2}{L_\xi}
        +\frac{m_4(\gamma)}{L_g}
    \right)^{1/2}.
\end{align*}

\emph{Sample rate.}
In contrast, the rate in Lemma~\ref{lem:empirical-one-dimensional-w2}
cannot be improved in general. Consider
$\alpha=\frac12(\delta_{-B}+\delta_B)$. Then
\begin{align*}
    \text{W}^2(\alpha_n,\alpha)
    =4B^2\,\Bigl|\alpha_n(\{B\})-\frac12\Bigr|,
\end{align*}
and the expectation of the right-hand side is of order $B^2n^{-1/2}$.

\subsection{Double-Sliced Wasserstein compared vs Mean Hidden Token Sliced Wasserstein}
\label{sec:dsw_mean_sw}

The relation between DSW and mean-SW follows directly from the quantile representation of the inner Wasserstein distance. Let $\mu_x$ denote the token cloud of a document $x$ and define its mean by
\[
    m_x := \int z\,\mathrm d\mu_x(z).
\]
For $\xi\in\mathbb S^{d-1}$, decompose the quantile function of the projected token cloud as
\[
    Q_{\xi_\#\mu_x}(u)
    =
    \langle \xi,m_x\rangle+r_{\xi,x}(u),
    \qquad
    \int_0^1 r_{\xi,x}(u)\,\mathrm du=0.
\]
Hence, for a second slicing function $g\in L_2([0,1])$, the corresponding DSW feature satisfies
\[
    M_{\xi,g}(\mu_x)
    :=
    \bigl\langle Q_{\xi_\#\mu_x},g\bigr\rangle_{L_2}
    =
    a_g\langle\xi,m_x\rangle
    +
    \langle r_{\xi,x},g\rangle_{L_2},
    \qquad
    a_g:=\int_0^1 g(u)\,\mathrm du .
\]
The first term depends only on the centroid of the token cloud, whereas the second term captures its centered projected shape.

In particular, for the constant function $g\equiv1$,
\[
    M_{\xi,1}(\mu_x)=\langle\xi,m_x\rangle,
\]
so restricting the second slicing stage to constant functions recovers mean-SW, up to the chosen normalization. Conversely, if two token clouds satisfy $m_x=m_y$, mean-SW cannot distinguish them, while
\[
    M_{\xi,g}(\mu_x)-M_{\xi,g}(\mu_y)
    =
    \langle r_{\xi,x}-r_{\xi,y},g\rangle_{L_2}
\]
can be nonzero. Thus, DSW adds information precisely through nonconstant test functions that detect differences in the projected quantile shapes. This also motivates the ablation between constant $g$, nonconstant centered $g$, and the full family of cutting functions. We note that the degeneration to constant test functions is already discussed in the original DSW construction.

\subsection{Approximation of SOL}
\label{sec:sott-approximation}

Proposition~\ref{prop} concerns the exact SOL distance.
In practice, we approximate it from finitely many text samples, finitely
many slicing directions, and a finite representation of the second-level
slicing functions. We separate these approximation effects below.

\paragraph{Finite samples and slices.}
Let $\bm\mu_n,\bm\nu_n$ denote empirical meta-measures based on
$n$ independent samples, and let
$\mathrm{DSW}_{L_\xi,L_g}$ denote the estimator obtained by replacing
the two slicing measures by $L_\xi$ and $L_g$ Monte Carlo samples.
Under the assumptions of
Propositions~\ref{prop:dsw-projection-approximation} and
\ref{prop:dsw-measure-approximation},
the sample and slicing approximations converge independently as
$n,L_\xi,L_g\to\infty$.

\paragraph{Finite resolution of the second slicing functions.}
The exact second-level feature is
\[
    M_{\xi,g}(\mu)
    =
    \left\langle
        Q_{\mathbf e_\xi\mu},
        g
    \right\rangle_{L_2([0,1])}.
\]
For $M\geq1$, partition $[0,1]$ into $I_m
    =
    \left(
        \frac{m-1}{M},
        \frac{m}{M}
    \right],$ $s_m=\frac{m-\frac12}{M},$
and define the piecewise-constant approximation $(\Pi_M g)(s)
    =
    g(s_m),$
We keep the empirical quantile itself exact and only replace $g$ by
$\Pi_Mg$. If the token clouds are supported in the ball
$B_R(0)\subset\mathbb R^d$, then
\begin{align}
    \left|
        M_{\xi,g}(\mu)
        -
        M_{\xi,\Pi_Mg}(\mu)
    \right|
    &\leq
    \|Q_{\mathbf e_\xi\mu}\|_{L_2}
    \|g-\Pi_Mg\|_{L_2}
    \nonumber \leq
    R\|g-\Pi_Mg\|_{L_2}.
    \label{eq:dsw-grid-feature-error}
\end{align}
Consequently, if $\mathrm{DSW}_M$ denotes the corresponding distance
with $g$ replaced by $\Pi_Mg$, then
\begin{equation}
    \left|
        \mathrm{DSW}_M(\bm\mu,\bm\nu\mid\gamma)
        -
        \mathrm{DSW}(\bm\mu,\bm\nu\mid\gamma)
    \right|
    \leq
    2R
    \left(
        \mathbb E_{g\sim\gamma}
        \|g-\Pi_Mg\|_{L_2}^2
    \right)^{1/2}.
    \label{eq:dsw-grid-error}
\end{equation}

For the RBF Gaussian process used in our experiments, $k_\lambda(s,t)
    =
    \exp\left(
        -(s-t)^2 /({2\lambda^2)}
    \right),$
we have
\begin{align}
    \mathbb E
    \|g-\Pi_Mg\|_{L_2}^2
    &=
    \sum_{m=1}^M
    \int_{I_m}
    2\left(
        1-
        \exp\left(
            -\frac{(s-s_m)^2}{2\lambda^2}
        \right)
    \right)\,\mathrm ds
    \nonumber
    \leq
    \frac{1}{12\lambda^2M^2},
\end{align}
and hence
\begin{equation}
    \left|
        \mathrm{DSW}_M-\mathrm{DSW}
    \right|
    \leq
    \frac{R}{\sqrt 3\,\lambda M}.
    \label{eq:dsw-grid-rbf-rate}
\end{equation}
Thus the finite grid introduces an $O(M^{-1})$ approximation error for
the RBF second-level slicing measure.

The numerical implementation in
Appendix~\ref{sec:dsw-numerical} realizes
$M_{\xi,\Pi_Mg}$ by integrating the empirical quantile exactly over
each bin. In particular, the finite-$M$ approximation acts only on the
second-level slicing function and not on the empirical quantile itself.
The finite estimator need not itself satisfy identity of
indiscernibles, the metric statement in
Proposition~\ref{prop} concerns the exact SOL distance.
%%%%%%%%%%%%%%%%%%%%%%%%%%%%%%%%%%%%%%%%%%%%%%%%%%%%%%%%%%%%%%%%%%%%%%%%%%%%%%%%%%%%%%%%
\newpage
\section{Extended Related Work}
\subsection{Detailed Description of MAUVE} \label{app:MAUVE}
MAUVE \citep{pillutla2021MAUVE,pillutla2023MAUVE} compares two text distributions after
quantization in a Euclidean latent space. Given samples
$\mathbf x_1,\ldots,\mathbf x_n\sim p_{\mathrm{data}}$ and
$\mathbf y_1,\ldots,\mathbf y_m\sim p_{\mathrm{gen}}$, it proceeds as follows:
\begin{enumerate}
\setlength{\itemsep}{1pt}
    \item \textbf{Embedding.} Map the texts to $\R^d$, classically using the
    last-token representation $\mathbf h^\theta$.
    \item \textbf{Quantization.} Jointly cluster the embedded real and generated
    samples by $K$-means into cells $C_1,\ldots,C_K\subset\R^d$.
    \item \textbf{Histograms.} Form cluster weights
    $$\widetilde p_{\mathrm{data},k}
    :=n^{-1}\sum_{i=1}^n\mathbf 1_{\{\mathbf h^\theta(\mathbf x_i)\in C_k\}}
    \quad 
    \text{and} \quad 
    \widetilde p_{\mathrm{gen},k}
    :=m^{-1}\sum_{i=1}^m\mathbf 1_{\{\mathbf h^\theta(\mathbf y_i)\in C_k\}}.$$
\item \textbf{KL divergence frontier.} For
$\widetilde r_\lambda
:=\lambda\widetilde p_{\mathrm{data}}
 +(1-\lambda)\widetilde p_{\mathrm{gen}}$, define
\[
\gamma_1(\lambda):=
\exp\!\left(-c\sum_{k=1}^K \widetilde p_{\mathrm{gen},k}
\log\frac{\widetilde p_{\mathrm{gen},k}}{\widetilde r_{\lambda,k}}\right),
\quad
\gamma_2(\lambda):=
\exp\!\left(-c\sum_{k=1}^K \widetilde p_{\mathrm{data},k}
\log\frac{\widetilde p_{\mathrm{data},k}}{\widetilde r_{\lambda,k}}\right).
\]
\item \textbf{Aggregation.} MAUVE is the area under the divergence frontier,
completed by the endpoints $(0,1)$ and $(1,0)$ and approximated
from finitely many $\lambda$. Its optimal value is $1$.
\end{enumerate}
MAUVE$^\star$ \citep{pillutla2023MAUVE} uses the same procedure, but replaces the empirical histograms by their Krichevsky--Trofimov smoothed versions,
\[
\widetilde p^\star_k=\frac{N_k+\frac12}{N+\frac K2},
\]
before computing the divergence frontier.
Interestingly, the $K$-means step itself is an optimal $K$-point quantization
problem and can equivalently be formulated as approximation in the
Wasserstein distance \citep{kreitmeier2011optimal}.

\subsection{Overview of Related Work}\label{sec:related work}
\subsubsection{Comparing Text Representations via Latent Geometries}
\paragraph{Latent Text Representations.}
Classical bag-of-words representations describe a document through its token
frequencies \citep{kusner2015word}. Modern transformers instead provide
context-dependent representations. In a causal transformer, the final hidden
state depends on the complete preceding sequence and can, under suitable
assumptions, even uniquely determine it \citep{nikolaou2026language}. In contrast, dedicated
text encoders such as BGE, GTE, and E5
\citep{bge-m3,li2023generaltextembeddingsmultistage,wang2024textembeddingsweaklysupervisedcontrastive}
are primarily optimized by contrastive objectives for semantic similarity,
retrieval, or classification and commonly compress a text into a single dense
vector. This is advantageous for semantic retrieval but less suited to our
goal of detecting arbitrary distributional changes: such representations are
designed to be invariant to semantically minor variations and necessarily
discard part of the token-level distributional structure. In contrast, we use
the complete collection of contextual states of a causal or bidirectional encoder.
\paragraph{Text-generation evaluation metrics.}
Reference-based metrics such as BERTScore \citep{zhang2020bertscore} and
MoverScore \citep{zhao2019moverscore} compare individual generated texts with
paired references using contextual token similarities or optimal transport,
and therefore do not directly measure the fit between two unpaired text
distributions. Distributional metrics address this explicitly.
Fr\'echet BERT Distance (FBD)
\citep{alihosseini2019jointly,xiang2021assessing} fits Gaussian distributions
to pretrained text representations and compares them by the Fr\'echet
distance. This restricts the comparison to first and second moments and
inherits the statistical difficulties of estimating high-dimensional
covariances. Indeed, the analogous FID estimator in vision is known to be
finite-sample biased and typically requires large sample sizes
\citep{heusel2017gans,chong2020effectively,berthet2026mind}.
PRD \citep{sajjadi2018precision,xiang2021assessing} separates quality and
coverage, but its practical estimator quantizes the latent distributions by
$K$-means. MAUVE \citep{pillutla2021MAUVE,pillutla2023MAUVE} similarly
quantizes pretrained language-model representations before comparing them
through divergence frontiers, introducing dependence on both sample and
cluster size. Moreover, its standard last-token representation can be
insensitive to perturbations earlier in the sequence
\citep{he-etal-2023-blind}. 
\cite{pynadath2026generativefrontiersevaluationmatters} argue that single-point generative perplexity can be misleading due to its strong dependence on output entropy, and show that entropy–perplexity generative frontiers as a more principled framework for evaluating diffusion language models.
Finally, Gen.\ PPL and token entropy are
sequence-wise averages rather than discrepancies between distributions and
can remain favorable even under complete corpus-level mode collapse.
\subsubsection{Optimal Transport}
\paragraph{Relation to document optimal transport.}
Previous work has been done on evaluating text via optimal transport. The Word Mover's Distance (WMD) \citep{kusner2015word} represents each document as a bag of pretrained word embeddings and compares two documents by optimal transport between their words. Hierarchical Optimal Topic Transport (HOTT) \citep{yurochkin2019hierarchical} is closer in spirit to our approach: documents are distributions over LDA topics, themselves distributions over words, and are compared by OT on the topic space with WMD as ground distance. Our goal is different: we compare \emph{distributions of documents} for generative-model evaluation, represent each document directly by its distribution of contextual transformer states rather than static word embeddings or learned topics, and use Double Sliced Wasserstein to avoid solving the resulting nested OT problem directly. Thus, while HOTT introduces a closely related hierarchical representation, our hierarchy operates at the corpus level and exploits contextual sequence representations together with slicing for efficient computation.

\paragraph{Optimal transport dataset distances.}
A related line of work compares entire datasets by optimal transport. OTDD
\citep{alvarezmelis2020geometric} represents each label by its
class-conditional feature distribution and uses the Wasserstein distance
between these distributions inside an outer OT problem. To reduce the cost of
the inner transports, the practical Bures-OTDD variant approximates each
class-conditional distribution by a Gaussian, yielding closed-form
$W_2$ distances between labels. More recently, s-OTDD
\citep{nguyen2025sotdd} replaces this nested computation by a sliced
construction based on one-dimensional projections. Unlike the moment-based Gaussian approximation in Bures-OTDD, DSW retains the full class-conditional distributions and has been observed to yield stronger agreement with exact nested transport distances. Note that DSW avoids such moment approximations and produces a stronger correlation with the exact nested transport distance than both Gaussian OTDD and s-OTDD \citep{piening2026slicing}.

\paragraph{Further work on nested optimal transport.}
More generally, several works develop tractable approximations for
Wasserstein-type distances between distributions of distributions. For
Gaussian mixtures, slicing has been developed from different geometric
viewpoints \citep{piening2025slicinggmm,nguyen2026summarizing,bonet2025busemann}.
\cite{PDS2026} further study sliced and linearized
Wasserstein surrogates inside Wasserstein-over-Wasserstein constructions,
while \cite{bonet2025flowing} combine
Wasserstein-over-Wasserstein geometry with tractable MMD objectives based on
Sliced-Wasserstein kernels.
 Applications of hierarchical optimal transport beyond text and dataset comparison include stereo vision \citep{Clerc2026}, graph comparison \citep{piening2026novel}
 and generative modeling  in imaging \citep{pmlr-v97-dukler19a, pmlr-v267-geuter25a} and biology \citep{yachimura2024scegot,haviv2025wasserstein}.
%%%%%%%%%%%%%%%%%%%%%%%%%%%%%%%%%%%%%%%%%%%%%%%%%%%%%%%%%%%%%%%%%%%%%%%%%%%%%%%%%%%%%%%%%%
\newpage
\section{Experimental Setup and Implementation}
\subsection{Datasets and Preprocessing}
We use four English text datasets, with preprocessing adapted to each
  experiment and the training format of the evaluated generators.

  \textbf{OpenWebText (OWT).}
  OpenWebText~\citep{Gokaslan2019OpenWeb} is a webtext corpus used for
  the diffusion-model benchmark, numerical-stability experiments, and
  controlled corruption tests. The main benchmark uses held-out
  GPT-2-tokenized sequences packed into blocks of $1{,}024$ tokens,
  retaining end-of-text boundaries. ELF is evaluated separately against
  a uniform sample from its released T5-tokenized OWT training corpus,
  decoded with native special tokens removed. This reference is
  training-derived rather than held out. Real--real controls compare
  disjoint subsets.

  \textbf{WebText.}
 We used the released WebText test \cite{openai2019gpt2output} split for the GPT-2 model-size,
  generation-length, temperature, and decoding experiments.
 For human-judgment comparison, we reuse MAUVE's released pairwise
  annotations~\citep{pillutla2021MAUVE} and regenerate continuations
  from the first $5{,}000$ test documents. Each prompt contains the
  first $35$ GPT-2 tokens, these are removed from both reference and
  generated texts before continuation-only evaluation, and empty
  generated continuations are discarded.

  \textbf{AG News.}
  AG News~\citep{NIPS2015_250cf8b5} contains news texts labeled
  \textsc{World}, \textsc{Sports}, \textsc{Business}, or \textsc{Sci/Tech}.
  We sample class-controlled mixtures to test mode removal and support
  expansion. Each candidate set contains $5{,}000$ unique examples and
  is disjoint from the reference set, we repeat sampling over five seeds.

  \textbf{LM1B.}
  The One Billion Word Benchmark \cite{lm1b} provides the reference for the MAUVE
  sample-size and cluster-count study. We construct $25{,}000$ sequences
  from its held-out test split using the generator's BERT tokenizer
  and $128$-token packing scheme, preserving the same special-token
  formatting as in generated samples.

  For metric evaluation, decoded texts are tokenized with the embedding
  model's native tokenizer and truncated to the experiment-specific
  horizon, padding positions are excluded. Word-level
  corruptions are applied before tokenization, whereas token-level
  positional perturbations operate on tokenized sequences.

\subsection{Encoding Models}
We evaluate SOL using several frozen transformer backbones.
GPT-2 Large \cite{radford2019language} is our default encoder and also
serves as the reference model for generative perplexity.
We additionally consider the causal models Qwen3-0.6B-Base
\cite{yang2025qwen3technicalreport} and OLMo-2-0425-1B
\cite{olmo20252olmo2furious}, as well as Dream-v0-Base-7B
\cite{ye2025Dream7bdiffusionlarge}, which uses bidirectional attention.
Each model is used to generate the hidden states of the last-layer on the evaluated
text, using its native tokenizer.
Since the representation spaces differ across the backbones, absolute SOL
values are only compared within the same encoder.
Additional comparisons with dedicated text encoders BGE-base
\cite{bge-m3}, GTE-base
\cite{li2023generaltextembeddingsmultistage}, E5-base
\cite{wang2024textembeddingsweaklysupervisedcontrastive}, and NeoBERT
\cite{breton2025neobertnextgenerationbert} are provided in the appendix.
\subsection{SOL Estimator and Hyperparameters}\label{sec:setup}

Unless stated otherwise, we use the estimator described in
Appendix~\ref{sec:dsw-numerical} with the settings in
Table~\ref{tab:sott-hyperparameters}. The same first- and second-level
slicing directions are used for all corpora within a comparison.
We report the ordinary SOL distance, i.e., the square root of the
squared estimator.

\begin{table}[h]
      \centering
      \small
      \begin{tabular}{lccl}
          \toprule
          Parameter & Dream-7B & All other encoders & Description \\
          \midrule
          $L_\xi$ & $8192$ & $1024$ & First-level projection directions \\
          $L_g$   & $8192$ & $1024$ & Second-level GP directions \\
          $M$ & \multicolumn{2}{c}{$64$} & Resolution of second-level functions \\
          $\lambda$ & \multicolumn{2}{c}{$0.1$} & RBF-GP length scale \\
          $\alpha$ & \multicolumn{2}{c}{$d$} & Dimension correction factor \\
          Projection seeds & \multicolumn{2}{c}{$3$} & Independent $\xi$ draws \\
          GP seeds & \multicolumn{2}{c}{$3$} & Independent second-level draws \\
          \bottomrule
      \end{tabular}
      \caption{
      Default SOL estimator parameters by encoder.
      Here, $d$ denotes the encoder's hidden dimension.
      We evaluate all $3\times3$ combinations of projection and GP seeds.
      Experiment-specific deviations are stated in the corresponding
      sections.
      }
      \label{tab:sott-hyperparameters}
  \end{table}

\subsection{Correlation with Human Judgments}
  \label{sec:Human Judges correlation}

  We evaluate agreement with the released human judgments from
  MAUVE~\citep{pillutla2021MAUVE}, using its WebText test data, eight
  model--sampling configurations, and raw pairwise Amazon Mechanical Turk
  annotations for \emph{human-like}, \emph{interesting}, and \emph{sensible}
  text. Following the released analysis notebook, we exclude responses with
  an estimated annotation time (\texttt{Answer.te}) of at most \(25\) seconds,
  retaining \(1{,}698\) of \(3{,}240\) annotation rows.

  \paragraph{Human preference scores.}
For each criterion, we fit a Bradley--Terry model to the pairwise
human judgments. Each configuration $i$ is assigned a latent score
$s_i$, and the probability that configuration $i$ is preferred over
configuration $j$ is modeled as
\[
    \Pr(i \succ j)
    =
    \frac{\exp(s_i)}
         {\exp(s_i)+\exp(s_j)}.
\]
The released pairwise annotations are treated as observed preference
outcomes under this model, and the scores are estimated by maximum
likelihood. 
Higher scores therefore correspond to stronger estimated human
preference.

  \paragraph{Models and sampling.}
  We evaluate four model sizes, each with two sampling methods:
  GPT-2 and GPT-2 Medium with nucleus sampling at \(p=0.9\) and ancestral
  sampling, and GPT-2 Large and GPT-2 XL with nucleus sampling at \(p=0.95\)
  and ancestral sampling. 

  While the annotations are still publicly available, the set of continuations seemed to be incomplete. Therefore, we resample all GPT-2 models with the corresponding sampling strategies. For each of the eight configurations, we regenerate continuations for
  the first \(5{,}000\) WebText test documents using five generation seeds.
  Generation uses the first \(35\) GPT-2 tokens as the prompt and a maximum
  total length of \(1{,}024\) tokens, including the prompt. In particular, we regenerate
  all configurations because the
  available released archive contains only GPT-2 and GPT-2 Large samples,
  whose metadata records \(10\)-token prompts, not $35$. Our rerun follows the
  paper's stated \(35\)-token prompt protocol.

  \paragraph{Text evaluation methods.}
  For each generation seed, we average the full-corpus score over all
  nine combinations of three token-projection seeds and three GP seeds.

  \paragraph{Rank correlation.}
  We measure agreement using Spearman rank correlation between the automatic
  metric scores and the Bradley--Terry scores across the eight configurations.
  We negate SOL scores, using \(-\mathrm{DSW}^2\), so that higher values
  indicate stronger human preference for every metric.

  We report the worst-case correlation \(\rho_{\min}\), following
  \citet{pillutla2023MAUVE}. For each configuration, we compute the mean
  and standard deviation of its metric score across the five generation
  seeds. We then independently shift each mean by either plus or minus
  one standard deviation and report the lowest Spearman correlation
  among the resulting  combinations.

\subsection{OpenWebText Benchmark Configurations}
\label{sec:owt-setup}

\paragraph{Data preprocessing.}
We use two reference families. The GPT-2 reference family (G) follows the OpenWebText preprocessing
of \citet{sahoo2024simple} and uses the final
100,000 OpenWebText documents as a held-out split. Documents are
GPT-2-tokenized, terminated with an end-of-text (EOT) token, concatenated
within preprocessing batches, and divided into 1,022-token chunks.
Adding an EOT at each end gives 1,024-token blocks, incomplete remainders
are discarded. The ELF family (E) \citep{hu2026elfembeddedlanguageflows} uses a uniform sample without replacement
from the released T5-tokenized OpenWebText training corpus.
ELF text decoding removes native special tokens, including EOS, PAD, and
UNK, and retains the release's whitespace cleanup. For the algorithm see Alg. \ref{alg:owt-preprocessing}.

For G, GPT-2-large consumes native IDs where available and retokenized
archived text otherwise, OLMo and Dream consume decoded text. Decoding
from available GPT-2 IDs preserves EOT markers and disables whitespace
cleanup. ELF text is retokenized for every evaluator. Evaluator inputs
are truncated to 1,024 tokens, with padding excluded, this horizon need
not equal 1,024 native generator tokens. Each model comparison uses 5,000
generated sequences and 5,000 rows selected from a 10,000-row reference
pool, real--real controls use disjoint subsets. SOL comparisons are made
within the same reference family and encoder. LDDM uses the same OWT
source and split with different packing-worker boundaries, LangFlow's
historical training split is not independently verified.
\begin{algorithm}[h]
\caption{OpenWebText reference preprocessing}
\label{alg:owt-preprocessing}
\begin{algorithmic}[1]
\Require Generator configuration $G$, SOL encoder $E$, evaluation horizon $H$
\Ensure Reference corpus matched to the generator's training distribution

\State Identify the OpenWebText source and preprocessing used to train $G$
\State Reproduce the corresponding tokenization, packing, and special-token conventions
\State Select held-out examples from the same data distribution whenever available
\If{the SOL encoder $E$ does not consume the generator's native representation}
    \State Decode the reference sequences to text using the generator-specific convention
    \State Retokenize the decoded text with the native tokenizer of $E$
\EndIf
\State Exclude padding positions from the token cloud
\State Use the same preprocessing pipeline for the generated and reference corpora
\end{algorithmic}
\end{algorithm}
 \paragraph{Evaluation and sampling budgets.}
  We denote discrete-diffusion sampling steps by $T$ and continuous
  predictor, integration, or flow-map steps by $S$. For BD3-LM, $T$
  is the maximum number of updates per block. NFE counts neural-network
  evaluations, including correction, initialization, cache-update,
  guidance, and final-decoding calls; hence $S$ and $T$ need not equal NFE.
  PC denotes predictor--corrector, ODE ordinary differential equation,
  and SDE stochastic differential equation. EMA denotes exponential
  moving-average weights, and KV denotes key--value caching.
  In the ELF and starred-model rows, CFG denotes self-conditioning
  guidance, called $w$ or $c$ in the original notation; CFG$=1$ means
  no guidance amplification. External-conditioning guidance is fixed
  to $1$. The parameter $\gamma$ denotes CTM noise mixing or ELF SDE
  churn, according to the sampler.
  We cross three direction seeds with three Gaussian-process seeds
  and report the mean and population standard deviation of the nine
  SOL estimates. Auxiliary entropy and perplexity retain the evaluator
  conventions specified in the table. Training data, model sizes,
  guidance, and sampling budgets differ across configurations.

  \paragraph{AR \cite{sahoo2024simple}.}
  The autoregressive model samples token by token at temperature $1.0$,
  requiring one next-token evaluation per decoding step. For the fixed
  1,024-token output with an initial BOS token, this gives 1,023
  next-token evaluations. The initial BOS token is not sampled.

  \paragraph{MDLM \cite{sahoo2024simple}.}
  The masked diffusion language model uses the cached denoising diffusion
  probabilistic model (DDPM) sampler (\texttt{ddpm\_cache}) with $T=1000$.
  We compare the released FP32 Gumbel sampling implementation with exact
  multinomial categorical sampling under the same sampler configuration.
  Both include final noise removal. Cached predictions can skip model
  calls, so $T=1000$ is not a fixed NFE count.

  \paragraph{DUO \cite{sahoo2025the}.}
  The base model uses ancestral sampling with $T=1024$.
  The distilled model uses $T=32$, with separate configurations for
  ancestral and greedy final noise removal.

  \paragraph{LDDM \cite{jo2026loopholing}.}
  The loopholing discrete diffusion models LDDM-M and LDDM-U use the
  released samplers with $T=1024$, EMA weights, FP64 categorical
  probabilities, and nucleus threshold $p=1.0$.
  A final denoising call gives 1,025 actual model evaluations.

  \paragraph{Soft-Masked \cite{hersche2026softmasked}.}
  The released scratch-trained checkpoint uses top-$3$ soft feedback and
  nucleus threshold $p=0.9$. We evaluate the MDLM sampler and the loop
  sampler of the remasking diffusion model (ReMDM-loop), both
  with $T=1024$ and final noise removal. ReMDM-loop uses
  $\eta=0.02$, $t_{\mathrm{on}}=0.55$, $t_{\mathrm{off}}=0.05$,
  and $\alpha_{\mathrm{on}}=0.9$. Caching can skip model calls, so actual
  NFE is sample dependent.

  \paragraph{ADLM \cite{rout2025anchored}.}
  The anchored diffusion language model uses the released large
  checkpoint with EMA weights. We generate 1,024 native GPT-2 tokens
  using ReMDM-loop with $T=1024$, nucleus threshold $p=0.9$,
  $\eta=0.02$, $t_{\mathrm{on}}=0.55$, $t_{\mathrm{off}}=0.05$,
  and $\alpha_{\mathrm{on}}=0.9$, followed by final argmax noise removal.
  The sampler caches predictions when its token state is unchanged;
  with generation batch size four, the number of model calls can depend
  on the batch. The evaluation uses reference family G.

  \paragraph{BD3-LM \cite{arriola2025block}.}
  Block discrete denoising diffusion language models generate blocks
  autoregressively and denoise within each block. We evaluate the
  released OWT checkpoints with block sizes $b\in\{4,16\}$, denoted
  BD3-LM block 4 and block 16. Each output has 1,024 native GPT-2 tokens,
  including the sampler's initial BOS token. We use the released
  first-hitting sampler with $T=5000$, nucleus threshold $p=0.9$,
  KV caching, and generation batch size one. First-hitting updates
  unmask one token at a time and stop when the current block is complete;
  $T=5000$ is therefore not 5,000 evaluations per block.
  The implementation additionally makes a model call to store the KV
  cache after completing each block. We retain the release's rejection
  rule: after the generated prefix exceeds 256 tokens, reject an attempt
  if the unigram entropy of its last 256 tokens is below $4$ nats, then
  retry. Thus, total generation cost also includes rejected attempts.
  There is no separate final decoder call. These checkpoints use
  reference family B, which retains document EOT separators but omits
  artificial special tokens at the boundaries of training chunks.

  \paragraph{vMF and VP \cite{CKS2026}.}
  The three von Mises--Fisher (vMF) PC configurations with $S=128$
  use the warped predictor--corrector sampler
  (\texttt{pc\_softmax\_warp}), with one corrector update every two
  predictor steps. The non-self-conditioned and base self-conditioned
  configurations use corrector step size $\epsilon=10^{-5}$;
  the additional self-conditioned configuration uses
  $\epsilon=2\times10^{-4}$. Predictor and corrector temperatures are
  both $1.0$. 

  For both vMF and variance-preserving (VP) models, we additionally
  report warped ODE sampling with $S=1024$ and warped
  predictor--corrector sampling with $S=1024$, using one corrector
  update after every predictor step. The latter therefore uses
  512 predictor and 512 corrector evaluations; its corrector step
  size is $10^{-5}$ for vMF and $10^{-3}$ for VP.
  These larger-budget ODE and PC runs use self-conditioning onset
  $0.5$. Finally, the self-conditioned SDE configurations use 128
  warped Euler--Maruyama steps, with noise amplitude $\sigma=0.01$
  for vMF and $\sigma=0.1$ for VP, decay exponent $q=1$, and
  self-conditioning onset $0$. Predictor steps exclude final readout.

  \paragraph{LangFlow \cite{chen2026langflowcontinuousdiffusionrivals}.}
  The released OWT checkpoint uses its native Euler--EDM solver, learned
  Gumbel schedule, and self-conditioning. We evaluate NFE budgets of
  128 and 1,024, each including the final model call used for argmax decoding.

  \paragraph{FLM and FMLM \cite{lee2026flowmaplanguagemodels}.}
  The flow language model (FLM) uses the Euler sampler
  (\texttt{flm\_euler}) with $S=1024$ and 1,024 recorded NFE,
  temperature $1.0$, nucleus threshold $p=1.0$, attention softcap $50$,
  and logit cap $30$, without an additional noise-removal step.
  The flow map language model (FMLM) uses the released flow-map
  sampler with $S\in\{1,2,4\}$, temperature $1.0$, and $\gamma=1$;
  the measured NFE equals the respective step count.

  \paragraph{FMLM$^{*}$ \cite{yoo2026selfconditionedflowmaplanguage}.}
  The released flow-map model distilled through the ELF$^{*}$ teacher
  uses consistency trajectory model (CTM) $\gamma$-sampling with
  $S\in\{1,2,5\}$, each evaluated at $\mathrm{CFG}\in\{1,5\}$.
  We use $\gamma=0.75$ for two steps and $\gamma=1$
  for one or five steps. We also report deterministic flow-map sampling
  with $S\in\{8,128\}$, CFG$=5$, and $\gamma=0$.
  The guidance scale is supplied as model conditioning. The released
  student uses a zero self-conditioning input at every step, without
  carrying predictions between steps. Sampling uses one model call per
  step and a separate final decoding call, giving NFE$=S+1$.
  All configurations use reference family G for SOL evaluation.

  \paragraph{ELF$^{*}$ \cite{yoo2026selfconditionedflowmaplanguage}.}
  The released fixed-point denoiser uses consistency deep equilibrium
  (CDEQ) distillation. We report its deterministic CDEQ sampler with
  $S=128$, $\mathrm{CFG}\in\{1,5\}$, and $\gamma=0$, corresponding to Euler
  integration without stochastic churn. Each step uses one student
  forward pass with zero self-conditioning input at the cold-start
  phase $\tau_0=0.002$, without an inner fixed-point iteration or
  cross-step self-conditioning. Final decoding requires one additional
  model call, giving NFE$=129$. The starred ELF and FMLM checkpoints use
  a GPT-2 tokenizer and GPT-2-large latent encoder; both are evaluated
  against reference G. Our starred-model runs use a deterministic
  logit-normal quantile grid, with location $-1.5$ and scale $0.8$.

  \paragraph{ELF \cite{hu2026elfembeddedlanguageflows}.}
  The original T5-based embedded language flows (ELF), with checkpoints
  ELF-B/M/L, are evaluated against reference family E.
  The 32-step ELF-B configurations use SDE sampling with $\gamma=1.5$
  and $\mathrm{CFG}\in\{1,3\}$.
  Additional ELF-B settings use ODE sampling with $S\in\{128,1024\}$
  and $\gamma=0$, and SDE sampling with $S=128$ and $\gamma=1$,
  all at CFG$=1$.

  We also evaluate ELF-B, ELF-M, and ELF-L with 64 integration steps.
  The shared configuration uses SDE sampling with
  $(\gamma,\mathrm{CFG})=(1,1)$ for all three sizes.
  The calibrated configurations use SDE $(0.5,0.5)$ for B,
  ODE $(0,1.25)$ for M, and SDE $(0.75,1)$ for L.
  These were selected in a 256-sample pilot by minimizing generative
  perplexity among settings within $0.02$ nats of the reference token
  entropy.

  These original ELF runs use randomly sampled, sorted logit-normal
  time grids with location $-1.5$ and scale $0.8$, and a separate final
  decoding call. The reported guidance scales are self-conditioning
  guidance; the external-conditioning guidance scale is fixed to $1$.
  Integration steps exclude any additional guidance and final-decoding
  evaluations.

  The separately retrained GPT-2 ELF-B teacher from the ELF$^{*}$ release
  uses reference family G. Its additional run uses ODE sampling with
  $S=128$, CFG$=1$, $\gamma=0$, and the deterministic logit-normal
  quantile grid, giving 128 denoising calls plus one final decoder call.

\subsection{Runtime}
  \begin{table}[h]
      \centering
      \small
      \setlength{\tabcolsep}{3pt}
      \begin{tabular}{c|cccccccccccc}
          \toprule
          $K=L$
          & 8 & 16 & 32 & 64 & 128 & 256 & 512 & 1024 & 2048 & 4096 & 8192 & 16384 \\
          \midrule
          Time [s]
          & 0.38 & 0.39 & 0.38 & 0.39 & 0.41 & 0.47 & 0.63 & 1.02 & 2.16 & 5.87 & 18.95 & 61.16 \\
          \bottomrule
      \end{tabular}
      \caption{Median post-hidden DSW runtime across three repeats on a
      single NVIDIA GeForce RTX 5090, with 5,000 texts per arm and $M=64$.
      Hidden states remain on GPU across direction blocks, the initial
      CPU-to-GPU transfer is included.}
      \label{tab:dsw-runtime-kl}
  \end{table}
We tested the runtime on a single NVIDIA GeForce RTX 5090 using 5,000 samples and report the median over three runs. The DSW runtime is given in Table~\ref{tab:dsw-runtime-kl}. Computing the hidden states takes $71$\,s for GPT-2 Large, $65$\,s for Qwen3-0.6B, $89$\,s for OLMo-2-1B, and $375.8$\,s for Dream-7B.
%%%%%%%%%%%%%%%%%%%%%%%%%%%%%%%%%%%%%%%%%%%%%%%%%%%%%%%%%%%%%%%%%%%%%%%%%%%%%%%%%%%%%%%%%%
\newpage
\section{Additional Ablations}
Because the reference models operate in hidden spaces of different dimensionality, absolute score values are not directly comparable across models, only relative comparisons within each reference model are meaningful.

\subsection{Number of Slices and Samples}\label{subsec:Dream Number of samples}
\begin{wrapfigure}[19]{r}{0.38\textwidth}
    \centering
    \vspace{-2em}

    % ------------------------------------------------
    % First figure
    % ------------------------------------------------
    \includegraphics[width=\linewidth]
        {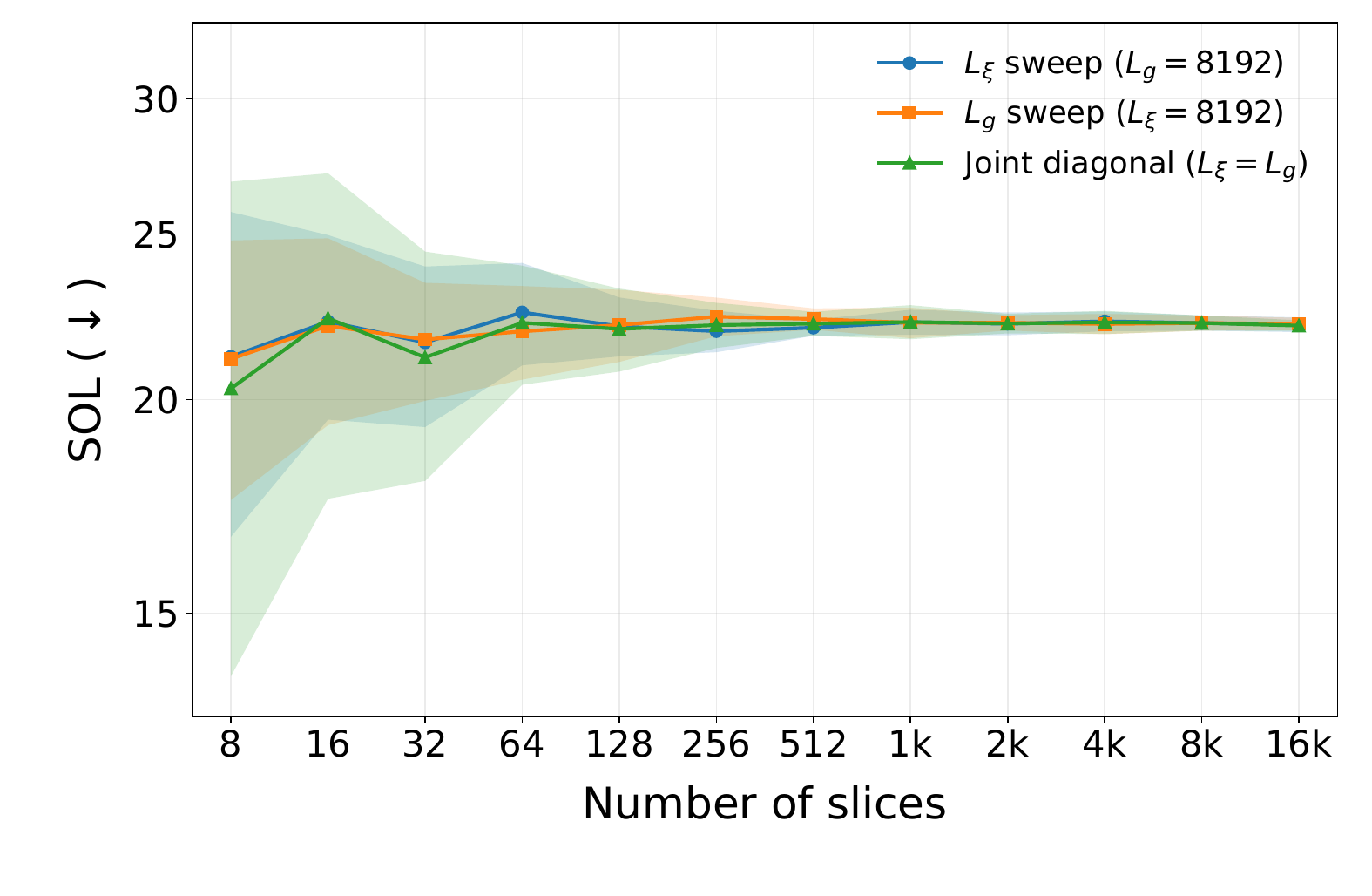}
    \vspace{-8mm}
    \captionof{figure}{\small
        Increasing the number of slices stabilizes SOL, using Dream as embedding model.
    }
    \label{fig:k-l-stabiltiy-Dream}

    \vspace{2mm}

    % ------------------------------------------------
    % Second figure
    % ------------------------------------------------
    \includegraphics[width=\linewidth]
        {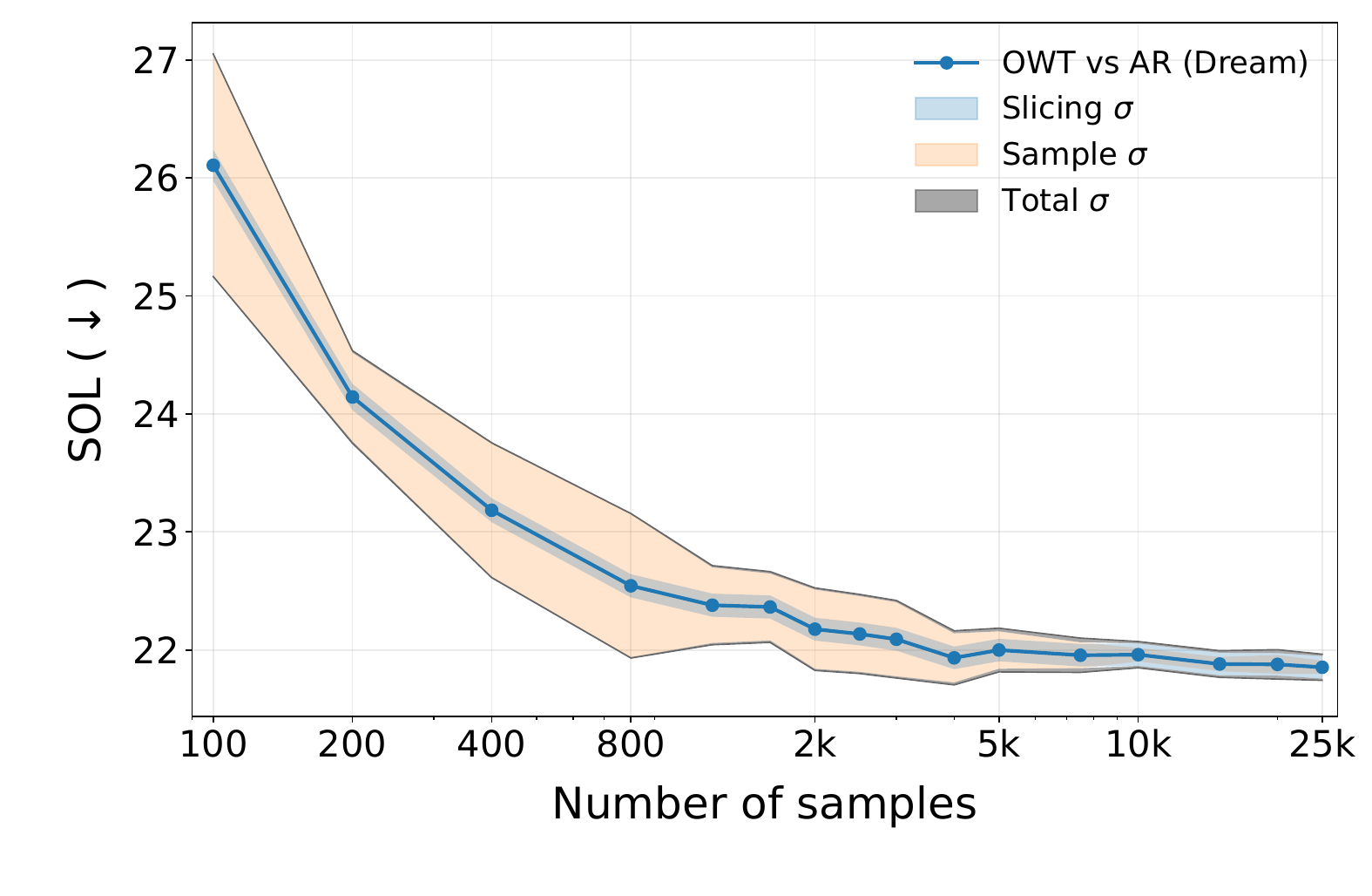}
    \vspace{-8mm}
    \captionof{figure}{\small
        SOL converges with sample size, with std over slices and samples, using Dream as embedding model.
    }
    \label{fig:sample-size-Dream}

    \vspace{-4mm}
\end{wrapfigure}
 As we have compared the GPT-2 model in Section \ref{subsec:Approximation}. We use Dream-v0-Base-7B as the embedding model and compare held-out
  OpenWebText (OWT) texts with autoregressive generations. For the
  slice-count experiment, we fix $5000$ texts per corpus, using
  generations from MDLM's autoregressive baseline and a maximum length
  of $1024$ Dream tokens. SOL stabilizes after a few hundred slices see Figure \ref{fig:k-l-stabiltiy-Dream},
  while variability across projection seeds decreases with increasing
  slice count. Varying the inner and outer slice counts separately or
  jointly yields consistent limiting values.

  For the sample-size experiment, we compared OWT texts with
the same AR model, using the same
  Dream embedding model with a maximum length of $256$ tokens. We fix
  $L_\xi=L_g=8192$ and vary the number of texts per corpus from $100$
  to $25{,}000$. We separate uncertainty due to random projections
  from uncertainty due to data subsampling, reporting both contributions
  alongside the total standard deviation, see Figure \ref{fig:sample-size-Dream}.

\subsection{Quantile Resolution}
\begin{wrapfigure}[12]{r}{0.38\textwidth}
    \centering
    \vspace{-1em}

    % ------------------------------------------------
    % First figure
    % ------------------------------------------------
    \includegraphics[width=\linewidth]
        {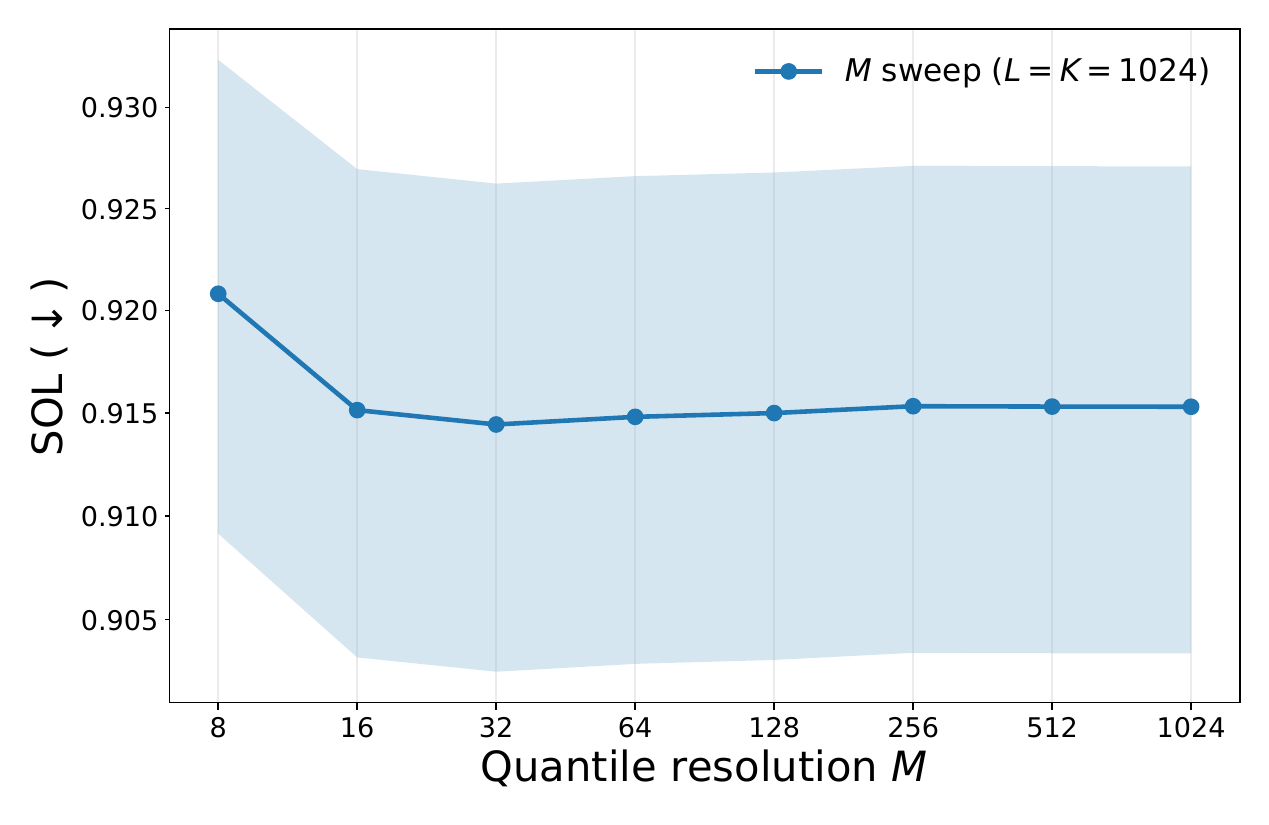}
    \vspace{-8mm}
    \captionof{figure}{\small
        Increasing $M$ for the quantile resolution has a small impact beyond $M=32$.
    }
    \label{fig:M-stabiltiy}

    \vspace{-4mm}
\end{wrapfigure}
We vary $M \in \{8,16,32,64,128,256,512,1024\}$ while fixing
  $L=K=1024$ and using the same 5,000 samples per distribution.
  We use GPT-2-large token embeddings with a 256-token cap and
  cross three direction seeds with eight GP seeds, coupling the
  draws across resolutions.
  Increasing $M$ has only a small effect once the resolution is
  sufficiently large: the plotted SOL values vary by less than
  $0.1\%$ for $M \geq 32$, see Figure \ref{fig:M-stabiltiy}.

  The shaded bands reflect variability across the 24 projection-seed
  pairs, rather than sample resampling. They show the mean squared
  SOL $\pm$ one standard deviation, followed by a square-root
  transformation. Since $L$ and $K$ remain fixed, increasing $M$
  improves quantile resolution without increasing the number of
  random slices, so this variability need not decrease.

\subsection{Gaussian Process Bandwidth}
The RBF length scale $\lambda$ controls the smoothness of the second-level slicing functions. For fixed grid resolution $M$, small $\lambda$ approaches independent Gaussian noise across quantile bins, whereas large $\lambda$ approaches a constant function and hence mean-SW. For $M=64$ and $\lambda=0.01$, adjacent bins have correlation $0.30$, which already drops to $0.008$ at a separation of two bins. At the opposite extreme, for $\lambda=3$, even the two endpoints of $[0,1]$ have correlation $0.95$, so the sampled functions are nearly constant. Our default $\lambda=0.1$ lies between these regimes: one correlation length spans approximately $M\lambda=6.4$ bins.

 \subsection{Mean-SW versus SOL under Structural Perturbations}
\label{app:mean_sw_structural}

As discussed in Appendix~\ref{sec:dsw_mean_sw}, restricting the second slicing
function to $g\equiv 1$ recovers mean-SW at the level of exact quantile
integrals. Mean-SW therefore depends only on the centroid of each token
cloud, whereas nonconstant second-stage functions additionally probe the
centered shape of its projected token distribution. Since mean-SW is already
a strong baseline, we ask whether retaining this additional token-cloud
information increases the separation from the finite-sample real-text
baseline under controlled structural changes.

For each perturbation $c$, we normalize both metrics by their respective
clean real--real distance,
\begin{equation}
    R_D(c)
    :=
    \frac{D(P,Q_c)}
         {D(P,Q_{\mathrm{clean}})}.
    \label{eq:clean_normalized_response}
\end{equation}
We use the ordinary distance, i.e.\ take the square root of the squared
estimator before forming the ratio, and use the same projections in the
numerator and denominator. For the passage-reorganization experiments,
$Q_{\mathrm{clean}}$ denotes the corresponding unperturbed passage
construction.

We repeat the experiment on five disjoint pairs of $5{,}000$ reference and
$5{,}000$ candidate documents, using GPT-2 Large, a horizon of $128$ tokens,
and $L_\xi=L_g=1024$. We use three first-level projection seeds and three GP
seeds for SOL. To compare the relative responses, we report
\begin{equation}
    \Delta(c)
    :=
    \frac{1}{5}\sum_{b=1}^{5}
    \left(
        \mathbb{E}_{\text{sliced-seed}}\log R_{\mathrm{SOL}}(c)
        -
        \mathbb{E}_{\text{sliced-seed}}\log R_{\mathrm{mean\text{-}SW}}(c)
    \right).
\end{equation}
Thus, $e^{\Delta(c)}$ is the geometric-mean relative response of SOL
compared with mean-SW. 

\begin{wraptable}[17]{r}{0.51\textwidth}
    \centering
    \vspace{-1.2\baselineskip}
    \footnotesize
    \setlength{\tabcolsep}{3.2pt}
    \begin{tabular}{@{}p{0.255\textwidth}rrc@{}}
        \toprule
        Perturbation
        & mean-SW
        & SOL
        & $\Delta$ \\
        \midrule
        Selected-word shuffle
        & 10.66 & \textbf{15.75}
        & $0.390$ \\

        Adjacent word swaps
        & 7.09 & \textbf{9.63}
        & $0.305$ \\

        Short-window shuffle
        & 5.63 & \textbf{7.61}
        & $0.301$ \\

        Reverse short spans
        & 7.77 & \textbf{10.81}
        & $0.329$ \\

        Clause permutation
        & 2.61 & \textbf{3.36}
        & $0.252$ \\

        Sentence permutation
        & 2.12 & \textbf{2.60}
        & $0.203$ \\

        Move a phrase
        & 2.14 & \textbf{2.83}
        & $0.277$ \\

        Duplicate/delete repetition
        & \textbf{8.83} & 8.62
        & $-0.024$ \\

        Reorganize passage mixtures
        & 5.82 & \textbf{8.79}
        & $0.413$ \\

        Reorganize repetitions
        & 9.12 & \textbf{10.47}
        & $0.139$ \\
        \bottomrule
    \end{tabular}
     \caption{Clean-normalized ordinary-distance responses of mean-SW and
    SOL under structural perturbations. Response columns are arithmetic
    means of paired ratios. $\Delta$ is the mean paired log-response
    contrast.}
    \label{tab:mean_sw_structural}
\end{wraptable}

SOL has the larger clean-normalized response for nine of the ten
perturbations, see Table \ref{tab:mean_sw_structural}. The difference is strongest for changes in word order and
larger-scale passage organization. For selected-word shuffling,
$\Delta=0.390$, corresponding to
$e^\Delta=1.48$, i.e., a $1.48\times$ larger geometric-mean relative
response for SOL. In contrast, duplicate/delete repetition produces a
slightly larger relative response for mean-SW. Hence, the additional
token-cloud information does not uniformly amplify all perturbations, but
primarily increases the response to changes in within-document contextual
structure.

Since SOL is generally more sensitive to structural perturbations while adding little runtime beyond hidden-state computation, we use SOL in the main experiments

\subsection{Sensitivity to Sampling Temperature}\label{sec:Sensitivity to Sampling Temperature}
\begin{wraptable}[11]{r}{0.63\textwidth}
      \centering
      \vspace{-1.5em}
      \resizebox{0.58\textwidth}{!}{%
      \begin{tabular}{l r ccccc}
          \toprule
          Encoder & Length
          & $t=0.7$ & $t=0.9$ & $t=1.0$ & $t=1.1$ & $t=1.3$ \\
          \midrule
          GPT-2-large
          & 512  & 2.901 & 1.066 & \textbf{0.994} & 1.038 & 1.355 \\
          & 1024 & 2.872 & 1.091 & \textbf{1.014} & 1.070 & 1.434 \\
          \midrule
          Qwen3-0.6B
          & 512  & 6.755 & \textbf{5.085} & 5.204 & 5.440 & 6.332 \\
          & 1024 & 6.779 & \textbf{5.179} & 5.257 & 5.565 & 6.538 \\
          \midrule
          OLMo-2-1B
          & 512  & 7.719 & \textbf{4.051} & 4.864 & 5.947 & 8.211 \\
          & 1024 & 8.562 & \textbf{4.103} & 4.758 & 6.051 & 8.561 \\
          \midrule
          Dream-7B
          & 512  & 27.769 & 21.576 & \textbf{21.090} & 22.066 & 25.889 \\
          & 1024 & 27.229 & 21.495 & \textbf{21.091} & 21.980 & 25.790 \\
          \bottomrule
      \end{tabular}%
      }
      \caption{\small SOL evaluating generated text by GPT-2 Large at different decoding temperatures using differnet embedding models.}
      \label{tab:normal_temperature}
  \end{wraptable}

We evaluate GPT-2-large samples generated at five sampling temperatures
  against WebText, see Table \ref{tab:normal_temperature} .
  All encoders behave as expected, assigning lower SOL scores to
  intermediate temperatures, with slightly different optima:
  GPT-2-large and Dream prefer $t=1.0$, while Qwen3-0.6B and OLMo-2-1B
  prefer $t=0.9$.

\paragraph{Sampling temperature for diffusion models.}
Table~\ref{tab:owt-temperature-sweeps} compare temperature scaling for
vMF~\citep{CKS2026}, Soft-Masked with ReMDM-loop~\citep{hersche2026softmasked},
and MDLM~\citep{sahoo2024simple}. We evaluate SOL using both
Dream-7B~\citep{ye2025Dream7bdiffusionlarge} and
GPT-2 Large~\citep{radford2019language}.
For all three generators, both encoders favor temperature $1.0$.
Importantly, lower temperatures substantially reduce token entropy and can
strongly improve Gen.\ PPL, while producing samples that are farther from the
data distribution according to SOL. Increasing the temperature from this
low-entropy regime therefore improves distributional fit, until excessive
temperature again degrades SOL. This illustrates that improvements in
Gen.\ PPL obtained by reducing sample entropy need not correspond to better
generation.

\begin{table*}[t]
\centering
\footnotesize

% ------------------------------------------------------------------
% Dream
% ------------------------------------------------------------------
\begin{minipage}[t]{0.49\textwidth}
  \centering
  \textbf{(a) Dream-7B}
  \vspace{2pt}

  \resizebox{\linewidth}{!}{%
    \begin{tabular}{@{}clrrr@{}}
      \toprule
      Temp. & Metric
      & vMF
      & \shortstack{Soft-Masked\\+ ReMDM}
      & MDLM \\
      \midrule

      $0.7$ & SOL
      & $48.6981 \pm 0.292$
      & $57.6332 \pm 0.420$
      & $54.3128 \pm 0.404$ \\
      & {\scriptsize Gen.\ PPL ($H$)}
      & {\scriptsize $7.92\;(3.950)$}
      & {\scriptsize $3.24\;(3.490)$}
      & {\scriptsize $7.26\;(3.884)$} \\

      \addlinespace[1pt]
      $0.8$ & SOL
      & $37.9781 \pm 0.160$
      & $41.3254 \pm 0.303$
      & $42.0520 \pm 0.244$ \\
      & {\scriptsize Gen.\ PPL ($H$)}
      & {\scriptsize $15.28\;(4.656)$}
      & {\scriptsize $5.33\;(4.360)$}
      & {\scriptsize $14.18\;(4.657)$} \\

      \addlinespace[1pt]
      $0.9$ & SOL
      & $33.1134 \pm 0.076$
      & $24.7670 \pm 0.161$
      & $37.4379 \pm 0.149$ \\
      & {\scriptsize Gen.\ PPL ($H$)}
      & {\scriptsize $26.36\;(5.054)$}
      & {\scriptsize $9.18\;(4.982)$}
      & {\scriptsize $27.01\;(5.093)$} \\

      \addlinespace[1pt]
      $1.0$ & SOL
      & $\mathbf{31.4917 \pm 0.070}$
      & $\mathbf{18.9293 \pm 0.121}$
      & $\mathbf{36.3186 \pm 0.091}$ \\
      & {\scriptsize Gen.\ PPL ($H$)}
      & {\scriptsize $45.49\;(5.319)$}
      & {\scriptsize $16.85\;(5.343)$}
      & {\scriptsize $59.15\;(5.444)$} \\

      \addlinespace[1pt]
      $1.1$ & SOL
      & $34.4412 \pm 0.172$
      & $23.4306 \pm 0.082$
      & $40.4199 \pm 0.159$ \\
      & {\scriptsize Gen.\ PPL ($H$)}
      & {\scriptsize $79.85\;(5.507)$}
      & {\scriptsize $51.29\;(5.691)$}
      & {\scriptsize $157.49\;(5.766)$} \\

      \addlinespace[1pt]
      $1.2$ & SOL
      & $42.6649 \pm 0.252$
      & $86.6133 \pm 0.505$
      & $49.8600 \pm 0.299$ \\
      & {\scriptsize Gen.\ PPL ($H$)}
      & {\scriptsize $133.33\;(5.618)$}
      & {\scriptsize $3364.94\;(6.544)$}
      & {\scriptsize $540.14\;(6.098)$} \\

      \addlinespace[1pt]
      $1.3$ & SOL
      & $53.8943 \pm 0.305$
      & $111.7374 \pm 0.666$
      & $62.8243 \pm 0.408$ \\
      & {\scriptsize Gen.\ PPL ($H$)}
      & {\scriptsize $206.14\;(5.654)$}
      & {\scriptsize $14157.05\;(6.769)$}
      & {\scriptsize $2096.21\;(6.425)$} \\

      \bottomrule
    \end{tabular}%
  }
\end{minipage}
\hfill
% ------------------------------------------------------------------
% GPT-2 Large
% ------------------------------------------------------------------
\begin{minipage}[t]{0.49\textwidth}
  \centering
  \textbf{(b) GPT-2 Large}
  \vspace{2pt}

  \resizebox{\linewidth}{!}{%
    \begin{tabular}{@{}clrrr@{}}
      \toprule
      Temp. & Metric
      & vMF
      & \shortstack{Soft-Masked\\+ ReMDM}
      & MDLM \\
      \midrule

      $0.7$ & SOL
      & $3.7874 \pm 0.096$
      & $4.9410 \pm 0.148$
      & $4.1873 \pm 0.108$ \\
      & {\scriptsize Gen.\ PPL ($H$)}
      & {\scriptsize $7.92\;(3.950)$}
      & {\scriptsize $3.24\;(3.490)$}
      & {\scriptsize $7.26\;(3.884)$} \\

      \addlinespace[1pt]
      $0.8$ & SOL
      & $2.3689 \pm 0.069$
      & $3.0323 \pm 0.102$
      & $2.8273 \pm 0.075$ \\
      & {\scriptsize Gen.\ PPL ($H$)}
      & {\scriptsize $15.28\;(4.656)$}
      & {\scriptsize $5.33\;(4.360)$}
      & {\scriptsize $14.18\;(4.657)$} \\

      \addlinespace[1pt]
      $0.9$ & SOL
      & $1.6790 \pm 0.042$
      & $1.2886 \pm 0.050$
      & $2.2392 \pm 0.044$ \\
      & {\scriptsize Gen.\ PPL ($H$)}
      & {\scriptsize $26.36\;(5.054)$}
      & {\scriptsize $9.18\;(4.982)$}
      & {\scriptsize $27.01\;(5.093)$} \\

      \addlinespace[1pt]
      $1.0$ & SOL
      & $\mathbf{1.5109 \pm 0.022}$
      & $\mathbf{0.6075 \pm 0.013}$
      & $\mathbf{2.1675 \pm 0.029}$ \\
      & {\scriptsize Gen.\ PPL ($H$)}
      & {\scriptsize $45.49\;(5.319)$}
      & {\scriptsize $16.85\;(5.343)$}
      & {\scriptsize $59.15\;(5.444)$} \\

      \addlinespace[1pt]
      $1.1$ & SOL
      & $1.9666 \pm 0.030$
      & $1.2774 \pm 0.022$
      & $2.8280 \pm 0.036$ \\
      & {\scriptsize Gen.\ PPL ($H$)}
      & {\scriptsize $79.85\;(5.507)$}
      & {\scriptsize $51.29\;(5.691)$}
      & {\scriptsize $157.49\;(5.766)$} \\

      \addlinespace[1pt]
      $1.2$ & SOL
      & $2.9077 \pm 0.049$
      & $6.4263 \pm 0.090$
      & $4.0028 \pm 0.055$ \\
      & {\scriptsize Gen.\ PPL ($H$)}
      & {\scriptsize $133.33\;(5.618)$}
      & {\scriptsize $3364.94\;(6.544)$}
      & {\scriptsize $540.14\;(6.098)$} \\

      \addlinespace[1pt]
      $1.3$ & SOL
      & $4.0478 \pm 0.070$
      & $8.5170 \pm 0.122$
      & $5.3756 \pm 0.079$ \\
      & {\scriptsize Gen.\ PPL ($H$)}
      & {\scriptsize $206.14\;(5.654)$}
      & {\scriptsize $14157.05\;(6.769)$}
      & {\scriptsize $2096.21\;(6.425)$} \\

      \bottomrule
    \end{tabular}%
  }
\end{minipage}

\caption{\footnotesize
Temperature sweep on OpenWebText for vMF, Soft-Masked with ReMDM-loop,
and MDLM. Panels (a) and (b) report SOL using Dream-7B and GPT-2 Large,
respectively. Gen.\ PPL is evaluated with GPT-2 Large and $H$ denotes
mean per-sequence token-frequency entropy in nats; these quantities are
independent of the SOL encoder. SOL uses 5,000 samples per side and a
1,024-token horizon, with mean $\pm$ population SD over nine seed pairs.
Lower SOL is better and bold marks its column minimum. }
\label{tab:owt-temperature-sweeps}
\end{table*}

\subsection{Pollution and Corruption}\label{sec:encoder_robustness}
\begin{figure}[h]
    \centering
    \includegraphics[width=0.85\linewidth]{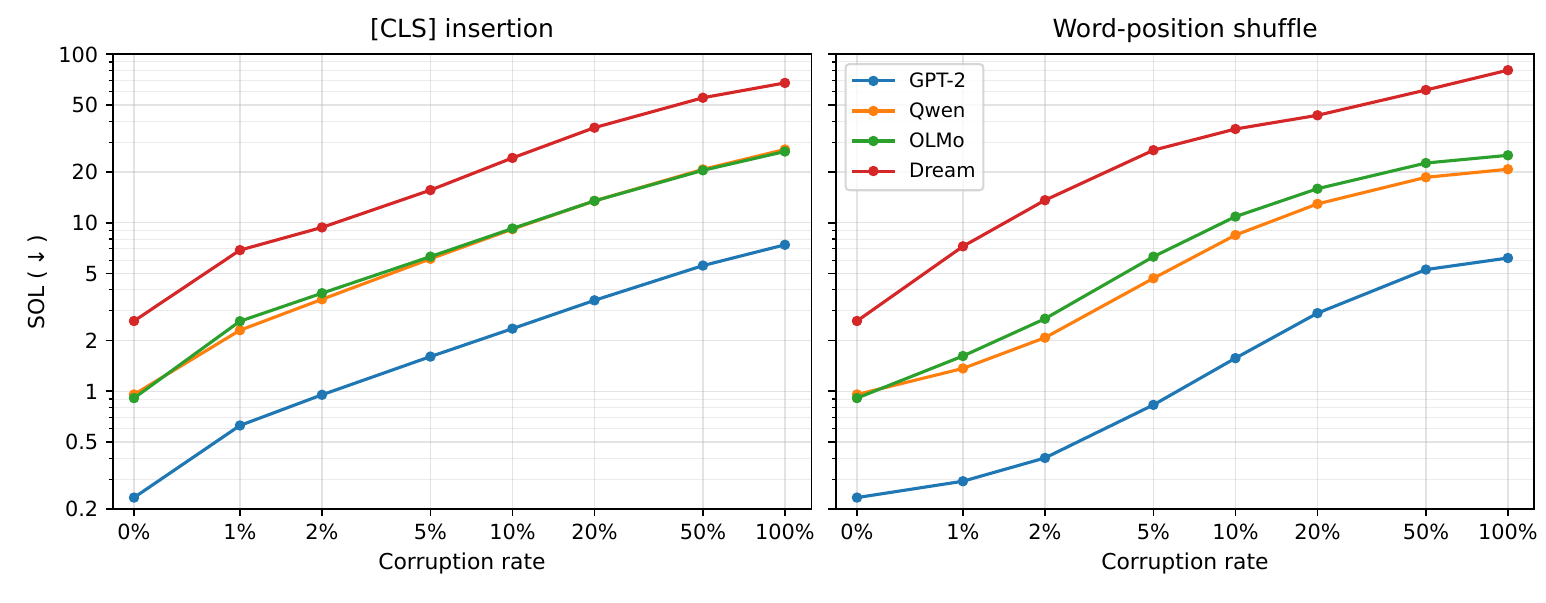}
    \caption{SOL under increasing \texttt{[CLS]} insertion (left) and word-position shuffling (right) for four encoders. For all four language-model encoders, SOL increases monotonically with the corruption rate, and the two corruptions produce responses of similar magnitude. In contrast to the text encoders in Figure~\ref{fig:text-embedding_supp}, no encoder is markedly less sensitive to shuffling than to insertion.}
    \label{fig:text-embedding}
\end{figure}

\begin{wraptable}{r}{0.43\textwidth}
    \vspace{-0.5\baselineskip}
    \centering
    \small
    \setlength{\tabcolsep}{5pt}
    \renewcommand{\arraystretch}{0.95}
    \begin{tabular}{@{}ccc@{}}
        \toprule
        Corruption & \texttt{[CLS]} pollution & Word shuffling \\
        \midrule
          0\% & 0.9702 & 0.9702 \\
          1\% & 0.9367 & 0.9701 \\
          2\% & 0.9009 & 0.9667 \\
          5\% & 0.7888 & 0.9527 \\
         10\% & 0.6559 & 0.8423 \\
         20\% & 0.4558 & 0.3555 \\
         50\% & 0.1977 & 0.0353 \\
        100\% & 0.0545 & 0.0180 \\
        \bottomrule        
    \end{tabular}
    \caption{\small{MAUVE under \texttt{[CLS]} pollution
    and word shuffling using last hidden representation.}}
    \label{tab:MAUVE_corruption}
    \vspace{-1.0\baselineskip}
\end{wraptable}
We further investigate whether the response of DSW to textual perturbations is specific to a particular language-model representation. To this end, we repeat the corruption experiment using four substantially different backbone models: GPT-2-large, Qwen3-0.6B-Base, OLMo-2-1B, and Dream-v0-Base-7B. For all encoders, we use the same split of OpenWebText consisting of $5{,}000$ clean reference documents and $5{,}000$ disjoint candidate documents. Thus, differences between the resulting curves arise from the representation induced by the encoder rather than from different underlying documents.

We consider two types of corruption. For the \emph{insertion} experiment, literal \texttt{[CLS]} strings are inserted at randomly chosen whitespace-word boundaries, where the corruption rate denotes the number of insertions relative to the original number of words. For the \emph{shuffle} experiment, the prescribed fraction of whitespace-word positions is selected and the corresponding words are randomly permuted. Hence, shuffling preserves the word multiset while increasingly disrupting local and global word order. In both cases, we use corruption rates
\[
    0\%,\,1\%,\,2\%,\,5\%,\,10\%,\,20\%,\,50\%,\,100\%.
\]

Importantly, the corruption is applied to the raw text before tokenization. Each corrupted document is subsequently processed using the respective model's own tokenizer and truncated to a horizon of $256$ tokens. We construct the token cloud from the final hidden-state representations without pooling. Consequently, the textual corruption is identical across backbones, whereas differences induced by tokenization, hidden dimension, and truncation remain part of the encoder comparison.

This experiment therefore shows that the continuous corruption response of
DSW is not specific to a single representation space. We emphasize that the
four-backbone comparison itself is DSW-only, the MAUVE results in
Table~\ref{tab:MAUVE_corruption} serve as a separate metric comparison and
were not recomputed for each of the four encoder backbones.

\subsection{Positional Sensitivity}\label{sec: Different Pollution Location}
 \begin{wraptable}[7]{r}{0.35\textwidth}
      \centering
      \vspace{-2.5em}
      \small
      \setlength{\tabcolsep}{3pt}
      \begin{tabular}{@{}lrrr@{}}
        \toprule
        Encoder & Clean & Prefix & Suffix \\
        \midrule
        OLMo-2-1B   & 0.870 & 2.048 & 0.913 \\
        Dream-7B    & 2.493 & \textbf{4.496} & \textbf{4.621} \\
        Qwen3-0.6B  & 0.914 & 1.937 & 0.946 \\
        GPT-2-large & 0.224 & 0.340 & 0.226 \\
        \bottomrule
      \end{tabular}
      \caption{\small{Shuffle
      20\% of the tokens the first or last 5\% of the text.}}
      \label{tab:standard-regional-sensitivity}
  \end{wraptable}

 We shuffle 20\% of the tokens within either the first or last 5\% of
  each sequence.
  Table~\ref{tab:standard-regional-sensitivity} shows a pronounced prefix
 sensitivity for the three causal encoders: prefix shuffling raises their
  scores substantially, while suffix shuffling leaves them close to the
  clean baseline. 
  
  In contrast, the bidirectional Dream encoder responds
  similarly at both locations, with a slightly larger suffix response.
  This is consistent with causal contextualization that amplifies early edits
  through subsequent token states. The comparison concerns positional
  balance within each encoder, since absolute score scales differ across
  encoders.

\subsubsection{Positional bias of last-token SW}
  We evaluate sliced Wasserstein distance on GPT-2-large's final-token hidden states using 5,000 disjoint reference and 5,000 candidate, a 256-token horizon, and 1,024 projection directions.
  We replace or shuffle ten tokens at the beginning, middle, or end, averaging over five data/transformation seeds and three projection seeds.
  For random replacement, the dimension-scaled SW distance increases from a clean baseline of $0.97$ to $2.55$, $3.75$, and $23.92$ for beginning,
  middle, and end corruption, respectively.
  For shuffling, the corresponding scores are $3.50$, $2.65$, and $23.56$.
  After subtracting the clean baseline, end corruption produces approximately $14.5\times$ and $8.9\times$ the beginning-corruption response for
  replacement and shuffling, respectively, demonstrating a strong positional bias toward the final tokens. This means it suffers the same problem as MAUVE, see Figure \ref{fig:MAUVE-random-token-begining}.

\subsection{Mode Coverage}
\begin{wraptable}[8]{r}{0.36\textwidth}
  \centering
  \vspace{-1em}
  \small
  \begin{tabular}{lcc}
    \toprule
    Metric & \(Q_2\!\rightarrow Q_1\) & \(Q_2\!\rightarrow Q_3\) \\
    \midrule
    SOL     & 1.000 & 1.000 \\
    mean-SW & 1.000 & 0.9995 \\
    \bottomrule
  \end{tabular}
  \caption{\small{Spearman correlations with the target share along the
  \(Q_2\!\rightarrow Q_1\) and \(Q_2\!\rightarrow Q_3\) interpolation paths.}}
  \label{tab:agnews-boundary-interpolation}
\end{wraptable}

 AG News \citep{NIPS2015_250cf8b5} contains four classes: \textsc{World}, \textsc{Sports},
  \textsc{Business}, and \textsc{Sci/Tech}. We define the reference distribution
  as \(P=\tfrac{1}{2}\textsc{World}+\tfrac{1}{2}\textsc{Sports}\). The three
  candidate distributions are \(Q_1=\textsc{World}\), representing missing
  diversity, \(Q_2=\tfrac{1}{2}\textsc{World}+\tfrac{1}{2}\textsc{Sports}\),
  which matches the class distribution of \(P\) but contains disjoint examples,
  and \(Q_3=\tfrac{1}{4}(\textsc{World}+\textsc{Sports}+\textsc{Business}
  +\textsc{Sci/Tech})\), representing additional mass outside the reference
  support.

  Holding \(P\) fixed, we interpolate independently from \(Q_2\) toward \(Q_1\)
  and \(Q_3\):
  \[
  Q_{2\rightarrow j}(\lambda)
    =(1-\lambda)Q_2+\lambda Q_j,
    \qquad j\in\{1,3\}.
  \]
  We evaluate
  \(\lambda\in\{0,.01,.02,.05,.10,.20,\ldots,.90,.95,.98,.99,1\}\).
  Each arm contains \(5{,}000\) unique examples and is disjoint from the
  \(5{,}000\)-example reference set. Results are averaged over five independently
  sampled dataset seeds.

  SOL increase monotonically away from the matched distribution \(Q_2\).
  The two paths represent distinct errors: \(Q_1\) removes the \textsc{Sports}
  mode, whereas \(Q_3\) introduces the previously absent \textsc{Business} and
  \textsc{Sci/Tech} modes.
  For mean-sliced Wasserstein, the rank correlation are not always monotone in this test, see Table \ref{tab:agnews-boundary-interpolation}. 
  
\subsection{Alternative to Bidirectional Attention}\label{sec:Alternative to Bidirectional Attention}

\begin{wraptable}{r}{0.58\textwidth}
    \centering
    \vspace{-1.em}
    \resizebox{0.58\textwidth}{!}{%
    \begin{tabular}{l r ccccc}
      \toprule
      Encoder & Length
      & $t=0.7$ & $t=0.9$ & $t=1.0$ & $t=1.1$ & $t=1.3$ \\
      \midrule
      GPT-2-large
      & 512  & 2.141 & 1.299 & 1.138 & 1.035 & \textbf{1.032} \\
      & 1024 & --    & --    & --    & --    & -- \\
      \midrule
      Qwen3-0.6B
      & 512  & 6.083 & 4.310 & \textbf{4.288} & 4.347 & 4.847 \\
      & 1024 & 6.695 & 4.513 & \textbf{4.470} & 4.645 & 5.311 \\
      \midrule
      OLMo-2-1B
      & 512  & 7.822 & 5.452 & 5.016 & \textbf{4.808} & 5.203 \\
      & 1024 & 9.413 & 6.440 & 5.421 & 4.916 & \textbf{4.884} \\
      \bottomrule
    \end{tabular}%
    }
    \caption{\small SOL scores using the
    $[\mathbf x,\mathbf x]$ second-pass representation
    across sampling temperatures.}
    \label{tab:aa_temperature}
  \end{wraptable}

As seen in Section~\ref{sec: Different Pollution Location}, causal transformers prefer changes in the beginning. Bi-directional attention circumvents this problem, but we also test two alternative proposed methods. As proposed in \cite{springer2025repetition} (also used in \cite{Ewais_2026}), one can also repeat the whole sequence $\mathbf x$ and pass  $[\mathbf x, \texttt{<sep>},\mathbf x]$ to make each token conditioned on the whole sequence. %The second options is to reweight the sequence. 

\paragraph{Repeat the sequence.}
  We compare normal encoding $[\mathbf x]$ with repetition $[\mathbf x,\mathbf x]$ and
  EOS-separated repetition $[\mathbf x,\texttt{<sep>},\mathbf x]$ in Table \ref{tab:olmo-repeated-temperature}, using OLMo-2-1B.
  Repeated inputs are encoded in one causal forward pass, and only
  second-copy final-layer states enter the uniform token measure, The EOS
 remains visible to attention, but is not scored and does not reset attention.
   \begin{wraptable}[13]{r}{0.35\textwidth}
    \centering
    \small
    \vspace{-0.em}
    \setlength{\tabcolsep}{3pt}
    \begin{tabular}{@{}lrrr@{}}
      \toprule
      Region & $[\mathbf x]$ & $[\mathbf x,\mathbf x]$ & $[\mathbf x,\texttt{<sep>},\mathbf x]$ \\
      \midrule
      Clean  & 0.910 & 0.732 & 0.853 \\
      Prefix & 1.710 & 1.167 & 1.926 \\
      Middle & 1.335 & 1.347 & 1.375 \\
      Suffix & 0.942 & 1.599 & 0.958 \\
      \bottomrule
    \end{tabular}
    \caption{\small OLMo regional corruption, cap 1,024.
    Mean $\mathrm{SOL}$ for 10\% replacement within a 5\% region,
    taking the square root per seed before averaging,
    clean scores use uncorrupted candidate text.}
    \label{tab:olmo-repeated-regions}
  \end{wraptable}
  All scores average three projection seeds crossed with three GP seeds.
  For regional corruption, we use disjoint 5,000-document reference and
  candidate sets and match the corrupted tokens across representations.

  Table~\ref{tab:olmo-repeated-regions} compares identical corruption
  strengths applied to the prefix, middle, or suffix.
  With standard encoding, prefix corruption increases the score by
  \(88\%\) relative to the clean baseline, compared with only \(4\%\)
  for suffix corruption.
  Repetition without a separator makes these responses more similar
  but reverses their ordering: prefix corruption increases the score
  by \(59\%\), whereas suffix corruption increases it by \(118\%\),
  roughly twice as much.
  EOS-separated repetition retains a strong imbalance, with increases
  of \(126\%\) for prefix corruption and \(12\%\) for suffix corruption.
  The same reversal without a separator occurs at a sequence length
  of \(256\). All three methods respond monotonically to increasing
  replacement strength within these \(5\%\) regions.
  Thus, repetition without a separator reduces positional imbalance
  but does not eliminate it.

 \begin{wraptable}[11]{r}{0.35\textwidth}
    \centering
    \small
    \vspace{-1.em}
    \setlength{\tabcolsep}{3pt}
    \begin{tabular}{@{}lrrr@{}}
      \toprule
      Temp. & $[\mathbf x]$ & $[\mathbf x,\mathbf x]$ & $[\mathbf x,\texttt{<sep>},\mathbf x]$ \\
      \midrule
      $0.7$ & 8.562 & 9.413 & 8.152 \\
      $0.9$ & \textbf{4.103} & 6.440 & \textbf{4.060} \\
      $1.0$ & 4.758 & 5.421 & 4.703 \\
      $1.1$ & 6.051 & 4.916 & 5.931 \\
      $1.3$ & 8.561 & \textbf{4.884} & 8.205 \\
      \bottomrule
    \end{tabular}
    \caption{\small SOL: GPT-2-large generations scored with OLMo at cap 1,024.
    Square root of the mean squared score across seeds.
    Bold marks the minimum within each method.}
    \label{tab:olmo-repeated-temperature}
  \end{wraptable}

 The temperature sweep also reveals a change in ranking
  (Table~\ref{tab:olmo-repeated-temperature}). We score unconditional,
  natural-length GPT-2-large generations against WebText at a maximum
  sequence length of \(1{,}024\).
  Standard encoding and EOS-separated repetition both favor temperature
  \(0.9\), whereas repetition without a separator favors \(1.3\).
  This preference for a higher temperature runs counter to the expectation
  that moderate sampling temperatures provide a better balance between
  coherence and diversity. Repetition also changes the score scale, so
  lower raw scores across methods do not by themselves indicate improvement. Injection of EOS $[\mathbf x,\texttt{<sep>},\mathbf x]$
 restores normal temperature ranking, but sensitivity is not the same as for $[\mathbf x,\mathbf x]$.
 
%\paragraph{Reweight the sequence.}\textcolor{blue}{TBD.}

\subsection{Text Embedding Models} \label{subsec: Text Embedding Models}
\begin{figure}[h]
    \centering
    \includegraphics[width=0.85\linewidth]{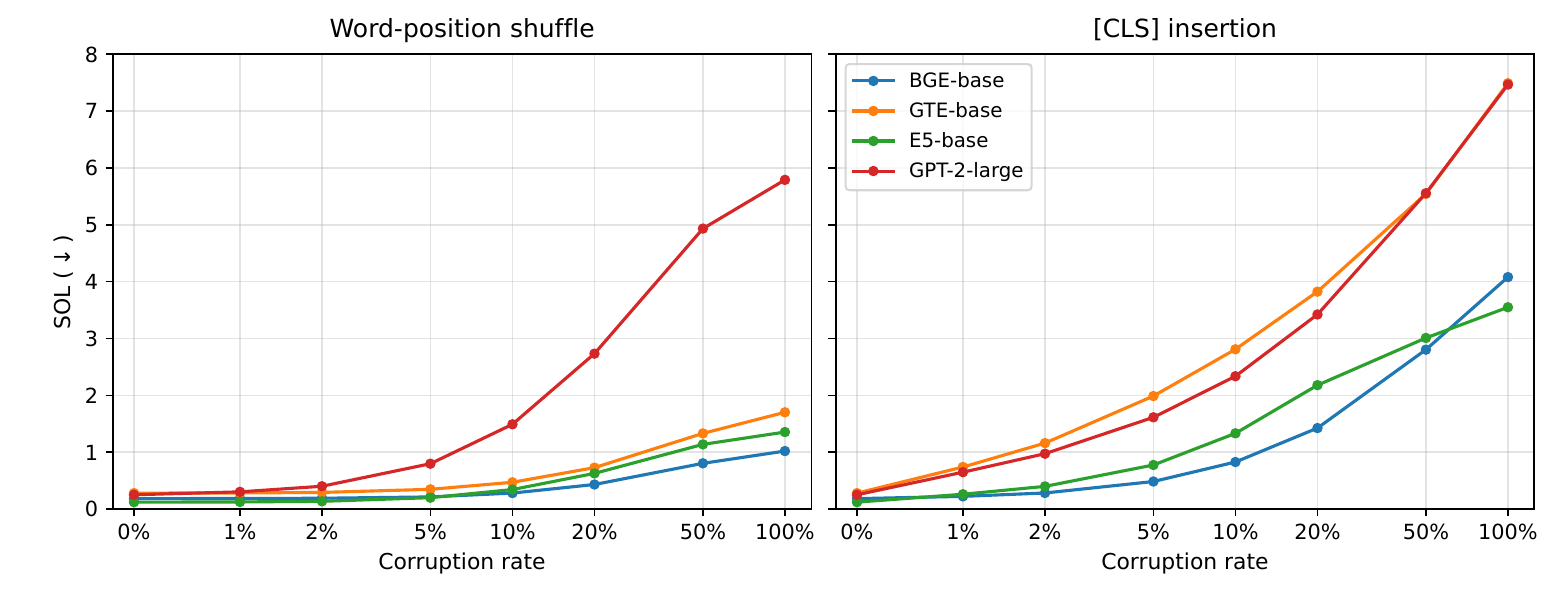}
    \caption{\small{SOL under increasing shuffling of tokens (left) and inserting a CLS token (right). Compared to GPT2-Large, the text encoders are less sensitive to shuffling than GPT-2 Large.} } 
    \label{fig:text-embedding_supp}
\end{figure}
As an alternative to the causal GPT-2 representation, we evaluate modern
text encoders, including BGE-base \cite{bge-m3}, GTE-base
\cite{li2023generaltextembeddingsmultistage}, and E5-base
\cite{wang2024textembeddingsweaklysupervisedcontrastive}.
For this experiment, we split OpenWebText into two disjoint sets of
$5{,}000$ documents, using one as a clean fixed reference and the other
as the candidate distribution. Candidate documents are increasingly
corrupted either by shuffling a fraction of their words or by inserting
the literal string \texttt{[CLS]} into a fraction of word boundaries.
Each corruption level is generated independently of the original clean
candidate set. The resulting texts are retokenized with the native
tokenizer of each encoder and truncated to the first $256$ tokens, and
DSW is computed from all final-layer token representations.

As shown in Figure~\ref{fig:text-embedding_supp}, the text encoders are
substantially less sensitive to small perturbations than GPT-2 Large.
For example, at only $1\%$ word shuffling, DSW$^2$ increases by a factor
of $1.459$ over the clean--clean floor for GPT-2 Large, compared with
only $1.022$, $1.064$, and $1.026$ for GTE, E5, and BGE, respectively.
Thus, the text encoders do detect sufficiently strong corruption, but
resolve small local perturbations much less clearly. This is undesirable
for our purpose, since differences in the underlying text distribution
should already be reflected at low corruption rates. A useful distributional metric should therefore respond at low corruption rates without saturating at high ones. The text encoders fail the first requirement, while mean-pooled MAUVE fails the second (Appendix~\ref{subsec:MAUVE with Mean Hidden Representation}).

\begin{wraptable}{r}{0.25\textwidth}
      \vspace{-1.5em}
      \centering
      \footnotesize
      \setlength{\tabcolsep}{5pt}
      \begin{tabular}{lc}
          \toprule
          Insertion rate & {SOL} $(\downarrow)$ \\
          \midrule
          $20\%$ & 1 \\
          $50\%$ & 0.942 \\
          \bottomrule
      \end{tabular}
      \caption{\small{NeoBERT SOL under increasing \texttt{[CLS]} insertion, normalized to the $20\%$
      value.}}
      \label{tab:neobert-cls}
      \vspace{-0.8em}
  \end{wraptable}

NeoBERT \cite{breton2025neobertnextgenerationbert} exhibits an even stronger failure mode under
\texttt{[CLS]} insertion. As shown in
Table~\ref{tab:neobert-cls}, increasing the corruption rate from
$20\%$ to $50\%$ decreases DSW$^2$ from $2321.64$ to $2060.10$,
although the candidate texts are more heavily corrupted. These results
suggest that representations optimized primarily for text embedding can
be comparatively invariant to precisely the local token-level
perturbations that a distributional text metric should detect.

\newpage
\section{Additional Evaluation Results}
\subsection{Human-Judgment Results}
\label{sec:human-judgment-results}
\begin{table}[h]
  \small
  \centering
  \begin{tabular}{lccc|ccc}
      \toprule
      & \multicolumn{3}{c}{MAUVE}
      & \multicolumn{3}{c}{SOL} \\
      \cmidrule(lr){2-4}
      \cmidrule(lr){5-7}
      Criterion $\uparrow$
      & GPT-2 Large & OLMo2 1B & Dream 7B
      & GPT-2 Large & OLMo2 1B & Dream 7B \\
      \midrule
      Human-like
      & 0.881 & 0.929 & 0.905
      & 0.952 & 0.952 & \textbf{0.976} \\
      Interesting
      & 0.643 & 0.786 & 0.690
      & 0.762 & 0.762 & \textbf{0.857} \\
      Sensible
      & 0.833 & 0.881 & 0.881
      & 0.929 & 0.929 & \textbf{0.976} \\
      \bottomrule
  \end{tabular}
  \caption{\small
  Human judgement comparison at sequence length 256.
  Worst-case Spearman rank correlation between metric-induced model rankings
  and the Bradley--Terry (BT) human ranking.
  Results are shown for MAUVE and SOL using GPT-2 Large,
  OLMo2 1B, and Dream 7B embeddings.
  Reported are the minimum correlations $\rho_{\min}$ across the evaluated
  settings at sequence length 256.
  }
  \label{tab:encoder_human_corr_256}
  \vspace{-0mm}
  \end{table}

Table~\ref{tab:encoder_human_corr_256} reports agreement with the
Bradley--Terry human rankings at a sequence length of $256$ tokens,
matching the maximum amount of text shown in the original human study.
Across all three criteria, SOL attains higher worst-case rank
correlation than MAUVE for each evaluated backbone.
The strongest correlations are obtained with Dream for all three
criteria in this setting.

\subsection{MAUVE Comparison}
\subsubsection{Mean Hidden Representation} \label{subsec:MAUVE with Mean Hidden Representation}
\begin{wraptable}[13]{r}{0.48\textwidth}
    \centering
    \vspace{-2.em}
    \small
    \begin{tabular}{ccc}
        \toprule
        Corruption & \texttt{[CLS]} pollution & Word shuffling \\
        \midrule
        0\%   & 0.9725 & 0.9725 \\
        1\%   & 0.9590 & 0.9710 \\
        2\%   & 0.9183 & 0.9697 \\
        5\%   & \textbf{0.4709} & 0.9597 \\
        10\%  & \textbf{0.0624} & \textbf{0.8236} \\
        20\%  & 0.0181 & \textbf{0.1208 }\\
        50\%  & 0.0127 & 0.0191 \\
        100\% & 0.0118 & 0.0162 \\
        \bottomrule
    \end{tabular}
    \caption{\small{MAUVE under \texttt{[CLS]} pollution and word shuffling using mean pooling.}}
    \label{tab:MAUVE_cls_shuffle_mean}
\end{wraptable}

MAUVE can also be computed on the mean-pooled representation~\eqref{mean_pool} instead of the last-token representation~\eqref{last_hidden}, as in \citet{franca2026hacking}. With mean pooling, however, the score saturates quickly under corruption (Table~\ref{tab:MAUVE_cls_shuffle_mean}). Under \texttt{[CLS]} pollution, MAUVE drops from 0.97 to 0.47 at 5\% and to 0.06 at 10\%, after which it stays near zero (0.018 at 20\%, 0.012 at 100\%). Word shuffling shows the same pattern between 10\% and 20\%. Responding to small corruption is desirable in itself. The problem is that the score has used up its range by 10\%, so it cannot distinguish 10\% from 100\% corrupted text. Once the two samples fall into largely disjoint clusters, the divergence frontier collapses and MAUVE stays at its floor however much worse the text becomes. With the last-token representation, MAUVE degrades more gradually (Table~\ref{tab:MAUVE_corruption}), but it is insensitive to where in the sequence the corruption occurs (Figure~\ref{fig:MAUVE-random-token-begining}). SOL increases monotonically over the full range of corruption rates (Figure~\ref{fig:text-embedding}).
\subsubsection{LM1B Experiment}\label{sec:LM1B MAUVE Experiment}

We evaluated 25,000 samples generated by the vMF model from \citep{CKS2026} using 64 predictor steps and one Langevin corrector step per predictor step, corresponding to 128 function evaluations. We constructed 25,000 reference sequences from the held-out LM1B test split using the model's BERT tokenizer and 128-token packing scheme, while retaining the same special-token formatting in both corpora. We extracted GPT-2-large features in FP32 from the final hidden state at the last non-padding token, using a maximum sequence length of 256 tokens.

We computed $\mathrm{MAUVE}$ on nested subsets of $n\in{500,1000,2000,5000,10000,25000}$ samples per corpus. We varied the number of clusters over $k\in{50,100,200,500,1000,2500}$ and additionally considered the automatic setting $k=n/10$, omitting fixed-$k$ configurations for which $n<2k$. We report means and standard deviations over three clustering seeds, using five $k$-means restarts and at most 500 iterations per restart.

\subsubsection{Comparing MAUVE with SOL}\label{sec:Comparing MAUVE with SOL}
We recap the different drawbacks with the use of the MAUVE score. Throughout the paper, we have seen the affects of different design choices on MAUVE. 

\paragraph{Cluster stability.} MAUVE appears to be heavily dependent on the number of clusters chosen.
As shown in Figure~\ref{fig:MAUVE scaling}, keeping a fixed ratio of clusters to samples decreases the MAUVE score as the sample size increases. 
In contrast when comparing true data in Figure \ref{fig:MAUVE scaling lm1b}, keeping the number of clusters fixed while increasing the number of samples causes the score to converge to 1. This raises the question of which score to report and makes it hard to compare MAUVE scores. SOL, however, converges as we increase the number of samples (GPT-2 Large: Figure~\ref{fig:sample-size}, Dream: Figure~\ref{fig:sample-size-Dream}), the number of slices (GPT-2 Large: Figure~\ref{fig:k-l-stabiltiy}, Dream: Figure~\ref{fig:k-l-stabiltiy-Dream}), or the number of integration points (Figure~\ref{fig:M-stabiltiy}). SOL is therefore numerically much more stable.
\begin{wrapfigure}[13]{r}{0.55\linewidth}
\vspace{-0.2cm}
    \centering
    \includegraphics[width=\linewidth]{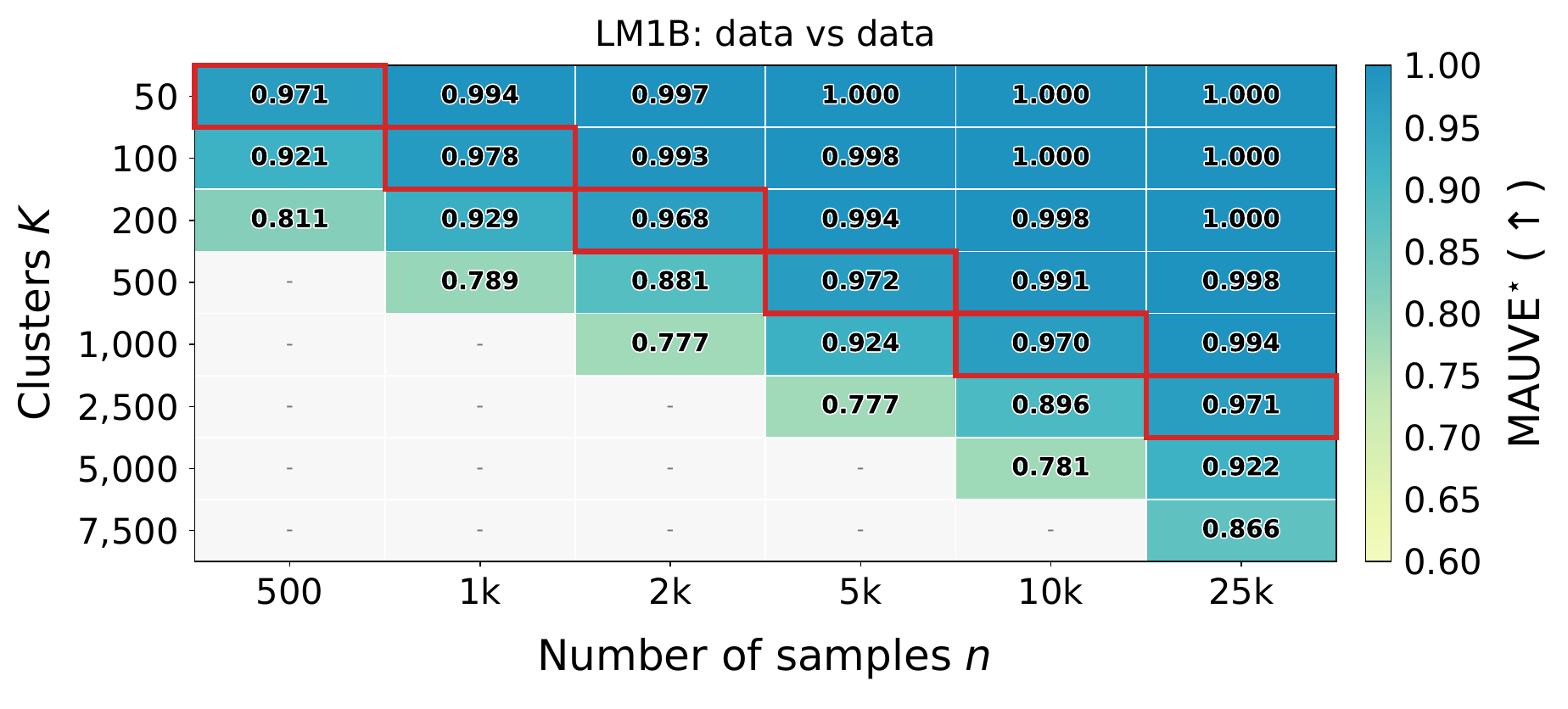}
    \vspace{-8mm}
    \caption{\small{For two disjoint LM1B corpora, the reported MAUVE score depends on the number of evaluation samples $n$ and clusters $K$. In comparison to generated vs true data the ratio $n/10=K$ is stable, but increasing $n$ converges to 1.}}
    \label{fig:MAUVE scaling lm1b}
    \vspace{-2mm}
\end{wrapfigure}
In particular, inexpensive SOL estimates based on only a few samples and slices already provide relatively reliable approximations to the full evaluation, making them suitable for monitoring during training.

\paragraph{Effects of the hidden representation.} MAUVE with the 
(standard) \textit{last-token hidden representation} has a strong bias towards the end of a sequence. As shown in Figure~\ref{fig:MAUVE scaling}, replacing the beginning with random tokens has little effect on the score. This is also visible in Section~\ref{sec:MAUVE depends on last hidden statics} (Diffusion Language Model comparison), where the collapse of the last-token entropy leads to a significantly worse MAUVE score, even when the middle remains comparable. With a \textit{mean-pooled representation}, repeating tokens 10\% of the time produces a MAUVE score of almost 0, indicating a very poor match, see Table~\ref{tab:MAUVE_cls_shuffle_mean}. For the last-token hidden representation, the same test yields a MAUVE score of 0 at repetition rates above $50\%$ (Table~\ref{tab:MAUVE_corruption}). In the same experiment, SOL shows an almost linear degradation (Figure~\ref{fig:text-embedding}).

\paragraph{Human evaluation.}
We use the human evaluation experiment from \cite{pillutla2021MAUVE}, see Section~\ref{sec:Human Judges correlation}. At sequence lengths of 256 and 1024 tokens (Tables~\ref{tab:encoder_human_corr_256} and~\ref{tab:encoder_human_corr_1024}), SOL outperforms MAUVE for almost all embedding models. This indicates that SOL correlates better with human judgments than MAUVE.

 \subsection{OpenWebText complete Model Table} \label{sec: OWT comparision}

We denote discrete-diffusion sampling steps by $T$ and continuous
  predictor steps by $S$. NFE counts actual neural-network evaluations,
  including calls for correction, initialization, final decoding,
  and guidance. Consequently, $S$ and $T$ need not equal NFE.
  The SOL parameters $L_g$, $L_\xi$, and $M$ specify metric estimation
  and are independent of generation cost.

\subsubsection{MAUVE Depends on Last Hidden Tokens}  \label{sec:MAUVE depends on last hidden statics}
\paragraph{Duo.} Comparing MAUVE and SOL on OWT reveals a substantial disagreement for DUO's greedy-tail sampler. To investigate, we
  measure token entropy across documents at a selected position:
  \[
  H_{\mathrm{pos}}(t)
  = -\sum_{v\in\mathcal{V}} p_t(v)\log p_t(v),
  \qquad
  p_t(v)
  = \frac{1}{N}\sum_{i=1}^{N}
  \mathbf{1}\!\left\{x_i^{(t)}=v\right\},
  \]
  where $N=5000$ and
  $x_i^{(t)}$ denotes the token at the selected position in document $i$.
  For the endpoint measurement, we select each document's last valid token.
  This differs from the main
  table's entropy, which averages token-frequency entropy computed within
  each document. 
    \begin{wraptable}{r}{0.35\textwidth}
    \centering
    \small
    \setlength{\tabcolsep}{4pt}
    \begin{tabular}{@{}lrr@{}}
      \toprule
      Model & Middle & Final \\
      \midrule
      Real OWT      & 6.578 & 6.572 \\
      DUO ancestral & 6.455 & 6.441 \\
      DUO greedy    & 6.317 & \textbf{5.011} \\
      \bottomrule
    \end{tabular}
    \caption{Across-document token entropy
      $H_{\mathrm{pos}}$ in nats ($N=5000$),
      measured at position 512 and the last valid token.}
    \label{tab:owt-positional-entropy}
  \end{wraptable}
  Greedy DUO's positional entropy falls from $6.32$ nats
  at the middle position to $5.01$ at the endpoint, whereas real OWT
  remains near $6.58$ at both positions, see Table \ref{tab:owt-positional-entropy}.

  \begin{wraptable}{r}{0.52\textwidth}
    \centering
    \small
    \setlength{\tabcolsep}{3pt}
    \begin{tabular}{@{}lrrrr@{}}
      \toprule
      & \multicolumn{2}{c}{MAUVE $(\uparrow)$}
      & \multicolumn{2}{c}{SOL $(\downarrow)$} \\
      \cmidrule(lr){2-3}\cmidrule(lr){4-5}
      Model & Base & Reweighted & Base & Reweighted \\
      \midrule
      DUO ancestral & 0.836 & 0.843 & 1.670 & 1.7418 \\
      DUO greedy    & 0.492 & 0.591 & 1.748 & 1.8538 \\
      \bottomrule
    \end{tabular}
    \caption{Final-token reweighting on matched
      corrected inputs ($N=5000$). GPT-2-large SOL
      values average nine projection-seed combinations.}
    \label{tab:owt-endpoint-reweight}
  \end{wraptable}

To test whether the final-token distribution drives MAUVE, we reweight the generated documents by their last token. Overrepresented final tokens get less weight, underrepresented ones get more. This makes the weighted final-token frequencies match real OWT without changing the texts.
After this correction, MAUVE rises from $0.492$ to $0.591$ for greedy DUO, but only from $0.836$ to $0.843$ for ancestral DUO. SOL changes by only
$0.29\%$ and $0.07\%$, respectively. This shows that MAUVE is much more sensitive to the final-token distribution than SOL.

For the implementation of our reweighting, we group documents by their final token. We treat each final-token ID that appears at least five times in both corpora as a separate category. All remaining tokens are pooled into an ``other'' category.
Each generated document $i$ gets weight
$w_i~=~p_{c(i)}/q_{c(i)}$,
where $p_c$ and $q_c$ are the final-token frequencies in the reference and generated corpora. We keep all features and cluster assignments fixed. We then recompute MAUVE with weighted generated-cluster counts and the original reference histogram. This changes only the document weights, not the texts.

After matching the final-token frequencies to real OWT, MAUVE improves from $0.492$ to $0.591$ for greedy DUO. For ancestral DUO, it changes only from $0.836$ to $0.843$, see Table \ref{tab:owt-endpoint-reweight}.
This shows that MAUVE's positional bias toward the last token directly affects its evaluation of generative models. SOL is much less affected because it uses hidden states from all token positions.
\paragraph{ELF.}
\begin{wraptable}{r}{0.49\textwidth}
    \centering
    \small
    \setlength{\tabcolsep}{4pt}
    \begin{tabular}{@{}lcc@{}}
      \toprule
      Model & Original & Remove final 8 \\
      \midrule
      ELF-M & $0.455 \pm 0.011$ & $0.823 \pm 0.017$ \\
      ELF-L & $0.287 \pm 0.006$ & $0.798 \pm 0.013$ \\
      \bottomrule
    \end{tabular}
    \caption{MAUVE $(\uparrow)$ for shared ELF sampling.
    Truncation removes eight tokens from both generated and
    reference texts. Mean $\pm$ SD over three clustering seeds.}
    \label{tab:elf-MAUVE-endings}
  \end{wraptable}
  We evaluate ELF-M and ELF-L with shared sampling settings
  (SDE, 64 steps, $\gamma=c=1$), using 5,000 texts per distribution
  and GPT-2-large MAUVE features at a 1,024-token cap.
  We remove the final eight tokens from each generated and reference
  sequence, retaining one token for the three references shorter than
  nine tokens. MAUVE uses the hidden state at the resulting endpoint.
  This reduces the M--L score gap from $0.168$ to $0.024$, while the
  real--real control remains similar ($0.962$ versus $0.965$).

  The tokens at the ends of the texts change much more. For example, the fraction
  of L texts ending with the period token falls from $54.1\%$ to
  $2.7\%$. MAUVE extracts each text's representation at this final
  position. Its large score change therefore demonstrates sensitivity
  to where the text ends, despite almost unchanged token frequencies
  overall, see Table \ref{tab:elf-MAUVE-endings}.

\subsubsection{Precision, Recall, and Coverage}

\paragraph{Setup.}
We complement SOL with the embedding-based precision and recall evaluation of
\citet{bronnec2024exploringprecisionrecallassess} and coverage from
\citet{pmlr-v119-naeem20a}.
We compare MDLM and ReMDM-loop using the same Soft-Masked checkpoint,
1,024-token generations, $T=1024$, final noise removal, and $p=0.9$.
For each of three independent repetitions, we generate 5,000 samples per
sampler and draw disjoint sets of 10,000 calibration and 5,000 reference
examples from held-out OpenWebText. Within each repetition, both samplers
are evaluated against the same reference set and under the same embedding
transformation, yielding a paired comparison.

We use GPT-2-large final-layer last-token embeddings, L2 normalization,
unwhitened PCA, and Euclidean distances. Whereas
\citet{bronnec2024exploringprecisionrecallassess} fit PCA jointly to reference and
generated samples and retain 90\% of the variance, we fit PCA exclusively
on the independent real calibration set. We fix the representation to
350 dimensions, corresponding to 89.32--89.52\% explained variance across
the three calibration sets. This avoids adapting the representation to
either sampler.

\paragraph{Metrics.}
For $k=4$, the radius associated with each point is its distance to its
fourth nearest neighbor within the same corpus. Precision is the fraction
of generated samples contained in the union of reference neighborhoods,
whereas recall is the fraction of reference samples contained in the union
of generated neighborhoods. Coverage is the fraction of reference points
whose reference-defined neighborhood contains at least one generated
sample~\citep{pmlr-v119-naeem20a}. Thus recall depends on generated-sample
neighborhood radii, while coverage uses only reference-sample radii.

\begin{wraptable}[10]{r}{0.35\linewidth}
\centering
\small
\setlength{\tabcolsep}{3pt}
\begin{tabular}{lrrr}
\toprule
Rep. & $\Delta P$ & $\Delta R$ & $\Delta C$ \\
\midrule
1 & +2.14 & +6.70 & +3.10 \\
2 & +3.22 & +4.88 & +1.80 \\
3 & +2.10 & +8.32 & +0.84 \\
\midrule
Mean & +2.49 & +6.63 & +1.91 \\
SD   &  0.64 &  1.72 &  1.13 \\
\bottomrule
\end{tabular}
\caption{ReMDM-loop minus MDLM at $k=4$, in percentage points.
SD is across three independent repetitions.}
\label{tab:softmask-prdc}
\end{wraptable}

\paragraph{Results.}
ReMDM-loop improves precision, recall, and coverage in every repetition
(Table~\ref{tab:softmask-prdc}). Averaged across repetitions, precision
increases from 77.48\% to 79.97\%, recall from 62.63\% to 69.26\%, and
coverage from 74.14\% to 76.05\%. Hence the SOL improvement is accompanied
by both higher fidelity and greater coverage under this latent-space
diagnostic. The coverage gain decreases to $+0.25$ percentage points at
$k=10$, indicating some sensitivity to the neighborhood scale.
ReMDM-loop uses approximately 47\% more forward evaluations, so these
results compare the samplers at the same scheduled-step budget rather
than at matched compute.

\subsubsection{Extended Table}\label{app:Extended Table}
The complete OWT table is given in Table \ref{tab:experiment-8d-owt-baselines}.
\paragraph{Increasing the number of steps.}
Increasing the sampling budget generally improves SOL, but the changes in Gen.\ PPL and entropy can be much larger. For LangFlow, increasing NFE from 128 to 1024 improves Dream SOL from $32.488$ to $30.541$ and Gen.\ PPL from $59.0$ to $36.9$, while entropy moves away from the data value, from $5.425$ to $5.257$. Similarly, ELF-B with ODE sampling improves Dream SOL only from $33.677$ to $33.320$ between 128 and 1024 steps, despite Gen.\ PPL falling from $45.4$ to $24.0$ and entropy moving from $5.308$ to $5.090$.

\paragraph{Changing the sampler and guidance.}
Sampler choice substantially affects distributional quality. For Soft-Masked, replacing MDLM sampling with ReMDM-loop reduces Dream SOL from $21.643$ to $18.966$ and Gen.\ PPL from $23.6$ to $16.7$, despite a small increase in the entropy gap to the data. For vMF at 1024 steps, switching from ODE to predictor--corrector sampling improves Dream SOL from $33.832$ to $30.583$ and Gen.\ PPL from $75.4$ to $56.3$, while entropy remains close to the data value ($5.456$ versus $5.432$). Guidance similarly improves ELF-B at 32 SDE steps, reducing Dream SOL from $39.144$ to $37.584$ and Gen.\ PPL from $29.4$ to $24.0$.

\paragraph{Self-conditioning.}
The vMF comparison provides direct evidence for the benefit of self-conditioning. At the same 128-step predictor--corrector setting, the base self-conditioned model improves Dream SOL from $45.900$ to $31.379$ and Gen.\ PPL from $141.9$ to $45.1$. This improvement is not explained by entropy matching, which changes from $5.533$ to $5.315$ and therefore moves slightly farther from the data entropy. The distilled ELF* model still exhibits a gap to ELF-B, although this comparison does not isolate self-conditioning from other model and training differences.

\paragraph{Comparison with autoregressive generation.}
Soft-Masked with ReMDM-loop improves over the AR baseline on all three SOL encoders, from $(1.064,7.450,22.207)$ to $(0.6101,3.526,18.966)$. It also has lower Gen.\ PPL ($16.7$ versus $34.9$) and entropy closer to the data ($5.341$ versus $5.582$). LDDM-U, however, illustrates that these criteria need not agree: it has both lower Gen.\ PPL ($28.2$) and entropy closer to the data ($5.409$), but improves over AR only under GPT-2-large and OLMo, while Dream SOL is higher ($24.209$ versus $22.207$). Among the evaluated continuous models, self-conditioned vMF with predictor--corrector sampling achieves the lowest SOL, but remains above AR on all three encoders.

\paragraph{Few-step language generation.}
Within FMLM*, additional CTM steps substantially improve generation while keeping entropy close to that of the data. At $CFG=5$, increasing from one to five steps lowers Gen.\ PPL from $116.0$ to $70.8$ and Dream SOL from $46.273$ to $40.177$, with entropy remaining near the reference value ($5.368$ and $5.415$, versus $5.449$). Further computation gives diminishing returns: increasing flow-map sampling from 8 to 128 steps changes Gen.\ PPL only from $82.8$ to $79.0$ and leaves Dream SOL essentially unchanged ($38.233$ versus $38.482$).

\paragraph{Perplexity, entropy, and distributional quality can disagree.}
Gen.\ PPL and entropy should be interpreted jointly: Gen.\ PPL should be small, while entropy should remain close to that of the data. Even together, however, they do not determine distributional quality. For distilled DUO, ancestral and greedy-tail sampling have almost identical entropy deviations from the data ($|5.518-5.449|=0.069$ and $|5.378-5.449|=0.071$). Greedy-tail sampling nevertheless lowers Gen.\ PPL substantially, from $62.3$ to $46.5$, while worsening SOL on all three encoders, including Dream from $30.595$ to $31.802$. More strongly, exact MDLM has both lower Gen.\ PPL than VP with ODE sampling ($59.4$ versus $66.1$) and entropy closer to the data ($5.445$ versus $5.435$), yet has worse SOL on all three encoders: $(2.185,14.119,36.394)$ versus $(1.775,11.997,33.275)$. Thus, even favorable Gen.\ PPL and entropy do not guarantee a better match to the text distribution.

\subsection{Additional Sampler}
 \label{subsec:gpt2xl-samplers}

   We investigate how sampling strategies affect GPT-2-XL's agreement with
  the WebText distribution. We compare ancestral sampling, which draws
  directly from the model's next-token distribution, and nucleus sampling
  with \(p=0.95\)~\citep{Holtzman2020The}, which restricts sampling to the
  smallest set of highest-probability tokens with cumulative probability
  at least \(p\), renormalizing their probabilities. We also evaluate
  greedy decoding, which deterministically selects the most likely token,
  in the conditional setting.

  \paragraph{Conditional generation.}
  We use \(5{,}000\) WebText documents, providing the first \(35\) GPT-2
  tokens of each document as context. Generation stops at the first
  \(\langle\mathrm{EOT}\rangle\) token or when the total sequence length,
  including the prefix, reaches \(256\) tokens. All three samplers receive
  identical prefixes and are evaluated against WebText at the same horizon.

  \begin{table}[h]
    \centering
    \small
    \begin{tabular}{lcc}
      \toprule
      Sampler & MAUVE (\(\uparrow\)) & SOL (\(\downarrow\)) \\
      \midrule
      Greedy
      & \(0.0697 \pm 0.0012\)
      & \(3.7755 \pm 0.1190\) \\
      Ancestral
      & \(0.9407 \pm 0.0028\)
      & \(0.5083 \pm 0.0113\) \\
      Nucleus (\(p=0.95\))
      & \(\mathbf{0.9607 \pm 0.0017}\)
      & \(\mathbf{0.2712 \pm 0.0065}\) \\
      \bottomrule
    \end{tabular}
    \caption{GPT-2-XL conditional completion on \(5{,}000\) WebText
    documents, using shared \(35\)-token prefixes and a maximum total
    sequence length of \(256\) tokens. SOL uses GPT-2-large.}
    \label{tab:gpt2xl-conditional-decoding}
  \end{table}

  Both metrics favor nucleus sampling
  (Table~\ref{tab:gpt2xl-conditional-decoding}):
  SOL decreases from \(0.5083\) to \(0.2712\), while MAUVE increases
  from \(0.9407\) to \(0.9607\). Thus, nucleus truncation improves
  agreement with WebText in this conditional setting.
  Greedy decoding performs substantially worse than both stochastic
  samplers, with MAUVE of \(0.0697\) and SOL of \(3.7755\).

  \paragraph{Unconditional generation.}
  We initialize GPT-2-XL with \(\langle\mathrm{EOT}\rangle\) and generate
  until the first subsequently generated \(\langle\mathrm{EOT}\rangle\)
  token or a maximum of \(256\) generated tokens. With this fixed initial
  context, greedy decoding always produces the same continuation,
  yielding a fully collapsed generated distribution.

  \begin{table}[h]
    \centering
    \small
    \begin{tabular}{lcc}
      \toprule
      Sampler & MAUVE (\(\uparrow\)) & SOL (\(\downarrow\)) \\
      \midrule
      Ancestral
      & \(\mathbf{0.8832 \pm 0.0071}\)
      & \(\mathbf{0.8358 \pm 0.0245}\) \\
      Nucleus (\(p=0.95\))
      & \(0.8740 \pm 0.0073\)
      & \(0.9072 \pm 0.0279\) \\
      \bottomrule
    \end{tabular}
    \caption{GPT-2-XL unconditional generation evaluated against
    \(5{,}000\) WebText documents. Each of the \(5{,}000\) samples
    starts from the end-of-text token and stops at EOS or \(256\)
    generated tokens, the initial seed token is excluded from evaluation.
    SOL uses GPT-2-large.}
    \label{tab:gpt2xl-unconditional-decoding}
  \end{table}

  The ordering reverses in the unconditional setting
  (Table~\ref{tab:gpt2xl-unconditional-decoding}).
  Ancestral sampling achieves lower SOL (\(0.8358\) versus \(0.9072\))
  and higher MAUVE (\(0.8832\) versus \(0.8740\)) than nucleus sampling.

  Overall, SOL and MAUVE agree on the sampling-strategy ordering in both
  settings. Nucleus sampling improves conditional completion, whereas
  ancestral sampling achieves better unconditional agreement with WebText
  in this experiment.

%%%%%%%%%%%%%%%%%%%%%%%%%%%%%%%%%%%%%%%%%%%%%%%%%%%%%%%%%%%%%%%%%%%%%%%%%%%%%%%%%%%%%

\section{Implementation Details}
\label{sec:implementation}

\subsection{Numerical implementation of SOL}
\label{sec:dsw-numerical}

Let $\mathcal X=\{\mathbf x_i\}_{i=1}^{n_x}$ and
$\mathcal Y=\{\mathbf y_j\}_{j=1}^{n_y}$ denote two corpora with empirical
text distributions
\begin{align}
\widehat p_{\mathcal X}
=
\frac{1}{n_x}\sum_{i=1}^{n_x}\delta_{\mathbf x_i},
\qquad
\widehat p_{\mathcal Y}
=
\frac{1}{n_y}\sum_{j=1}^{n_y}\delta_{\mathbf y_j}.
\end{align}
Following Section~\ref{subsec: our method}, every sequence is mapped by a
frozen reference transformer $\mathbf H^\theta$ to its token cloud
$T^\theta(\mathbf x)\in\mathcal P_2(\mathbb R^d)$.

We write $\widetilde h^\theta_{\mathbf x,r}=h^\theta_{\mathbf x_{\leq r}}$
for a causal reference transformer and
$\widetilde h^\theta_{\mathbf x,r}=h^{\theta,\mathrm{bi}}_{\mathbf x,r}$
for a bidirectional reference transformer. We retain all non-padding token
positions up to the encoder horizon $T$ and denote their number by $t_x$.

The exact distance
\begin{align}
\operatorname{SOL}_\theta^2
(\widehat p_{\mathcal X},\widehat p_{\mathcal Y})
=
d \operatorname{DSW}^2
\left(
T^\theta_\#\widehat p_{\mathcal X},
T^\theta_\#\widehat p_{\mathcal Y}
\mid\gamma
\right)
\end{align}
is approximated using $L_\xi$ first-level directions, $L_g$ second-level
directions, and resolution $M$ for the second-level slicing functions.
The same sampled directions are used for both corpora.

\paragraph{First slicing.}
We sample $a_\ell\sim\mathcal N(0,I_d)$ and set
$\xi_\ell=a_\ell/\|a_\ell\|$ for $\ell=1,\ldots,L_\xi$, such that
$\xi_\ell\sim\mathcal U(\mathbb S^{d-1})$. For each sequence $\mathbf x$,
\begin{align}
\mathbf e_{\xi_\ell}T^\theta(\mathbf x)
&=
\frac{1}{t_x}
\sum_{r=1}^{t_x}\delta_{v_{x,\ell,r}},
\qquad
v_{x,\ell,r}
=
\left\langle
\widetilde h^\theta_{\mathbf x,r},
\xi_\ell
\right\rangle .
\label{eq:dsw-first-slice}
\end{align}
Let $v_{x,\ell,(1)}\leq\cdots\leq v_{x,\ell,(t_x)}$ denote the corresponding
order statistics.

\paragraph{Bin-averaged empirical quantiles.}
We partition $[0,1]$ into
$I_m=((m-1)/M,m/M]$ with midpoint $s_m=(m-\frac12)/M$ for
$m=1,\ldots,M$. For every projected token cloud, we compute the exact
average of its empirical inverse-CDF quantile within each bin,
\begin{align}
\bar q_{x,\ell,m}
&=
M
\int_{I_m}
Q_{\mathbf e_{\xi_\ell}T^\theta(\mathbf x)}(s)
\,\mathrm ds
\label{eq:dsw-bin-quantiles} =
M
\sum_{r=1}^{t_x}
v_{x,\ell,(r)}
\left|
I_m
\cap
\left(
\frac{r-1}{t_x},
\frac{r}{t_x}
\right]
\right|.
\end{align}
The bin averages are computed efficiently from the sorted projected values
using cumulative sums. This construction preserves the mean exactly,
since
\begin{align}
\frac{1}{M}\sum_{m=1}^M\bar q_{x,\ell,m}
=
\frac{1}{t_x}\sum_{r=1}^{t_x}v_{x,\ell,r}.
\label{eq:dsw-bin-mean}
\end{align}

\paragraph{Second slicing.}
We draw the second-level slicing functions from an RBF Gaussian process with
covariance $k_\lambda(s,t)=\exp(-(s-t)^2/(2\lambda^2))$. On the midpoint
grid, we define $C_{ab}=k_\lambda(s_a,s_b)$ for $a,b=1,\ldots,M$ and compute
$RR^\top=C+\varepsilon I$ by Cholesky factorization, with a small diagonal
jitter $\varepsilon$ for numerical stability.

For $\tilde\ell=1,\ldots,L_g$, we sample
$\omega_{\tilde\ell}\sim\mathcal N(0,I_M)$ and set
$g_{\tilde\ell}=R\omega_{\tilde\ell}$. Thus $g_{\tilde\ell}$ follows the
finite-dimensional Gaussian-process distribution on the grid. We interpret
it as the piecewise-constant function
$g_{\tilde\ell}^{(M)}(s)=g_{\tilde\ell,m}$ for $s\in I_m$.

The $L_g$ directions are sampled independently of
$\xi_1,\ldots,\xi_{L_\xi}$ and reused for all first-level directions. Hence,
we evaluate all $L_\xi L_g$ pairs $(\xi_\ell,g_{\tilde\ell})$.

For every sequence and pair of directions, we compute
\begin{align}
z_{x,\ell,\tilde\ell}
&=
\widehat M_{\xi_\ell,g_{\tilde\ell}}^{(M)}
\bigl(T^\theta(\mathbf x)\bigr)
=
\frac{1}{M}
\sum_{m=1}^M
\bar q_{x,\ell,m}g_{\tilde\ell,m}
\label{eq:dsw-document-scalars}=
\int_0^1
Q_{\mathbf e_{\xi_\ell}T^\theta(\mathbf x)}(s)
g_{\tilde\ell}^{(M)}(s)
\,\mathrm ds.
\end{align}
Thus the empirical quantile is integrated exactly against the
piecewise-constant second-level slicing function.

\paragraph{One-dimensional Wasserstein distance.}
For fixed $(\ell,\tilde\ell)$, define
\begin{align}
\widehat\mu_{\ell,\tilde\ell}^{\mathcal X}
&=
\frac{1}{n_x}
\sum_{i=1}^{n_x}
\delta_{z_{x_i,\ell,\tilde\ell}},
\qquad
\widehat\mu_{\ell,\tilde\ell}^{\mathcal Y}
=
\frac{1}{n_y}
\sum_{j=1}^{n_y}
\delta_{z_{y_j,\ell,\tilde\ell}}.
\end{align}
Let $z^{\mathcal X}_{(i),\ell,\tilde\ell}$ and
$z^{\mathcal Y}_{(j),\ell,\tilde\ell}$ denote their order statistics.
If $n_x=n_y=n$, then
\begin{align}
\operatorname W^2
\left(
\widehat\mu_{\ell,\tilde\ell}^{\mathcal X},
\widehat\mu_{\ell,\tilde\ell}^{\mathcal Y}
\right)
=
\frac{1}{n}
\sum_{i=1}^n
\left(
z^{\mathcal X}_{(i),\ell,\tilde\ell}
-
z^{\mathcal Y}_{(i),\ell,\tilde\ell}
\right)^2.
\label{eq:dsw-equal-size-wasserstein}
\end{align}
All main experiments use equal numbers of reference and generated samples.

For completeness, the implementation also supports unequal corpus sizes.
Let $N=\max(n_x,n_y)$ and $r_a=(a-\frac12)/N$ for $a=1,\ldots,N$. We use
\begin{align}
\widehat{\operatorname W}^{\,2}_{N}
=
\frac{1}{N}
\sum_{a=1}^{N}
\left(
z^{\mathcal X}_{(\lceil n_xr_a\rceil),\ell,\tilde\ell}
-
z^{\mathcal Y}_{(\lceil n_yr_a\rceil),\ell,\tilde\ell}
\right)^2,
\label{eq:dsw-unequal-size-wasserstein}
\end{align}
which reduces to \eqref{eq:dsw-equal-size-wasserstein} when $n_x=n_y$.

\paragraph{Monte Carlo estimator.}
Combining the two slicing levels gives
\begin{align}
\widehat{\operatorname{SOL}}_{\theta,L_\xi,L_g,M}^{\,2}
(\mathcal X,\mathcal Y)
=
\frac{\alpha}{L_\xi L_g}
\sum_{\ell=1}^{L_\xi}
\sum_{\tilde\ell=1}^{L_g}
\widehat{\operatorname W}^{\,2}_{N}
\left(
\widehat\mu_{\ell,\tilde\ell}^{\mathcal X},
\widehat\mu_{\ell,\tilde\ell}^{\mathcal Y}
\right),
\label{eq:dsw-numerical-score}
\end{align}
where $\alpha>0$ is a fixed scaling factor. We use $\alpha=d$, which
compensates for
\begin{align}
\mathbb E_{\xi\sim\mathcal U(\mathbb S^{d-1})}
\left[
\langle v,\xi\rangle^2
\right]
=
\frac{\|v\|^2}{d}.
\end{align}
Since $\alpha$ is fixed for a given reference model, it does not affect the
ranking of generated corpora.

\begin{algorithm}[t]
\caption{Numerical approximation of SOL}
\label{alg:dsw-numerical}
\begin{algorithmic}[1]
\Require Corpora $\mathcal X,\mathcal Y$, frozen reference transformer
$\mathbf H^\theta$, horizon $T$
\Require Number of slices $L_\xi,L_g$, resolution $M$, GP length scale
$\lambda$, scale $\alpha$
\Ensure $\widehat{\operatorname{SOL}}_{\theta,L_\xi,L_g,M}^{\,2}
(\mathcal X,\mathcal Y)$
\State Construct bins $I_1,\ldots,I_M$ and midpoints $s_1,\ldots,s_M$
\State Sample $\xi_1,\ldots,\xi_{L_\xi}\sim\mathcal U(\mathbb S^{d-1})$
\State Sample RBF-GP directions $g_1,\ldots,g_{L_g}$ on
$\{s_m\}_{m=1}^M$
\ForAll{$\mathbf x\in\mathcal X\cup\mathcal Y$}
    \State Compute the retained hidden states of $\mathbf H^\theta(\mathbf x)$
    up to horizon $T$
    \For{$\ell=1,\ldots,L_\xi$}
        \State Project and sort the token states using
        \eqref{eq:dsw-first-slice}
        \State Compute $\bar q_{x,\ell,1:M}$ using
        \eqref{eq:dsw-bin-quantiles}
        \For{$\tilde\ell=1,\ldots,L_g$}
            \State Compute $z_{x,\ell,\tilde\ell}$ using
            \eqref{eq:dsw-document-scalars}
        \EndFor
    \EndFor
\EndFor
\For{$\ell=1,\ldots,L_\xi$ and $\tilde\ell=1,\ldots,L_g$}
    \State Sort the document scalars in $\mathcal X$ and $\mathcal Y$
    \State Compute the one-dimensional Wasserstein term
\EndFor
\State \Return \eqref{eq:dsw-numerical-score}
\end{algorithmic}
\end{algorithm}

\paragraph{Precision and memory.}
The reference transformer is evaluated in batches.
Hidden states are cast to FP32 before projection.
Token projections, bin-average computation, second-level projections,
sorting, and Wasserstein reductions are performed in FP32.
The Gaussian-process covariance and Cholesky factorization are computed
in FP64, after which the sampled GP values are cast to FP32.

For each first-level projection seed, the bin-average tensors of shape
$n\times L_\xi\times M$ are stored as FP32 memory-mapped arrays.
The second-level computation is performed in blocks of first-level
directions and reuses the reference tensors across generated corpora.
Thus neither all $nL_\xi L_g$ document scalars nor a pairwise document
cost matrix need to be stored simultaneously.
Changing only the GP seed requires no additional transformer evaluation.

\begin{table*}[h]
    \centering
    \small
    \setlength{\tabcolsep}{4pt}
    \renewcommand{\arraystretch}{0.94}
    \resizebox{\textwidth}{!}{%
    \begin{tabular}{@{}llrrrrr@{}}
      \toprule
      & & & & \multicolumn{3}{c}{SOL $(\downarrow)$} \\
      \cmidrule(lr){5-7}
      Model & Setting & $H@1024$ & Gen. ppl. & GPT-2-large & OLMo-2-1B & Dream-7B \\
      \midrule

      \emph{Real OWT}\textsuperscript{G}
      & --- & 5.449 & 14.9
      & $0.2002 \pm 0.0116$
      & $0.7800 \pm 0.0552$
      & $2.066 \pm 0.109$ \\

      \rowcolor{gray!10}
      \emph{Real ELF OWT}\textsuperscript{E}
      & --- & 5.402 & 17.0
      & $0.2457 \pm 0.0239$
      & $0.8857 \pm 0.0636$
      & $2.649 \pm 0.219$ \\

      \midrule
      \addlinespace[2pt]
      \multicolumn{7}{l}{\textit{Autoregressive model} \cite{sahoo2024simple}} \\

      AR\textsuperscript{G}
      & Temperature 1.0 & 5.582 & 34.9
      & $1.064 \pm 0.008$
      & $7.450 \pm 0.132$
      & $22.207 \pm 0.099$ \\

      \midrule
      \multicolumn{7}{l}{\textit{Discrete diffusion}} \\
      \addlinespace[2pt]

      \multicolumn{7}{l}{\textit{DUO} \cite{sahoo2025the}} \\

      DUO-distilled, $T=32$\textsuperscript{G}
      & Ancestral & 5.518 & 62.3
      & $1.670 \pm 0.019$
      & $11.063 \pm 0.228$
      & $30.595 \pm 0.082$ \\

      DUO-distilled, $T=32$\textsuperscript{G}
      & Greedy tail & 5.378 & 46.5
      & $1.748 \pm 0.024$
      & $11.729 \pm 0.253$
      & $31.802 \pm 0.242$ \\

      DUO-base\textsuperscript{G}
      & Ancestral, $T=1024$ & 5.537 & 76.9
      & $1.948 \pm 0.024$
      & $12.730 \pm 0.250$
      & $32.642 \pm 0.084$ \\

      \addlinespace[2pt]
      \multicolumn{7}{l}{\textit{MDLM} \cite{sahoo2024simple}} \\

      MDLM, exact sampling\textsuperscript{G}
      & DDPM-cache, $T=1000$ & 5.445 & 59.4
      & $2.185 \pm 0.028$
      & $14.119 \pm 0.246$
      & $36.394 \pm 0.086$ \\

      MDLM, as released\textsuperscript{G}
      & FP32 Gumbel, $T=1000$ & 5.292 & 42.8
      & $2.340 \pm 0.035$
      & $14.275 \pm 0.238$
      & $38.444 \pm 0.112$ \\

      \addlinespace[2pt]
      \multicolumn{7}{l}{\textit{LDDM} \cite{jo2026loopholing}} \\

      LDDM-M\textsuperscript{G}
      & Loopholing, $T=1024$ & 5.536 & 48.2
      & $1.082 \pm 0.011$
      & $8.266 \pm 0.160$
      & $24.601 \pm 0.038$ \\

      LDDM-U\textsuperscript{G}
      & Loopholing, $T=1024$ & 5.409 & 28.2
      & $1.009 \pm 0.034$
      & $6.851 \pm 0.168$
      & $24.209 \pm 0.095$ \\

      \addlinespace[2pt]
      \multicolumn{7}{l}{\textit{Soft-Masked} \cite{hersche2026softmasked}} \\

      Soft-Masked\textsuperscript{G}
      & MDLM, $T=1024$, $p=0.9$ & 5.364 & 23.6
      & $0.7415 \pm 0.0090$
      & $5.390 \pm 0.110$
      & $21.643 \pm 0.103$ \\

      Soft-Masked\textsuperscript{G}
      & ReMDM-loop, $T=1024$, $p=0.9$ & 5.341 & 16.7
      & $\mathbf{0.6101 \pm 0.0135}$
      & $\mathbf{3.526 \pm 0.066}$
      & $\mathbf{18.966 \pm 0.126}$ \\

      \addlinespace[2pt]
  \multicolumn{7}{l}{\textit{ADLM} \cite{rout2025anchored}} \\

  ADLM-large\textsuperscript{G}
  & ReMDM-loop, $T=1024$, $p=0.9$ & 5.352 & 28.9
  & $1.183 \pm 0.014$
  & $7.534 \pm 0.135$
  & $23.601 \pm 0.088$ \\

  \addlinespace[2pt]
  \multicolumn{7}{l}{\textit{BD3-LM} \cite{arriola2025block}} \\

  BD3-LM, block 4\textsuperscript{B}
  & First-hitting, $T=5000$, $p=0.9$ & 5.336 & 24.4
  & $1.082 \pm 0.023$
  & $8.407 \pm 0.180$
  & $26.070 \pm 0.057$ \\

  BD3-LM, block 16\textsuperscript{B}
  & First-hitting, $T=5000$, $p=0.9$ & 5.344 & 32.9
  & $1.458 \pm 0.027$
  & $10.709 \pm 0.218$
  & $30.315 \pm 0.091$ \\

      \midrule
      \multicolumn{7}{l}{\textit{Continuous diffusion}} \\
      \addlinespace[2pt]

      \multicolumn{7}{l}{\textit{vMF} \cite{CKS2026}} \\

      vMF, self-cond.\textsuperscript{G}
      & PC-warp, $S=128$ & 5.222 & 30.2
      & $1.354 \pm 0.034$
      & $8.607 \pm 0.197$
      & $29.225 \pm 0.089$ \\

      vMF, self-cond. (base)\textsuperscript{G}
      & PC-warp, $S=128$ & 5.315 & 45.1
      & $1.524 \pm 0.022$
      & $10.657 \pm 0.221$
      & $31.379 \pm 0.076$ \\

      vMF, self-cond.\textsuperscript{G}
      & SDE-warp, $S=128$, $\sigma=0.01$, $q=1$ & 5.348 & 43.6
      & $1.386 \pm 0.022$
      & $9.884 \pm 0.196$
      & $30.375 \pm 0.074$ \\

      vMF, no self-cond.\textsuperscript{G}
      & PC-warp, $S=128$ & 5.533 & 141.9
      & $2.928 \pm 0.036$
      & $18.157 \pm 0.343$
      & $45.900 \pm 0.130$ \\

      vMF\textsuperscript{G}
      & ODE-warp, $S=1024$ & 5.456 & 75.4
      & $1.901 \pm 0.024$
      & $12.781 \pm 0.268$
      & $33.832 \pm 0.113$ \\

      vMF\textsuperscript{G}
      & PC-warp, $S=1024$ & 5.432 & 56.3
      & $1.546 \pm 0.021$
      & $10.777 \pm 0.233$
      & $30.583 \pm 0.099$ \\

      \addlinespace[2pt]
      \multicolumn{7}{l}{\textit{VP} \cite{CKS2026}} \\

      VP, self-cond.\textsuperscript{G}
      & SDE-warp, $S=128$, $\sigma=0.1$, $q=1$ & 5.355 & 50.9
      & $1.606 \pm 0.023$
      & $10.898 \pm 0.222$
      & $31.972 \pm 0.072$ \\

      VP\textsuperscript{G}
      & ODE-warp, $S=1024$ & 5.435 & 66.1
      & $1.775 \pm 0.022$
      & $11.997 \pm 0.247$
      & $33.275 \pm 0.076$ \\

      VP\textsuperscript{G}
      & PC-warp, $S=1024$ & 5.430 & 59.0
      & $1.642 \pm 0.022$
      & $11.168 \pm 0.232$
      & $31.664 \pm 0.068$ \\

      \addlinespace[2pt]
      \multicolumn{7}{l}{\textit{LangFlow} \cite{chen2026langflowcontinuousdiffusionrivals}} \\

      LangFlow\textsuperscript{G}
      & NFE$=128$ & 5.425 & 59.0
      & $1.604 \pm 0.018$
      & $11.930 \pm 0.226$
      & $32.488 \pm 0.066$ \\

      LangFlow\textsuperscript{G}
      & NFE$=1024$ & 5.257 & 36.9
      & $1.371 \pm 0.019$
      & $9.886 \pm 0.193$
      & $30.541 \pm 0.071$ \\

    \addlinespace[2pt]
      \rowcolor{gray!10}
      \multicolumn{7}{l}{\textit{ELF} \cite{hu2026elfembeddedlanguageflows}} \\

      \rowcolor{gray!10}
      ELF-B\textsuperscript{E}
      & SDE, 32 steps, CFG$=3$ & 5.156 & 24.0
      & $2.066 \pm 0.055$
      & $12.817 \pm 0.314$
      & $37.584 \pm 0.218$ \\

      \rowcolor{gray!10}
      ELF-B\textsuperscript{E}
      & SDE, 32 steps, CFG$=1$ & 5.219 & 29.4
      & $2.283 \pm 0.057$
      & $13.633 \pm 0.332$
      & $39.144 \pm 0.232$ \\

      \rowcolor{gray!10}
      ELF-B\textsuperscript{E}
      & ODE, 128 steps, CFG$=1$ & 5.308 & 45.4
      & $1.874 \pm 0.029$
      & $13.565 \pm 0.295$
      & $33.677 \pm 0.116$ \\

      \rowcolor{gray!10}
      ELF-B\textsuperscript{E}
      & ODE, 1024 steps, CFG$=1$ & 5.090 & 24.0
      & $1.778 \pm 0.031$
      & $12.357 \pm 0.292$
      & $33.320 \pm 0.188$ \\

      \rowcolor{gray!10}
      ELF-B\textsuperscript{E}
      & SDE, 128 steps, CFG$=1$ & 5.123 & 21.9
      & $1.654 \pm 0.045$
      & $10.980 \pm 0.257$
      & $32.878 \pm 0.260$ \\

      \rowcolor{gray!10}
      ELF-B\textsuperscript{E}
      & SDE, 64 steps, CFG$=0.5$, $\gamma=0.5$ & 5.385 & 56.3
      & $2.181 \pm 0.035$
      & $14.796 \pm 0.318$
      & $35.648 \pm 0.130$ \\

      \rowcolor{gray!10}
      ELF-M\textsuperscript{E}
      & ODE, 64 steps, CFG$=1.25$, $\gamma=0$ & 5.399 & 53.0
      & $1.695 \pm 0.023$
      & $12.805 \pm 0.268$
      & $32.153 \pm 0.099$ \\

      \rowcolor{gray!10}
      ELF-L\textsuperscript{E}
      & SDE, 64 steps, CFG$=1$, $\gamma=0.75$ & 5.398 & 35.7
      & $1.591 \pm 0.045$
      & $10.039 \pm 0.232$
      & $32.013 \pm 0.274$ \\

      \rowcolor{gray!10}
      ELF-B\textsuperscript{E}
      & SDE, 64 steps, CFG$=1$, $\gamma=1$ & 5.216 & 28.5
      & $1.788 \pm 0.043$
      & $11.952 \pm 0.271$
      & $33.689 \pm 0.228$ \\

      \rowcolor{gray!10}
      ELF-M\textsuperscript{E}
      & SDE, 64 steps, CFG$=1$, $\gamma=1$ & 5.309 & 33.4
      & $1.665 \pm 0.033$
      & $11.719 \pm 0.261$
      & $32.671 \pm 0.205$ \\

      \rowcolor{gray!10}
      ELF-L\textsuperscript{E}
      & SDE, 64 steps, CFG$=1$, $\gamma=1$ & 5.378 & 32.9
      & $1.601 \pm 0.048$
      & $9.829 \pm 0.227$
      & $31.969 \pm 0.290$ \\
    \midrule
      \multicolumn{7}{l}{\textit{Few-Step Models}} \\
      \addlinespace[2pt]
      \addlinespace[2pt]
      \multicolumn{7}{l}{\textit{FLM} \cite{lee2026flowmaplanguagemodels}} \\

      FLM\textsuperscript{G}
      & Euler, 1024 steps & 5.345 & 63.1
      & $3.458 \pm 0.068$
      & $20.126 \pm 0.387$
      & $48.387 \pm 0.189$ \\

      FMLM\textsuperscript{G}
      & 1 step & 5.168 & 167.8
      & $6.857 \pm 0.100$
      & $30.806 \pm 0.647$
      & $79.042 \pm 0.658$ \\

      FMLM\textsuperscript{G}
      & 2 steps & 5.295 & 135.1
      & $5.683 \pm 0.085$
      & $27.826 \pm 0.544$
      & $71.134 \pm 0.494$ \\

      FMLM\textsuperscript{G}
      & 4 steps & 5.361 & 111.0
      & $4.743 \pm 0.070$
      & $25.303 \pm 0.492$
      & $63.074 \pm 0.409$ \\

      \addlinespace[2pt]
      \multicolumn{7}{l}{\textit{FMLM*} \cite{yoo2026selfconditionedflowmaplanguage}} \\

     FMLM$^{*}$\textsuperscript{G}
      & CTM, 1 step, CFG$=1$, $\gamma=1$ & 5.319 & 120.3
      & $3.690 \pm 0.054$
      & $18.808 \pm 0.373$
      & $48.859 \pm 0.169$ \\

      FMLM$^{*}$\textsuperscript{G}
      & CTM, 1 step, CFG$=5$, $\gamma=1$ & 5.368 & 116.0
      & $3.431 \pm 0.046$
      & $18.160 \pm 0.382$
      & $46.273 \pm 0.099$ \\

      FMLM$^{*}$\textsuperscript{G}
      & CTM, 2 steps, CFG$=1$, $\gamma=0.75$ & 5.411 & 99.5
      & $2.932 \pm 0.042$
      & $16.491 \pm 0.320$
      & $43.430 \pm 0.195$ \\

      FMLM$^{*}$\textsuperscript{G}
      & CTM, 2 steps, CFG$=5$, $\gamma=0.75$ & 5.458 & 97.4
      & $2.737 \pm 0.035$
      & $15.941 \pm 0.322$
      & $41.120 \pm 0.139$ \\

      FMLM$^{*}$\textsuperscript{G}
      & CTM, 5 steps, CFG$=1$, $\gamma=1$ & 5.294 & 66.5
      & $2.860 \pm 0.054$
      & $15.556 \pm 0.295$
      & $43.731 \pm 0.188$ \\

      FMLM$^{*}$\textsuperscript{G}
      & CTM, 5 steps, CFG$=5$, $\gamma=1$ & 5.415 & 70.8
      & $2.486 \pm 0.042$
      & $14.869 \pm 0.291$
      & $40.177 \pm 0.114$ \\

      \addlinespace[2pt]
      FMLM$^{*}$\textsuperscript{G}
      & Flow map, 8 steps, CFG$=5$, $\gamma=0$ & 5.464 & 82.8
      & $2.370 \pm 0.032$
      & $14.438 \pm 0.295$
      & $38.233 \pm 0.158$ \\

      FMLM$^{*}$\textsuperscript{G}
      & Flow map, 128 steps, CFG$=5$, $\gamma=0$ & 5.456 & 79.0
      & $2.313 \pm 0.032$
      & $14.268 \pm 0.285$
      & $38.482 \pm 0.152$ \\

      \addlinespace[2pt]

      ELF$^{*}$\textsuperscript{G}
      & CDEQ, 128 steps, CFG$=1$, $\gamma=0$ & 5.414 & 73.8
      & $2.222 \pm 0.034$
      & $14.245 \pm 0.267$
      & $37.827 \pm 0.128$ \\

      ELF$^{*}$\textsuperscript{G}
      & CDEQ, 128 steps, CFG$=5$, $\gamma=0$ & 5.384 & 60.8
      & $1.952 \pm 0.029$
      & $13.105 \pm 0.252$
      & $35.950 \pm 0.103$ \\

      \bottomrule
    \end{tabular}%
    }
    \caption{\small
      OpenWebText baselines using 5,000 samples per distribution and a
      1,024-token horizon. The standard SOL hyperparameters are used;
      see Section~\ref{sec:setup}.
      Superscripts G, E and B identify the evaluation reference family.
      SOL is reported as mean $\pm$ population std over
      nine crossed direction/GP seed pairs.
    }
    \label{tab:experiment-8d-owt-baselines}
  \end{table*}
\end{document}